\documentclass[a4paper,fleqn]{cas-sc}
\usepackage[numbers,sort&compress]{natbib}
\usepackage{amssymb}
\usepackage{amsthm} 
\usepackage{amsmath,amsfonts}
\usepackage{algorithmic}
\usepackage{algorithm}
\usepackage{array}
\usepackage[caption=false,font=footnotesize,labelfont=rm,textfont=rm]{subfig}
\usepackage{textcomp}
\usepackage{url}
\usepackage{verbatim}
\usepackage{graphicx}
\usepackage{pifont} 
\usepackage{booktabs}
\usepackage{multirow}
\usepackage{bm}
\usepackage{xcolor}

\def\tsc#1{\csdef{#1}{\textsc{\lowercase{#1}}\xspace}}
\tsc{WGM}
\tsc{QE}
\tsc{EP}
\tsc{PMS}
\tsc{BEC}
\tsc{DE}

\begin{document}
\let\WriteBookmarks\relax
\def\floatpagepagefraction{1}
\def\textpagefraction{.001}

\shorttitle{ }

\shortauthors{Xuelin Xie et~al.}

\title [mode = title]{Spatial-Spectral Adaptive Fidelity and Noise Prior Reduction Guided Hyperspectral Image Denoising}
\tnotemark[1]

\tnotetext[1]{This work was supported in part by the Science and Technology Development Fund, Macao S.A.R. under Grant 0089/2024/AGJ and Grant 0012/2025/RIA1, in part by the University of Macau and the University of Macau Development Foundation under Grant MYRG-GRG2023-00106-FST-UMDF, in part by the National Natural Science Foundation of China under Grant 12371424, Grant U24A2002, and Grant 12561160122, and in part by the Natural Science Foundation of Hubei Province, China under Grant 2024AFE006.}

\author[1,2]{Xuelin Xie}[orcid=0000-0003-1135-4659]
\ead{xl.xie@whu.edu.cn}

\author[1,3,4]{Xiliang Lu}[orcid=0000-0002-7592-5994]
\ead{xllv.math@whu.edu.cn}

\author[2]{Zhengshan Wang}[orcid=0000-0001-7371-7465]
\ead{yc27436@umac.mo}

\author[2]{Yang Zhang}[orcid=0009-0008-1692-7579]
\ead{yc47958@um.edu.mo}

\author[2]{Long Chen}[orcid=0000-0003-0184-5446]
\ead{longchen@um.edu.mo}
\cormark[1]

\affiliation[1]{organization={School of Mathematics and Statistics, Wuhan University},
                city={Wuhan},
                postcode={430072}, 
                country={China}}
\affiliation[2]{organization={Department of Computer and Information Science, Faculty of Science and Technology, University of Macau},
                postcode={999078},
                city={Macau},
                country={China}}
\affiliation[3]{organization={Hubei Center for Applied Mathematics, Wuhan University},
                city={Wuhan},
                postcode={430072}, 
                country={China}}
\affiliation[4]{organization={Hubei Key Laboratory of Computational Science, Wuhan University},
                city={Wuhan},
                postcode={430072}, 
                country={China}}

\cortext[cor1]{Corresponding author}

\begin{abstract}
The core challenge of hyperspectral image (HSI) denoising is striking the right balance between data fidelity and noise prior modeling. Most existing methods place too much emphasis on the intrinsic priors of the image while overlooking diverse noise assumptions and the dynamic trade-off between fidelity and priors. To address these issues, we propose a denoising framework that integrates noise prior reduction and a spatial–spectral adaptive fidelity term. This framework considers comprehensive noise priors with fewer parameters and introduces an adaptive weight tensor to dynamically balance the fidelity and prior regularization terms. Within this framework, we further develop a fast and robust pixel-wise model combined with the representative coefficient total variation regularizer to accurately remove mixed noise in HSIs. The proposed method not only efficiently handles various types of noise but also accurately captures the spectral low-rank structure and local smoothness of HSIs. An efficient optimization algorithm based on the alternating direction method of multipliers is designed to ensure stable and fast convergence. Extensive experiments on simulated and real-world datasets demonstrate that the proposed model achieves superior denoising performance while maintaining competitive computational efficiency. The source code for the proposed FRHD method is publicly available at \url{https://github.com/xuelin-xie/FRHD}.
\end{abstract}

\begin{keywords}
Image restoration \sep Noise prior reduction\sep Adaptive pixel-wise weighting\sep Subspace low-rank\sep Total variation
\end{keywords}

\maketitle
\section{Introduction}
\subsection{Background}
Hyperspectral images (HSIs) capture the spatial and spectral information of observed scenes, thereby enabling accurate characterization of their physical and chemical properties through rich spectral signatures~\cite{Zhang2022Cooperated}. This distinctive capability has positioned Hyperspectral imaging as an indispensable technology across diverse fields, including remote sensing \cite{rasti2021image,li2025deep}, medical diagnostics \cite{dremin2021skin,wang2023deep}, face recognition \cite{zhang2021hyperspectral}, and vegetation analysis \cite{okujeni2021multi}. Nevertheless, the HSI acquisition process is inherently affected by uncertainties stemming from sensor limitations, photon noise, illumination fluctuations, and atmospheric interference, which introduce severe and varied noise contamination \cite{peng2021low}. Such contamination severely degrades data integrity and complicates subsequent analysis, rendering effective denoising and high-fidelity data recovery critical challenges in HSI processing.

In recent years, a variety of advanced algorithms for HSI denoising have been proposed, driving substantial progress \cite{He2022NGmeet, Chen2024flex}. These methods are primarily categorized into three paradigms: model-based, data-driven, and hybrid approaches that combine both model-based and data-driven strategies. As the current mainstream approach, data-driven methods adaptively optimize trainable parameters based on external data and available labels within a deep learning framework \cite{zhang2024hyperspectral}. Leveraging their strong self-learning capabilities, various deep learning approaches have been successfully applied to tasks such as denoising, restoration, reconstruction, and high-resolution enhancement of HSIs. 

Data-driven architectures such as convolutional neural networks (CNNs) \cite{Maffei2020, zhuang2024eigen}, attention mechanisms \cite{zhangq2024}, and Transformers \cite{li2023spectral} have established new paradigms for HSI denoising. For instance, HSI-DeNet \cite{chang2018hsi}, a CNN-based approach, exploits spectral coherence by fusing adjacent spectral bands via multi-channel 2D convolutions. Building upon attention mechanisms, the 3D Attention Denoising Network \cite{shi2021hyperspectral} models global spatial–spectral dependencies, while Transformer-based frameworks leverage long-range correlations to complement the local modeling capacity of CNNs \cite{li2023spectral}. Moreover, to incorporate physical constraints and extract intrinsic deep priors, \cite{xiong2023deep} integrates multiple image priors into CNN architectures. Recent advances that combine deep learning with model-driven strategies have further propelled the field forward \cite{Zhang2022Cooperated, Litci2024}. Xiong et al. proposed SMDS-Net~\cite{xiong2022smds}, which integrates spectral low-rankness and spatial sparsity through a network unfolding framework. Zhuang and Ng introduced FastHyMix~\cite{zhuang2023fasthymix}, which incorporates a pre-trained FFDNet denoiser with spectral low-rank constraints. Subsequently, Chen et al. developed Flex-DLD~\cite{Chen2024flex}, a flexible deep low-rank decomposition method that further embeds traditional prior knowledge into the learning architecture. By employing composite loss functions and hybrid designs, these methods strike an effective balance between data-driven adaptability and prior-guided regularization, thereby enhancing both robustness and reconstruction accuracy. Despite these successes, learning-based methods often require extensive training data and suffer from limited interpretability and generalization due to their inherent black-box nature.

From an objective standpoint, when trained on sufficiently large and representative datasets, learning-based methods may surpass model-based approaches in terms of precision and accuracy  \cite{xie2022poplar}. Nevertheless, model-based methods still continue to attract considerable research interest owing to their solid theoretical foundations and strong interpretability. Model-driven approaches primarily exploit the intrinsic properties of HSI to mitigate noise. Common techniques employed in these works include Total Variation (TV) \cite{yuan2012hyperspectral, tao2025alternative}, non-local (NL) similarity \cite{Dabov2007, sarkar2021non}, sparse representation (SR) \cite{zhao2025tensor}, and low-rank (LR) matrix/tensor decomposition \cite{chang2020weighted,zhu2024projection}. Representative TV-based methods mitigate noise by minimizing the total variation of the image, which effectively preserves edge details while smoothing homogeneous regions. On the other hand, the intrinsic low-rank property of HSI has prompted researchers to apply low-rank matrix or tensor decomposition to HSI data, followed by refinement of the decomposed components using additional methods for enhanced denoising and reconstruction. Numerous studies have explored combinations~\cite{peng2022fast, chen2022hyperspectral, chen2023TPTV, peng2024stable} of different model-driven techniques, effectively leveraging their complementary advantages.

\subsection{Motivation and our contributions}
Most existing HSI denoising methods predominantly focus on incorporating image priors to constrain the fidelity term or on developing specialized models for specific noise types. Methods that apply different constraints to the fidelity term often assume Gaussian or sparse noise, yet typically neglect noise priors such as stripe noise, which possess distinct directional and structural characteristics. Conversely, approaches tailored to specific noise types rely on single noise priors, limiting their effectiveness in mixed noise scenarios. Overall, current denoising frameworks suffer from two major issues. On the one hand, models that comprehensively incorporate various noise priors often lead to excessive parameters and optimization difficulties. On the other hand, the trade-off between the fidelity term and the regularization prior is typically handled only through parameter tuning, which may not yield the optimal solution.

Motivated by the challenges associated with the aforementioned issues, we decompose the noise prior into three distinct components: uniform, sparse, and directional structural noise. Building on this, we establish a noise prior reduction framework for HSI denoising to reduce both model parameters and optimization complexity. To more effectively balance the fidelity and regularization terms, we break the conventional limitation of a fixed fidelity term by introducing a pixel-wise spatial–spectral adaptive fidelity term. In addition, we integrate the Representative Coefficient Total Variation (RCTV) regularization to construct the proposed \textbf{F}ast and \textbf{R}obust \textbf{H}ybrid \textbf{D}enoising (FRHD) model for HSIs. Distinct from traditional model-based methods, the FRHD model incorporates comprehensive noise priors in a more concise parameterization and optimization framework, and adaptively balances the fidelity and regularization terms. Experimental results demonstrate that the proposed method achieves superior performance in both denoising quality and computational efficiency. 

In summary, the contributions of this work can be articulated as follows:

\begin{itemize}  
\item We reformulate the noise assumption by considering diverse attributes of mixed noise. From a theoretical perspective, we rigorously demonstrate that, with appropriate parameter tuning, the solution variables of the optimization problem in Equation (\ref{EP1}) are equivalent to those in Equation (\ref{NRHSIframework}) within the ADMM framework. This achieves an elegant simplification of complex noise priors and establishes a noise prior reduction framework for HSI denoising.

\item We develop a spatial–spectral noise-adaptive fidelity term that dynamically adjusts according to the pixel-wise noise level. The fidelity incorporates a learnable weight tensor, which is iteratively updated to reflect noise evolution, thereby enhancing the model’s denoising robustness and adaptivity. Furthermore, by integrating the RCTV regularization, we further propose the FRHD model.

\item An efficient optimization algorithm based on the alternating direction method of multipliers (ADMM) framework is proposed, with rigorous derivations and convergence analysis provided for the overall algorithm. Extensive experiments on simulated and real-world datasets validate the superior denoising performance and competitive efficiency of the proposed FRHD method.
\end{itemize}

The remainder of this paper is structured as follows. Section \ref{sect2} reviews related works pertinent to this study. Section \ref{sect3} presents the proposed noise prior reduction framework and the FRHD method. Section \ref{sect4} presents the optimization algorithm employed to solve the model. In Section \ref{sect5}, we report extensive experimental results and provide a comprehensive analysis. Finally, Section \ref{sect6} concludes this work.

\section{Notation and Preliminaries}   \label{sect2}
\subsection{Notations.}
In this paper, scalars are denoted by regular letters, matrices by bold capitals, and tensors by calligraphic capitals. To facilitate a clearer understanding of the model and algorithm, key notations are summarized in Table~\ref{Notation_def}. In particular, the mode-3 unfolding operator, $\mathrm{unfold}_3(\cdot)$, maps a tensor $\mathcal{X} \in \mathbb{R}^{M \times N \times B}$ to a matrix $\mathbf{X} \in \mathbb{R}^{MN \times B}$ by vectorizing each spatial slice along the spectral dimension, and its inverse, $\mathrm{fold}_3(\cdot)$, reshapes the matrix back into the original tensor. In this paper, these operators are used consistently.

\begin{table}[h!]
\fontfamily{ptm}\selectfont 
\caption{The main symbols and their definitions.}  \label{Notation_def}
\renewcommand{\arraystretch}{0.9}
\centering
\begin{tabular}{|l|l|} 
\hline
\textbf{Notation} & \textbf{Definition} \\ \hline
$\mathbf{Y}/\mathcal{Y} \hspace{1.6em}$ & Noisy HSI matrix/tensor. \\ \hline
$\mathbf{X}/\mathcal{X}$ & Restored HSI matrix/tensor. \\ \hline
$\mathbf{N}/\mathcal{N}$ & Noise matrix/tensor. \\ \hline
$\mathbf{S}/\mathcal{S}$ & Sparse noise matrix/tensor. \\ \hline
$\mathbf{D}/\mathcal{D}$ & Deadline or stripe noise/tensor. \\ \hline
$\mathbf{W}/\mathcal{W}$ & Weight matrix/tensor. \\ \hline
$\mathbf{U}/\mathcal{U}$  & Representative coefficients matrix/tensor. \hspace{3em}\\ \hline
$\bm{\mathfrak{T}}/\mathcal{T} $ & All-ones matrix/tensor. \\ \hline
$\mathbf{V}$  & Orthogonal matrix. \\ \hline
$\mathbf{H}_i$ & Auxiliary variable of ADMM.  \\ \hline
$\odot$ & Hadamard product. \\ \hline
$\mathbb{R}$ & Set of real numbers. \\ \hline
$\mathcal{R}(\mathcal{X})$ & Image prior. \\ \hline
$\mathcal{P}(\mathcal{N})$ & Noise prior. \\ \hline
$\mathcal{J}(\mathcal{W})$ & Weight prior. \\ \hline
\end{tabular}
\end{table}

\subsection{Preliminaries of HSI Denoising.}
The traditional HSI denoising task aims to recover a clean image from noise-corrupted observations \cite{su2023hyperspectral}, satisfying the following equation:
\begin{equation}  \label{HSI_denoising}
\mathcal{Y} = \mathcal{X} + \mathcal{N},
\end{equation}
where $\mathcal{Y} \in \mathbb{R}^{M \times N \times B}$ denotes the observed HSI, $\mathcal{X} \in \mathbb{R}^{M \times N \times B}$ is the clean HSI, and $\mathcal{N} \in \mathbb{R}^{M \times N \times B}$ represents the noise tensor. The size $M$ and $N$ correspond to the spatial resolution, and $B$ represents the number of spectral bands.

The primary objective of HSI denoising is to preserve both the spatial and spectral structures of the image while effectively removing noise, which can be formulated as the following optimization problem:
\begin{equation}   \label{HSI_frame}
\min_{\mathcal{X}} \| \mathcal{Y} - \mathcal{X} \|_F^2 + \lambda \mathcal{R}(\mathcal{X}), 
\end{equation} 
where $\mathcal{R}(\mathcal{X})$ guides the denoising process to retain the intrinsic characteristics of the image, and $\lambda$ serves as the regularization parameter.

Since the noise tensor $\mathcal{N}$ primarily consists of uniform Gaussian noise and other forms of sparse noise, the problem in (\ref{HSI_denoising}) is further decomposed into a more specific formulation:
\begin{equation}  \label{HSI_newdenoising}
\mathcal{Y} = \mathcal{X} + \mathcal{G}+ \mathcal{S},
\end{equation}
where $\mathcal{G}$ denotes the Gaussian noise, and $\mathcal{S}$ represents the other system-specific sparse corruptions.
 
\subsection{TV and Low-rank RCTV regularization.}
In the context of image denoising, TV regularization effectively encodes prior knowledge of local smoothness. For a grayscale image $\mathbf{X}$ of size $M \times N$, the TV norm \cite{yuan2012hyperspectral} is commonly defined as:
\begin{align}
& \|\mathbf{X}\|_{\text{TV}}  := \sum_{i=1}^{M-1} \sum_{j=1}^{N-1} \big( |x_{i,j} - x_{i+1,j}|  + |x_{i,j} - x_{i,j+1}| \big) + \! \sum_{i=1}^{M-1} |x_{i,N} - x_{i+1,N}| + \sum_{j=1}^{N-1} |x_{M,j} - x_{M,j+1}|.
\end{align}

In general, HSI data is represented as a 3D tensor $\mathcal{X}$ comprising multiple bands. TV regularization can be applied to each band individually or by first unfolding $\mathcal{X}$ into a matrix and subsequently applying TV regularization. However, both strategies face limitations in computational efficiency and accuracy, as leveraging all spectral bands does not inherently yield optimal reconstruction performance.

\begin{figure}[!t]   
\centering
\includegraphics[width=4.15in]{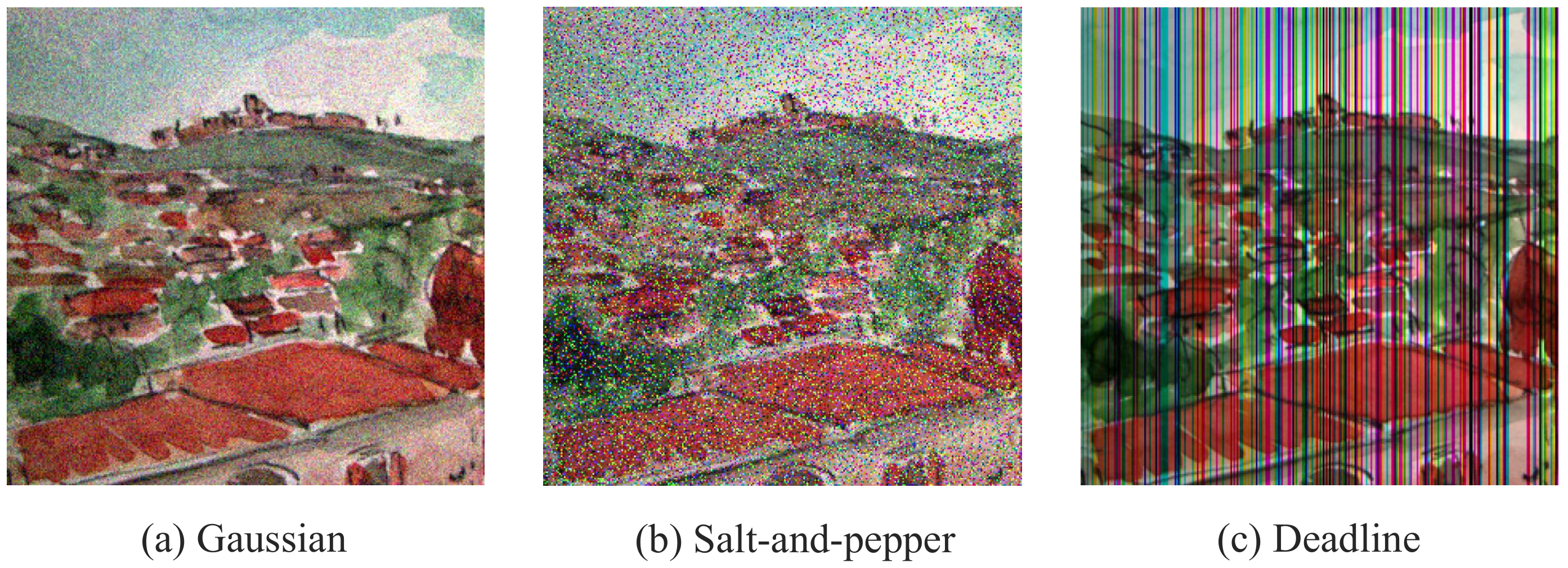}
\caption{Description of the three different types of noise. }
\label{noisy_demo}
\end{figure}

\begin{table*}[!t]
\fontfamily{ptm}\selectfont 
\caption{Comparison of Methodological Properties.}   \label{method_prior}
\centering
\renewcommand{\arraystretch}{1.25}
\resizebox{0.97\textwidth}{!}{
\begin{tabular}{lcccccccccccc}
\hline 
\textbf{Property} & 
\shortstack{SSAHTV \\ {\cite{yuan2012hyperspectral}}} & 
\shortstack{LRTV \\ {\cite{he2015total}}} & 
\shortstack{FastHyDe \\ {\cite{zhuang2018fast}}} & 
\shortstack{NGmeet \\ {\cite{He2022NGmeet}}} & 
\shortstack{LSSTV \\ {\cite{wang2018low}}} & 
\shortstack{WLRTR \\ {\cite{chang2020weighted}}} & 
\shortstack{WNLRATV \\ {\cite{chen2022hyperspectral}}} & 
\shortstack{RCTV \\ {\cite{peng2022fast}}} & 
\shortstack{TPTV \\ {\cite{chen2023TPTV}}} & 
\shortstack{CTV-SPCP \\ {\cite{peng2024stable}}} & 
\shortstack{FRHD \\ Ours} \\
\hline
Spectral Low-Rank  & \ding{55} & \ding{51} & \ding{51} & \ding{51} & \ding{51} & \ding{51} & \ding{51} & \ding{51} & \ding{51} & \ding{51} & \ding{51} \\
Gauss Noise Prior & \ding{51} & \ding{51} & \ding{51} & \ding{51} & \ding{51} & \ding{51} & \ding{51} & \ding{51} & \ding{51} & \ding{51} & \ding{51} \\
Sparse $\ell_{1}$-norm Prior & \ding{55} & \ding{51} & \ding{51} & \ding{55} & \ding{51} & \ding{55} & \ding{51} & \ding{51} & \ding{51} & \ding{51} & \ding{51} \\
Structured $\ell_{2,1}$-norm Prior & \ding{55} & \ding{55} & \ding{55} & \ding{55} & \ding{55} & \ding{51} & \ding{55} & \ding{55} & \ding{55} & \ding{55} & \ding{51} \\
TV Regularization & \ding{51} & \ding{51} & \ding{55} & \ding{55} & \ding{51} & \ding{55} & \ding{51} & \ding{51} & \ding{51} & \ding{51} & \ding{51} \\
RCTV Regularization   & \ding{55} & \ding{55} & \ding{55} & \ding{55} & \ding{55} & \ding{55} & \ding{55} & \ding{51} & \ding{55} & \ding{55} & \ding{51} \\
Adaptive Pixel-wise fidelity  & \ding{55} & \ding{55} & \ding{55} & \ding{55} & \ding{55} & \ding{55} & \ding{55} & \ding{55} & \ding{55} & \ding{55} & \ding{51} \\
\hline
\end{tabular}
}
\end{table*}

Recently, inspired by low-rank subspace modeling, Peng et al.~\cite{peng2022fast} proposed the RCTV regularization, which decomposes the input tensor into subspaces and selectively enforces TV regularization on the most informative bands. By applying TV to the spectrally reduced HSI rather than the full HSI, this method achieves improved computational efficiency and denoising performance. The RCTV regularization is defined as:
\begin{equation}
\|\mathbf{X}\|_{\text{RCTV}} := \sum_{i=1}^R \|\mathcal{U}(:, :, i)\|_{\text{TV}},
\end{equation}
where $\mathcal{U} = \mathcal{X} \times_3 \mathbf{V}$, $\mathbf{V} \in \mathbb{R}^{B \times R}$ is an orthogonal matrix. The explicit mathematical formulation of RCTV regularization is as follows:
\begin{equation}
\|\mathbf{X}\|_{\text{RCTV}} := \|\nabla_1(\mathbf{U})\|_1 + \|\nabla_2(\mathbf{U})\|_1,
\end{equation}
where $\mathbf{X} \in \mathbb{R}^{MN \times B}$ is the matrix obtained after unfolding the HSI, $\mathbf{X} = \mathbf{U} \mathbf{V}^\top$ represents the low-rank decomposition, $\mathbf{U} \in \mathbb{R}^{MN \times B}$ is the matrix of representative coefficients.

\subsection{Overview and comparison of prior-based methods.}
To review the priors of model-based methods and highlight the methodological and conceptual differences, we conduct a comparative analysis between the proposed FRHD method and a set of classical and representative hyperspectral image denoising approaches, including SSAHTV~\cite{yuan2012hyperspectral}, LRTV~\cite{he2015total}, FastHyDe~\cite{zhuang2018fast}, NGmeet~\cite{He2022NGmeet}, LSSTV~\cite{wang2018low}, WLRTR~\cite{chang2020weighted}, WNLRATV~\cite{chen2022hyperspectral}, RCTV~\cite{peng2022fast}, TPTV~\cite{chen2023TPTV}, and CTV-SPCP~\cite{peng2024stable}.

Among these methods, SSAHTV, LRTV, FastHyDe, and NGmeet are early representative approaches, each embedding specific priors such as total variation regularization, spectral low-rank structure, explicit noise modeling, or non-local self-similarity. The remaining methods are relatively recent and incorporate more advanced priors or structural constraints. We systematically compare the underlying priors adopted by these methods with those employed in our FRHD model. 

{As shown in Table~\ref{method_prior}, our model incorporates more comprehensive and detailed prior information, and extends the TV and RCTV methods by integrating a pixel-adaptive weighting strategy}. In contrast to existing model-based or model-data hybrid approaches, FRHD introduces a novel multi-norm constrained noise modeling framework, where each norm is explicitly designed to characterize a specific type of noise. While recent model-data hybrid  driven methods such as FastHyMix and Flex-DLD leverage pre-trained denoising networks or self-supervised low-rank representations, they often lack explicit and structured noise prior modeling, which can lead to performance degradation under complex noise conditions. More importantly, FRHD proposes an innovative spatially-spectrally adaptive weighting tensor $\mathcal{W}$, optimized jointly with the image and noise components at the pixel level. This adaptive mechanism, absent in most existing methods, enables dynamic refinement during iterative optimization, thereby significantly enhancing noise suppression capability and reconstruction fidelity.

\begin{figure*}[!t]
\centering
\includegraphics[width=6.4in]{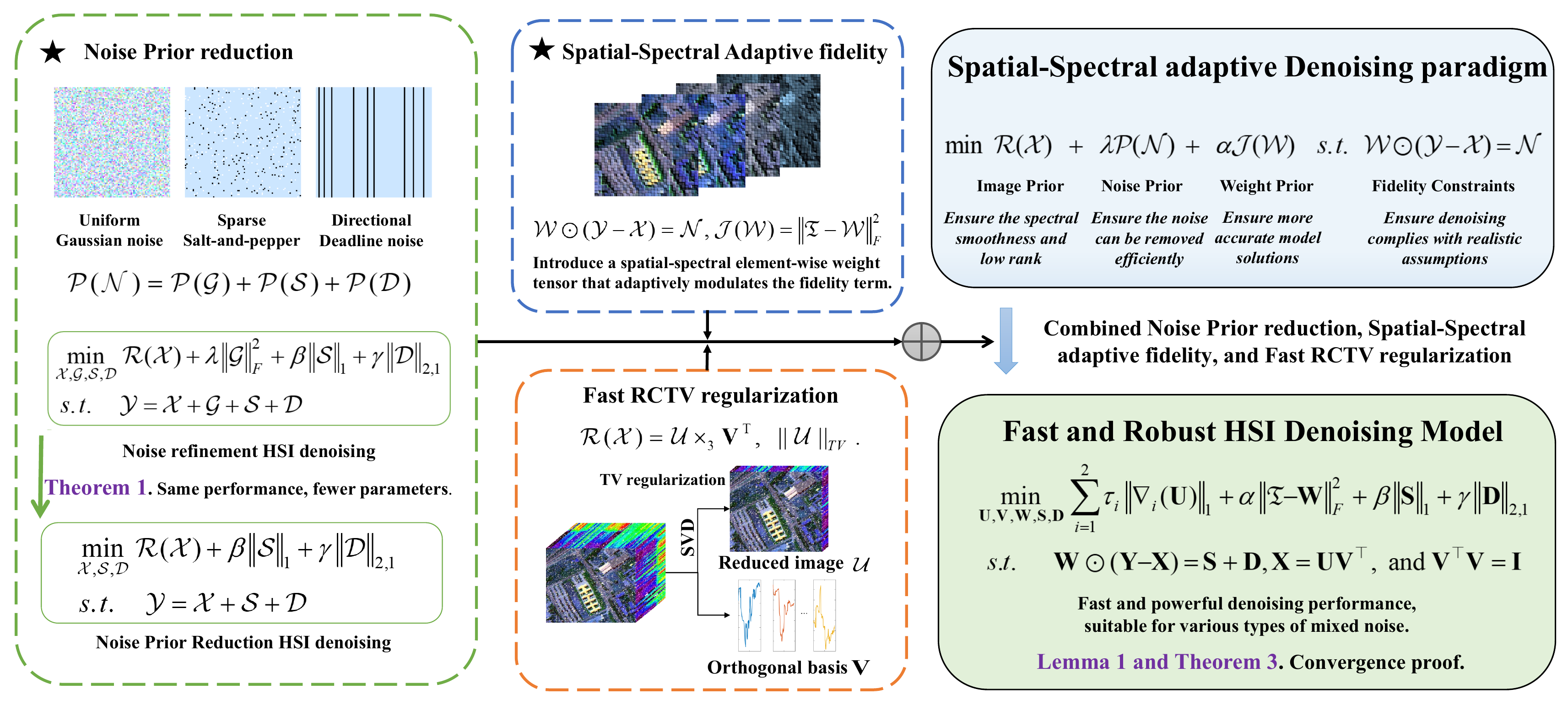}
\caption{Flowchart of the proposed framework and FRHD method.}
\label{Flowchart}
\end{figure*}

\section{Proposed framework and methodology}   \label{sect3}
\subsection{Noise Prior Reduction.}
Although existing denoising frameworks predominantly adhere to the structure of (\ref{HSI_frame}), they neglect the nuanced treatment of distinct noise priors. Accordingly, we deem that all image denoising tasks can be fundamentally framed as the following core problem:
\begin{align} \label{UniD}
\underset{\mathcal{X},\mathcal{N}}{\text{min}}\,  \hspace{-0.6em} \underbrace{\mathcal{R}(\mathcal{X})}_{\text{Image Prior}}\hspace{-0.5em} + \hspace{0.2em} \lambda \hspace{-0.5em} \underbrace{\mathcal{P}(\mathcal{N})}_{\text{Noise Prior}}\!, \hspace{0.4em} s.t. \hspace{0.4em}  \underbrace{\mathcal{Y=X+N}}_{\text{Fidelity Constraints}},
\end{align}
where $\mathcal{P}(\mathcal{N})$ is the noise prior governing and refining different types of noise constraints, and $\lambda$ is a trade-off parameter that balances the impact of different priors.

As shown in Fig.~\ref{noisy_demo}, noise in HSIs typically falls into three categories: Gaussian noise with uniform distribution, sparse impulse noise (e.g., salt-and-pepper), and structured directional noise such as stripes or deadlines. Among existing mixed noise removal methods, WNLRATV~\cite{chen2022hyperspectral}, RCTV~\cite{peng2022fast}, and TPTV~\cite{chen2023TPTV} successfully model mixed Gaussian and sparse noise components; however, their performance is constrained by an inability to capture the inherent directionality structure present in structured noise. BALMF~\cite{Xu2022hyper} introduces the banded asymmetric Laplace distribution to account for noise asymmetry, but its overly rigid prior often fails under complex real-world noise conditions. LR-TDTC~\cite{zhang2025hyperspectral} mitigates mixed noise via tensor decomposition and completion, but lacks explicit discrimination of noise types and incurs substantial computational cost (e.g., over 700 seconds for a $256 \times 256 \times 144$ HSI data), limiting practical applicability.

In contrast to previous methods, we consider decomposing HSI noise into three different types: \textit{i}) spatially homogeneous Gaussian noise; \textit{ii}) sparse, discrete noise (e.g., impulse or salt-and-pepper corruption); and \textit{iii}) structured stripe or deadline noise with directional sparsity. According to this categorization, we reformulate the fidelity term as:
\begin{align}
\mathcal{Y}=\mathcal{X+G+S+D},
\end{align}
where $\mathcal{G}$ denotes Gaussian noise, $\mathcal{D}$ represents directional stripe or deadline noise, and $\mathcal{S}$ corresponds to other forms of sparse noise. We further assert that each noise type should be regulated by a tailored constraint. Specifically, Gaussian noise is penalized using the smooth $\ell_2$-norm, sparse and discrete noise is handled via the $\ell_1$-norm~\cite{xie2024theta}, and directional noise is regularized with the $\ell_{2,1}$-norm. The $\ell_{2,1}$-norm~\cite{chang2020weighted} promotes structured sparsity by encouraging entire rows or columns to shrink toward zero.

Based on the aforementioned considerations, we refine the noise prior of the denoising problem (\ref{UniD}) and introduce the following definition:

\emph{\textbf{Definition 1} (Noise refinement HSI denoising framework)}: For a observed HSI tensor $\mathcal{Y} \in \mathbb{R}^{M \times N \times B}$, we define the Noise refinement HSI denoising framework as:
\begin{align} \label{EP1}
& \underset{\mathcal{X},\mathcal{G},\mathcal{S},\mathcal{D}}{\text{min}}\ \mathcal{R}(\mathcal{X})+\lambda \left\| \mathcal{G} \right\|_{F}^{2}+\beta {{\left\| \mathcal{S} \right\|}_{1}}+\gamma {{\left\| \mathcal{D} \right\|}_{2,1}} \quad s.t. \hspace{0.4em}  \mathcal{Y=X+G+S+D},
\end{align}
where $\lambda$, $\beta$, and $\gamma$ are trade-off parameters that balance the respective noise priors, $\left\| \mathcal{G} \right\|_{F}^{2}$, ${\left\| \mathcal{S} \right\|}_{1}$ and ${\left\| \mathcal{D} \right\|}_{2,1}$ representing regularization terms for uniform, sparse and directional noise, respectively.

\emph{\textbf{Definition 2} (Noise prior reduction HSI denoising framework)}: For a observed HSI tensor $\mathcal{Y} \in \mathbb{R}^{M \times N \times B}$, the Noise prior reduction HSI denoising framework is defined as:
\begin{align} \label{NRHSIframework}
& \underset{\mathcal{X},\mathcal{S},\mathcal{D}}{\text{min}}\  \mathcal{R}(\mathcal{X}) \!+\! \beta {{\left\| \mathcal{S} \right\|}_{1}} \!+\! \gamma {{\left\| \mathcal{D} \right\|}_{2,1}}  \hspace{0.2 em} s.t.  \mathcal{Y=X+S+D},
\end{align}
where the definitions of the variables remain consistent with those in Definition 1.

The reduction from Definition~1 to Definition~2 removes the explicit Gaussian regularization term $\|\mathcal{G}\|_{F}^{2}$ and its associated parameter $\lambda$, thereby reducing the number of hyperparameters from three to two. It is important to emphasize that this reduction is not obtained via an iterative approximation or an early stopping strategy. Instead, it is derived from an analytical reformulation of the ADMM updates, in which the Gaussian noise variable can be eliminated without affecting the update rules of the remaining variables. The following theorem formalizes the precise relationship between these two formulations when solved under the ADMM framework.

\newtheorem{theorem}{\bf Theorem}
\begin{theorem}  \label{thm1}
In the ADMM framework, with proper parameter scaling, the denoising frameworks in Definitions 1 and 2 yield identical updates for $\mathcal{X}$, $\mathcal{S}$, and $\mathcal{D}$.
\end{theorem}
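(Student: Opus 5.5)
We write the augmented Lagrangians of both problems and compare their ADMM subproblems one at a time. For problem (\ref{EP1}), with multiplier $\Lambda$ and penalty $\mu$, we take
\begin{align*}
L_1 = \mathcal{R}(\mathcal{X}) + \lambda\|\mathcal{G}\|_F^2 + \beta\|\mathcal{S}\|_1 + \gamma\|\mathcal{D}\|_{2,1} + \langle \Lambda, \mathcal{Y}-\mathcal{X}-\mathcal{G}-\mathcal{S}-\mathcal{D}\rangle + \tfrac{\mu}{2}\|\mathcal{Y}-\mathcal{X}-\mathcal{G}-\mathcal{S}-\mathcal{D}\|_F^2 .
\end{align*}
For problem (\ref{NRHSIframework}) we take the analogous $L_2$ with multiplier $\Lambda'$ and penalty $\mu'$, and with no $\mathcal{G}$.

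The key observation is that the $\mathcal{G}$-subproblem of $L_1$ is a strictly convex quadratic. Write $\mathcal{E}=\mathcal{Y}-\mathcal{X}-\mathcal{S}-\mathcal{D}+\Lambda/\mu$ for the current values of the other variables. Then
\begin{align*}
\mathcal{G}^{\star} = \frac{\mu}{2\lambda+\mu}\,\mathcal{E}, \qquad \min_{\mathcal{G}} \Big(\lambda\|\mathcal{G}\|_F^2 + \tfrac{\mu}{2}\|\mathcal{E}-\mathcal{G}\|_F^2\Big) = \frac{\lambda\mu}{2\lambda+\mu}\,\|\mathcal{E}\|_F^2 .
\end{align*}
This is a Moreau-envelope identity. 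Eliminating $\mathcal{G}$ therefore turns the quadratic penalty in $L_1$ into a quadratic penalty with the reduced coefficient $\tilde\mu/2$, where $\tilde\mu = 2\lambda\mu/(2\lambda+\mu)$. The term is still centred at $\mathcal{E}$, i.e.\ it has the same form as the penalty in $L_2$ up to how the multiplier enters.

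Next we treat the $\mathcal{X}$, $\mathcal{S}$ and $\mathcal{D}$ subproblems. We would substitute $\mathcal{G}$ by its closed-form minimizer inside each of them; equivalently, we regard $(\mathcal{G},\cdot)$ as updated jointly, or place the $\mathcal{G}$-step so that it enters each block only through $\mathcal{E}$. Each of the $\mathcal{X}$-, $\mathcal{S}$- and $\mathcal{D}$-steps in Definition~1 then becomes
\begin{align*}
\arg\min \; \text{(prior term)} + \tfrac{\tilde\mu}{2}\,\|\mathcal{Y}-\mathcal{X}-\mathcal{S}-\mathcal{D}+\Lambda/\mu\|_F^2 .
\end{align*}
This matches the corresponding step of Definition~2 under the choices
\begin{align*}
\mu'=\tilde\mu, \qquad \Lambda'/\mu' = \Lambda/\mu, \qquad\text{i.e.}\qquad \Lambda' = \frac{2\lambda}{2\lambda+\mu}\,\Lambda .
\end{align*}
The soft-thresholding for $\mathcal{S}$, the column-wise shrinkage for $\mathcal{D}$ and the $\mathcal{X}$-step are then literally the same maps. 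The only change is that the thresholds are $\beta/\tilde\mu$ and $\gamma/\tilde\mu$, which is the ``proper parameter scaling'' in the theorem.

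Finally we check that the multiplier updates are consistent, so that the identification persists by induction over iterations. In Definition~1 the update is $\Lambda^{+}=\Lambda+\mu(\mathcal{Y}-\mathcal{X}-\mathcal{G}-\mathcal{S}-\mathcal{D})$. Substituting $\mathcal{G}^\star$, the residual is
\begin{align*}
\frac{2\lambda}{2\lambda+\mu}\,\mathcal{E} - \frac{\Lambda}{\mu}.
\end{align*}
We must verify that the rescaled multiplier $\Lambda'$ then evolves exactly by $\Lambda'^{+}=\Lambda'+\mu'(\mathcal{Y}-\mathcal{X}-\mathcal{S}-\mathcal{D})$. This bookkeeping step is the main obstacle. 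A naive scaling may leave a residual factor, so the multiplier scaling and possibly the step size (for example, a dual step of $\mu'$ versus $\tilde\mu$) have to be chosen to absorb it. If an exact match fails with $\Lambda'$ proportional to $\Lambda$, we would fall back on one of two options: allow an affine rescaling of $\Lambda'$, or show directly that the $\mathcal{G}$-step keeps $\Lambda$ aligned with the residual after each iteration. The latter reduces the identity to the consistency of a scalar recursion between the two parameter sequences. Given consistent initialization, induction on the iteration index then yields identical $\mathcal{X},\mathcal{S},\mathcal{D}$ iterates.
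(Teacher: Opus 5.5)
Your per-step argument is the paper's argument; only the bookkeeping differs. You eliminate $\mathcal{G}$ through its closed-form minimizer, using the Moreau-envelope value $\frac{\lambda\mu}{2\lambda+\mu}\|\mathcal{E}\|_F^2$. The paper does the same, then keeps the penalty $\rho$ and the shift $\Lambda/\rho$, multiplies the reduced Lagrangian by $1+\frac{\rho}{2\lambda}$, and absorbs that factor into the model weights $\tilde\tau,\tilde\beta,\tilde\gamma$. To do this it specializes to $\mathcal{R}(\mathcal{X})=\tau\|\mathcal{X}\|_F^2$, so the prior has a weight to rescale. You absorb the factor into the penalty $\tilde\mu$ and the multiplier instead.

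The thresholds coincide, $\beta/\tilde\mu=\tilde\beta/\mu$. Your version has two mild advantages: it works for arbitrary $\mathcal{R}$, and it keeps $\beta,\gamma$ fixed when the penalty is increased across iterations (the paper's $\tilde\beta=(1+\frac{\rho}{2\lambda})\beta$ drifts with $\rho$). The paper stops at this point. It proves only that, given a common current state and a common scaled multiplier, the $\mathcal{X}$-, $\mathcal{S}$- and $\mathcal{D}$-update maps agree. It never touches the multiplier update, consistent with its own remark that the equivalence holds \emph{at the level of ADMM update rules}.

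The genuine gap is the induction in your last paragraph, and it cannot be completed. Let $\mathcal{G}^{+}$ be the minimizer at the new blocks, $r^{+}=\mathcal{Y}-\mathcal{X}^{+}-\mathcal{S}^{+}-\mathcal{D}^{+}$ and $\theta=\frac{2\lambda}{2\lambda+\mu}\in(0,1)$. The Definition~1 dual step then gives $\Lambda^{+}=2\lambda\mathcal{G}^{+}=\theta(\Lambda+\mu r^{+})$. So the scaled multiplier $u=\Lambda/\mu$ evolves as $u^{+}=\theta(u+r^{+})$, whereas standard ADMM on Definition~2 gives $v^{+}=v+r^{+}$ for $v=\Lambda'/\mu'$. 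Concretely, your choices give $(\Lambda')^{+}=\theta(\Lambda+\mu r^{+})$, but preserving $\Lambda'=\theta\Lambda$ would require $\theta^{2}(\Lambda+\mu r^{+})$.

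For generic data, the $\mathcal{X}$-, $\mathcal{S}$- and $\mathcal{D}$-steps agree only when $v=u$, so there is no slack to exploit. The discrepancy is a factor $\theta$ on the old dual variable. No rescaling $\Lambda'=c\Lambda$, affine reparametrization, or choice of dual step size can absorb it. Letting the penalties vary across iterations with fixed model weights also fails: it leads to $\mu_{k+1}=\tilde\mu_{k+1}$, which is impossible for finite $\lambda$. Agreement of full trajectories would need either a damped dual update $(\Lambda')^{+}=\theta(\Lambda'+\mu' r^{+})$ or iteration-dependent weights in Definition~2, and neither is ADMM applied to Definition~2.

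Drop the inductive claim and state the result as the paper does: a per-iteration identity of the update maps. Your first part already proves that.
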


This result of Theorem~\ref{thm1} establishes an equivalence at the level of ADMM update rules, rather than a strict equivalence between the two optimization problems. As an immediate consequence, we also present Corollary~\ref{col1}, which follows directly from Theorem~\ref{thm1} and addresses a special case relevant to many practical scenarios.

\newtheorem{corollary}{\bf Corollary}
\begin{corollary}  \label{col1}
Under the same ADMM framework and parameter scaling conditions as Theorem~\ref{thm1}, the updates for ${\mathcal{X}, \mathcal{S}}$ from: $\underset{\mathcal{X}, \mathcal{G}, \mathcal{S}}{\emph{min}}  \mathcal{R}(\mathcal{X}) + \lambda \|\mathcal{G}\|_{F}^{2} + \beta \|\mathcal{S}\|_{1} \hspace{0.3em} \text{s.t.} \hspace{0.3em}\mathcal{Y} = \mathcal{X} + \mathcal{G} + \mathcal{S}, $ coincide with those from: $\underset{\mathcal{X}, \mathcal{S}}{\emph{min}} \hspace{0.2em}  \mathcal{R}(\mathcal{X}) + \beta \|\mathcal{S}\|_{1}
\hspace{0.3em} \text{s.t.} \hspace{0.3em} \mathcal{Y} = \mathcal{X} + \mathcal{S}.$
\end{corollary}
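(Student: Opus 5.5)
We write the augmented Lagrangians of both problems and compare the ADMM subproblems block by block. For Definition~1 we use a multiplier $\mathcal{M}$ and penalty $\mu$:
\begin{align*}
L_1=\mathcal{R}(\mathcal{X})+\lambda\|\mathcal{G}\|_F^2+\beta\|\mathcal{S}\|_1+\gamma\|\mathcal{D}\|_{2,1}+\langle \mathcal{M},\mathcal{Y}-\mathcal{X}-\mathcal{G}-\mathcal{S}-\mathcal{D}\rangle+\tfrac{\mu}{2}\|\mathcal{Y}-\mathcal{X}-\mathcal{G}-\mathcal{S}-\mathcal{D}\|_F^2 .
\end{align*}
For Definition~2 we write $L_2$ analogously, with multiplier $\mathcal{M}'$ and penalty $\mu'$. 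The key observation is that the $\mathcal{G}$-subproblem of $L_1$ is a strongly convex quadratic with a closed form. Setting $\mathcal{E}=\mathcal{Y}-\mathcal{X}-\mathcal{S}-\mathcal{D}+\mathcal{M}/\mu$, the first-order condition $2\lambda\mathcal{G}-\mu(\mathcal{E}-\mathcal{G})=0$ gives
\begin{align*}
\mathcal{G}=\tfrac{\mu}{\mu+2\lambda}\,\mathcal{E}.
\end{align*}

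Next we substitute this $\mathcal{G}$ back into $L_1$, which amounts to partially minimizing over $\mathcal{G}$, the Moreau-envelope step. Writing $\mathcal{R}_s=\mathcal{Y}-\mathcal{X}-\mathcal{S}-\mathcal{D}$, a direct computation gives
\begin{align*}
\min_{\mathcal{G}}\Big(\lambda\|\mathcal{G}\|_F^2+\tfrac{\mu}{2}\|\mathcal{E}-\mathcal{G}\|_F^2\Big)=\tfrac{\lambda\mu}{\mu+2\lambda}\|\mathcal{E}\|_F^2 .
\end{align*}
Thus, with $\mathcal{G}$ eliminated, the coupling term in $(\mathcal{X},\mathcal{S},\mathcal{D})$ is $\tfrac{\mu'}{2}\|\mathcal{R}_s+\mathcal{M}/\mu\|_F^2$ up to constants. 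Here $\mu'=\tfrac{2\lambda\mu}{\mu+2\lambda}$, which is exactly the harmonic-type combination of $\mu$ and $2\lambda$. Comparing with $L_2$, whose coupling term is $\tfrac{\mu'}{2}\|\mathcal{R}_s+\mathcal{M}'/\mu'\|_F^2$, the two agree under the parameter scaling
\begin{align*}
\mu'=\tfrac{2\lambda\mu}{\mu+2\lambda},\qquad \mathcal{M}'/\mu'=\mathcal{M}/\mu,\ \text{i.e. } \mathcal{M}'=\tfrac{2\lambda}{\mu+2\lambda}\mathcal{M}.
\end{align*}
This is the "proper parameter scaling" in the statement. It is consistent with the remark after Definition~2 that $\lambda$ disappears, since it is absorbed into $\mu'$.

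Then we verify the iteration step by step, by induction on the iteration counter. Suppose the iterates $(\mathcal{X}^k,\mathcal{S}^k,\mathcal{D}^k)$ coincide and the multipliers satisfy the scaling relation. Because the $\mathcal{X}$-, $\mathcal{S}$- and $\mathcal{D}$-subproblems (in Gauss--Seidel order, with $\mathcal{G}$ updated in closed form between them, or jointly with the block it follows) reduce to minimizing $\mathcal{R}$, $\beta\|\cdot\|_1$ or $\gamma\|\cdot\|_{2,1}$ plus the same quadratic, the soft-thresholding and $\ell_{2,1}$ shrinkage formulas coincide, and so does the $\mathcal{X}$ update. For the dual step, the residual with $\mathcal{G}$ included is $\mathcal{E}-\mathcal{G}-\mathcal{M}/\mu=\tfrac{2\lambda}{\mu+2\lambda}\mathcal{R}_s-\tfrac{\mu}{\mu+2\lambda}\mathcal{M}/\mu$. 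One then checks that $\mathcal{M}^{k+1}=\mathcal{M}^k+\mu(\cdot)$ maps to $\mathcal{M}'^{k+1}=\mathcal{M}'^k+\mu'\mathcal{R}_s^{k+1}$ under the same scaling, possibly after adjusting the dual step length.

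The main obstacle is the ordering of the $\mathcal{G}$ update relative to the other blocks in the multi-block ADMM. If $\mathcal{G}$ is updated with stale $\mathcal{X}$ rather than jointly minimized, the substitution is not exact. We would first handle this by grouping $\mathcal{G}$ with the last primal block, so that the minimization over $\mathcal{G}$ is exact and the envelope argument applies verbatim. We would then confirm by direct algebra that the multiplier recursion closes under the stated rescaling, choosing the dual step length if necessary. Corollary~\ref{col1} then follows by setting $\mathcal{D}\equiv 0$ (equivalently $\gamma\to\infty$) in this argument.
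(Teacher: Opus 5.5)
Your elimination of $\mathcal{G}$ is correct: the closed form $\mathcal{G}=\frac{\mu}{\mu+2\lambda}\mathcal{E}$ and the envelope value $\frac{\lambda\mu}{\mu+2\lambda}\|\mathcal{E}\|_F^2$ both check out. The gap is in the induction over iterations that you build on it. It breaks at the multiplier step, and no rescaling of the parameters or of the dual step length repairs it.

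Write $c=\frac{2\lambda}{\mu+2\lambda}$ and $z^k=\mathcal{M}^k/\mu$, and let $\tilde z^k$ denote Definition~2's scaled multiplier. Your own residual formula gives $z^{k+1}=c\,(z^k+\mathcal{R}_s^{k+1})$. Definition~2 with any dual step $\sigma$ gives $\tilde z^{k+1}=\tilde z^k+(\sigma/\mu')\,\mathcal{R}_s^{k+1}$. Starting from $z^k=\tilde z^k$, you would need $(c-1)z^k=(\sigma/\mu'-c)\mathcal{R}_s^{k+1}$, which fails for generic iterates once $z^k\neq 0$. Even with $\mathcal{M}^0=0$ and the best choice $\sigma=c\mu'$, the mismatch at the second step is $c(c-1)\mathcal{R}_s^{1}\neq 0$.

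The first recursion is a leaky accumulator and the second a pure one, and this reflects a real difference between the two problems. Take $\mathcal{R}=\tau\|\cdot\|_F^2$. The minimizer of the first problem has $\mathcal{X}=\frac{\lambda}{\tau+\lambda}(\mathcal{Y}-\mathcal{S})$ and $\mathcal{G}\neq 0$ whenever $\mathcal{Y}\neq 0$. Every feasible point of the second satisfies $\mathcal{X}+\mathcal{S}=\mathcal{Y}$. So two convergent ADMM runs cannot share their entire $(\mathcal{X},\mathcal{S})$ trajectory under any parameter map.

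Your ordering fix does not work either. If $\mathcal{G}$ is grouped only with the last block, that block's optimality condition gives $\mathcal{M}^k=2\lambda\mathcal{G}^k$. The $\mathcal{X}$-step then still sees the stale $\mathcal{G}^k$, that is, weight $\mu$ and shift $(\frac{1}{\mu}-\frac{1}{2\lambda})\mathcal{M}^k$. The $(\mathcal{S},\mathcal{G})$-step instead sees weight $\mu'$ and shift $\mathcal{M}^k/\mu$. Definition~2 uses a single shift for both blocks, so no choice of $\mu'$, $\mathcal{M}'$ or rescaled weights matches both unless $\mathcal{M}^k=0$.

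What is actually provable is the single-sweep statement, and it is all the paper's Appendix~A proves. From a common iterate and multiplier, with $\mathcal{G}$ minimized out inside each block subproblem, the $\mathcal{X}$- and $\mathcal{S}$-updates coincide. The paper obtains this by keeping $\rho$ and $\Lambda$ fixed and multiplying the regularization weights by $1+\rho/(2\lambda)$. You keep $\mathcal{R}$ and $\beta$ and rescale $(\mu,\mathcal{M})\mapsto(\mu',c\mathcal{M})$ instead. The two choices agree once you divide your block objective by $c$, since $\beta/c=(1+\mu/(2\lambda))\beta$ and $\mathcal{M}'/\mu'=\mathcal{M}/\mu$.

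The paper never examines the multiplier update. Your envelope computation with $\mathcal{D}\equiv 0$ therefore already reproduces its argument for the corollary. Drop the cross-iteration induction rather than trying to close it.
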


The detailed proof of Theorem~\ref{thm1} is provided in Appendix~A. Although the derivation builds upon standard ADMM properties for $\ell_2$-regularized terms, the theorem yields a concrete and useful insight: the Gaussian noise prior can be absorbed analytically into the ADMM updates, allowing the simplified model to replicate exactly the optimization trajectory of the full model under suitable parameter scaling.

This equivalence yields significant theoretical and practical benefits:
\begin{itemize}
    \item \textbf{Principled model simplification.} Theorem~\ref{thm1} provides a rigorous justification for adopting the reduced formulation by establishing an exact parameter mapping between the two models. It shows that the Gaussian noise component does not need to be treated as a separate variable in the ADMM scheme, offering a mathematically grounded alternative to heuristic simplifications.   
    \item \textbf{Reduced hyperparameter complexity.} Eliminating $\lambda$ reduces the hyperparameter search space from three dimensions ($\lambda,\beta,\gamma$) to two ($\beta,\gamma$). This alleviates a common tuning bottleneck in regularized inverse problems, substantially lowering computational cost and mitigating the risk of overfitting.
\end{itemize}

Consequently, based on the equivalence of the ADMM updates established in Theorem~\ref{thm1}, we adopt the reduced model (Definition~2) for all subsequent implementations and experiments. This ensures that the simplified formulation retains the essential optimization dynamics of the full model while achieving greater tractability through a lower-dimensional parameter space and more efficient computation.

\begin{figure*}[!t]
\centering
\includegraphics[width=5.6in]{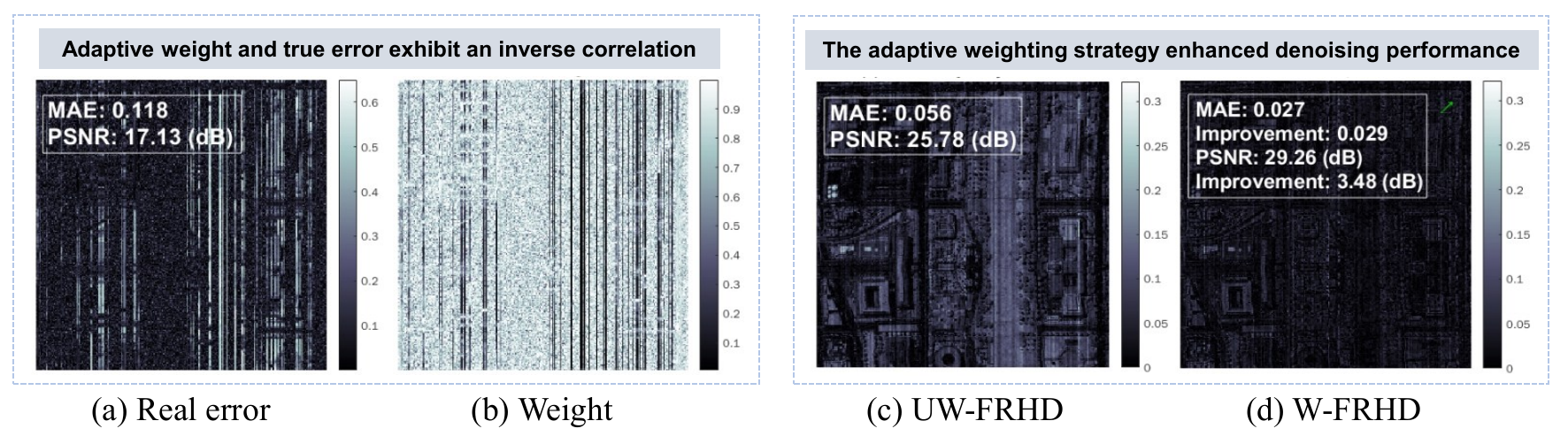}
\caption{Explanation of the adaptive pixel-wise weighting strategy.}
\label{weight_motivation}
\end{figure*}

\subsection{Spatial–Spectral Pixel-wise Adaptive Fidelity.}
In general HSI denoising, the noise modeling within the fidelity term plays a critical role in determining the restoration performance. As the optimization proceeds, noise components are progressively removed across iterations. However, due to the separation of different noise types, solving the noise prior reduction framework~\eqref{NRHSIframework} becomes challenging—particularly in dynamically and stably balancing the interactions among fidelity-related variables. 

To overcome this challenge, we propose a pixel-wise adaptive denoising paradigm that explicitly incorporates adaptive weighting into the fidelity term. The general formulation is defined as follows:

\emph{\textbf{Definition 3} (Pixel-wise denoising paradigm)}: For a observed HSI tensor $\mathcal{Y} \in \mathbb{R}^{M \times N \times B}$, we define the adaptive pixel-wise denoising task as:
\begin{align} \label{PWHSI_denoising}
& \underset{\mathcal{X}, \mathcal{N}, \mathcal{W}}{\text{min}}\,   \overbrace{\mathcal{R}(\mathcal{X})}^{\text{Image Prior}}\hspace{-0.5em} + \hspace{0.2em} \lambda \hspace{-0.4em} \overbrace{\mathcal{P}(\mathcal{N})}^{\text{Noise Prior}} \hspace{-0.5em} + \hspace{0.2em} \alpha \hspace{-0.5em} \overbrace{\mathcal{J}(\mathcal{W})}^{\text{Weight Prior}}\!\!\!, \quad s.t. \hspace{0.8em}  \underbrace{\mathcal{W  \odot  (Y - X)= N}}_{\text{Fidelity Constraints}},
\end{align}
where $\mathcal{R}(\mathcal{X})$ is a regularization term that enforces the spectral smoothness or low-rank structure of the restored image, $\mathcal{P}(\mathcal{N})$ and $\mathcal{J}(\mathcal{W})$ are the noise and weight priors, and $\lambda$ and $\alpha$ are the trade-off parameters that balance the different priors.

Within this framework, we introduce a spatial–spectral pixel-wise adaptive weight tensor $\mathcal{W}$ that dynamically modulates the contribution of each pixel across all bands throughout the optimization process. Unlike fixed or heuristic weighting schemes \cite{chen2022hyperspectral, rui2021learning}, the greatest advantage of our weighting strategy lies in introducing a clever weighting prior term: ${\left\| \mathcal{T}- \mathcal{W} \right\|}$. The key insight behind this prior is that it effectively balances the interplay between the fidelity term and the noise prior without compromising model stability, thereby enhancing adaptability to spatially and spectrally varying noise patterns. As evident from Eq.~\eqref{pixelweighted}, when ($\mathcal{W}=\mathcal{T}$) (i.e., all weights equal one), the model simplifies to the standard unweighted formulation. Conversely, learned weights $\mathcal{W} \neq \mathcal{T}$ activate the prior term $\left\| \mathcal{T\!-\!W} \right\|_{0}$ to regulate the trade-off between data fidelity and noise suppression.

To better elucidate the core mechanism of the adaptive weighting approach, we examine a simplified instantiation of the pixel-level denoising problem:
\begin{equation}  \label{pixelweighted}
\min_{\mathcal{X,W}} \frac{1}{2}\| \mathcal{W} \odot (\mathcal{Y} - \mathcal{X}) \|_F^2 + \lambda \mathcal{R}(\mathcal{X})+ \frac{\alpha}{2}\left\| \mathcal{T\!-\!W} \right\|_{0},
\end{equation} 
where $\mathcal{W} \in [0,1]^{M \times N \times B}$ is the adaptive weighting tensor introduced to modulate fidelity, $\odot$ denotes the Hadamard (element-wise) product, and $\mathcal{T}$ is the unit tensor.

From the formulation in~(\ref{pixelweighted}), it is evident that when the weight tensor equals the unit tensor $\mathcal{T}$, the model reduces to the standard denoising problem. In contrast, when $\mathcal{W} \neq \mathcal{T}$, the interaction between $\mathcal{W}$ and other variables can be characterized by analyzing the solution of its corresponding subproblem:
\begin{equation}  
\min_{\mathcal{W}} \frac{1}{2}\| \mathcal{W} \odot (\mathcal{Y} - \mathcal{X}) \|_F^2 + \frac{\alpha}{2}\left\| \mathcal{T\!-\!W} \right\|_{0}.
\end{equation} 
For the loss function of this subproblem, we observe that when $\mathcal{W}_{ijk} = 1$, the pixel-wise loss is $\frac{1}{2}(\mathcal{Y}_{ijk} - \mathcal{X}_{ijk})^2$, while for $\mathcal{W}_{ijk} = 0$, it becomes a constant $\frac{\alpha}{2}$. Hence, if $\frac{\alpha}{2} < \frac{1}{2}(\mathcal{Y}_{ijk} - \mathcal{X}_{ijk})^2$, the weighted loss is smaller than its unweighted counterpart. This mechanism automatically down-weights pixels with large residuals, which typically correspond to strong noise or outliers.

In practice, the weight update in our full FRHD model (Eq.~\eqref{update:W}) follows a similar principle but operates in a continuous manner. Specifically, the weights are computed from the reconstruction error ($\mathbf{Y} - \mathbf{UV}^\top$) and the noise variables $\mathbf{S}$ and $\mathbf{D}$, following an inverse relationship with the error magnitude. As illustrated in Fig.~\ref{weight_motivation}, regions with higher error—indicative of severe noise contamination—receive lower weights. This spatially and spectrally adaptive suppression prevents heavily corrupted pixels from unduly influencing the model fit, thereby substantially improving robustness under extreme noise.

The above analysis employs the $\ell_0$-norm in Eq.~\eqref{pixelweighted} to induce sparsity on weight deviations, which corresponds to setting $\mathcal{J}(\mathcal{W}) = ||\mathcal{T} - \mathcal{W}||_0$ in the general framework of Definition~3. It is important to note that this choice represents only one particular instantiation of the proposed paradigm. In practice, $\mathcal{J}(\mathcal{W})$ can be flexibly designed to achieve different properties: for instance, using an $\ell_1$-norm promotes convex sparsity and facilitates optimization, while an $\ell_2$-norm encourages smoother spatial-spectral weight variations. This flexibility underscores the generality of the proposed framework, which can be readily integrated into a broad class of HSI restoration models. Further discussion on the advantages of the proposed pixel-wise weighting scheme is provided in the Supplementary Materials.

\subsection{Proposed FRHD model.}
The Noise prior reduction HSI denoising framework accounts for multiple types of noise while significantly reducing the number of regularization parameters. Meanwhile, the proposed spatial–spectral adaptive pixel-weighting strategy balances the fidelity term and prior regularization, thereby improving the accuracy of the iterative solution. Based on this, we construct the FRHD model by integrating the adaptive pixel-wise fidelity term into the noise prior reduction framework and applying efficient and accurate RCTV regularization to the noisy HSI $\mathcal{X}$. Overall, in the formulation of problem~(\ref{PWHSI_denoising}), we define:
\begin{align} 
\left\{
\begin{aligned}
&\mathcal{R}(\mathcal{X}) = \left\| \mathcal{X} \right\|_\text{RCTV}, \lambda \mathcal{P}(\mathcal{N}) = \beta \|\mathcal{S}\|_1 \!+\! \gamma \|\mathcal{D}\|_{2,1};  \\ 
&\mathcal{J}(\mathcal{W}) \!=\! \left\|  \mathcal{\mathcal{T} \!-\!W} \right\|^2_{F}, \hspace{0.1em} \text{and} \hspace{0.4em} \mathcal{W} \odot \mathcal{(Y \!-\! X) \!= \! S \!+\! D},
\end{aligned}
\right.
\end{align}
and expand the tensor into matrix form, where $\mathcal{W} \in [0,1]^{M \times N \times B}$ . 

Thus, the final form of the proposed FRHD model is formulated as follows:
\begin{align} \label{UHSI_LRTV}
& \underset{\mathbf{U},\mathbf{V},\mathbf{W},\mathbf{S},\mathbf{D}}{\text{min}}  \!  \ \sum\limits_{i=1}^{2}{{{\tau }_{i}}}{{\left\| {{\nabla }_{i}}(\mathbf{U}) \right\|}_{1}} \!+\! \alpha {{\left\| \mathbf{\bm{\mathfrak{T}}\!-\!W} \right\|}^2_{F}} \!+\! \beta {{\left\| \mathbf{S} \right\|}_{1}} \!+\! \gamma {{\left\| \mathbf{D} \right\|}_{2,1}} \notag \\
& s.t. \hspace{0.3em} \mathbf{W  \odot  (Y - X) = S + D},  \mathbf{X=UV^{\top}}, \mathbf{V^{\top}V=I},\text{and} \hspace{0.2em} \mathbf{W} \in [0,1]^{MN \times B},
\end{align}
where $\tau_i$, $\alpha$, $\beta$ and $\gamma$ are the penalty parameter, $\mathbf{X=UV^{\top}}$ is the restored clean HSI, $\mathbf{U}$ is the representative coefficients matrix, $\bm{\mathfrak{T}}$ denotes a matrix with all elements equal to 1, which is not necessarily square. Fig. \ref{Flowchart} illustrates the proposed framework and FRHD method for HSI denoising.

\section{Optimization algorithm}   \label{sect4}
\subsection{Alternating Optimization for FRHD Model.}
The solution of Model (\ref{UHSI_LRTV}) is relatively complex, we introduce the auxiliary variables $\mathbf{H}_1,\mathbf{H}_2$ to relax the formulation:
\begin{align}    \label{total_objf}
\!\! \underset{\mathbf{U}\!, \mathbf{V}\!, \mathbf{W}\!,\mathbf{S}\!,\mathbf{D}, \mathbf{H}_i\!}{\mathop{\min }}\ & \hspace{-0.4em} \sum\limits_{i=1}^{2}{{{\tau }_{i}}}{{\left\| {\mathbf{H}_i} \right\|}_{1}} \! + \alpha {{\left\| \mathbf{\bm{\mathfrak{T}}-W} \right\|}^2_{F}} + \beta {{\left\| \mathbf{S} \right\|}_{1}}+ \gamma {{\left\| \mathbf{D} \right\|}_{2,1}} \notag \\
s.t. \hspace{0.7em} &   \mathbf{W\odot (Y-UV^{\top})=S+D}, {{\nabla }_{i}}(\mathbf{U})=\mathbf{H}_i \hspace{0.2em} (i\!=\!1,2), \mathbf{V^{\top}V=I},\text{and} \hspace{0.2em} \mathbf{W} \in [0,1]^{MN \times B}.
\end{align}

The problem (\ref{total_objf}) is a highly challenging optimization problem, as it involves optimizing a multivariable set $\left\{\mathbf{U}^*, \mathbf{V}^*, \mathbf{W}^*, \mathbf{S}^*, \mathbf{D}^*, \{\mathbf{H}_i^*\}_{i=1}^2 \right\}$. To facilitate the solution, we employ the well-known alternating direction method of multipliers (ADMM) and define its augmented Lagrangian function $\bm{\mathcal{L}}$ as:
\begin{align}    \label{Finall_lagrange}
\!\! \bm{\mathcal{L}}&({\left \{\mathbf{U^*,V^*, W^*, S^*,D^*},\{\mathbf{H}_i^*\}^2_{i=1},\{\mathbf{\Lambda}_i^*\}^3_{i=1}\right \}}) \notag\\
&\!\!\!:= \!\!\!\underset{\mathbf{U}\!, \mathbf{V}\!, \mathbf{W}\!,\mathbf{S}\!,\mathbf{D}, \mathbf{H}_i\!}{\mathop{\arg \min }}\ \sum\limits_{i=1}^{2}{{{\tau }_{i}}}{{\left\| {\mathbf{H}_i} \right\|}_{1}} +\beta {{\left\| \mathbf{S} \right\|}_{1}} \!+\! \gamma {{\left\| \mathbf{D} \right\|}_{2,1}} + \frac{\alpha}{2} {{\left\| \bm{\mathfrak{T}}-\mathbf{W} \right\|}^2_{F}} +\! \frac{\rho}{2} \sum\limits_{i=1}^{2}{{{\tau }_{i}}}{{\left\| {{{\nabla }_{i}}(\mathbf{U})\!-\! \mathbf{H}_i} \! + \! \frac{\mathbf{\Lambda}_i}{\rho} \right\|}_{F}^{2}}  \notag \\
 & +  \! \frac{\rho}{2} \! \left\| \mathbf{W\! \odot \!(Y\!-\!UV^{\top}\!)\!-\!S\!-\!D}+\! \frac{\mathbf{\Lambda}_3}{\rho} \right\|_{F}^{2}  \hspace{0.1em} s.t. \hspace{0.1em}  \mathbf{V^{\top}V\!=\!I},\text{and} \hspace{0.2em} \mathbf{W} \in [0,1]^{MN \times B}. 
\end{align}
where $\rho$ is the proximal parameter, and $\mathbf{{\Lambda}_i}$ are the Lagrange multipliers.

In light of this, we now discuss the subproblems associated with each optimization variable.

\vspace{4pt}
$\textbf{1)}$ $\mathbf{H}_i$ $\textbf{Subproblems}$: By fixing the variables $\mathbf{U}$,$\mathbf{V}$,$\mathbf{W}$,$\mathbf{S}$,$\mathbf{D}$, we solve for the auxiliary variable $\mathbf{H}_i$. The corresponding suboptimization problem can be formulated as:
\begin{align}
{{\mathbf{H}_i}}&\!=\!  \underset{\mathbf{H}_i}{\mathop{\arg \min }}\,\sum\limits_{i=1}^{2}{{{\tau }_{i}}}{{\left\| {\mathbf{H}_i} \right\|}_{1}} \!+\! \frac{\rho }{2}\left\| {{\nabla }_{i}}(\mathbf{U}) \!-\! \mathbf{H}_i \!+\! \frac{\mathbf{\Lambda}_i}{\rho}  \right\|_{F}^{2},
\end{align}
where $\mathbf{\Lambda}_i$ is the Lagrange multiplier. This is a standard $\ell_1$-norm regularization problem, which can be efficiently solved using the soft-thresholding algorithm \cite{donoho1995noising}:
\begin{align}   \label{update:H}
\mathbf{H}_i=\mathcal{S}_{{\tau}_{i}/\rho}({\nabla }_{i}(\mathbf{U})+\frac{\mathbf{\Lambda}_i}{\rho}).
\end{align}

$\textbf{2)}$ $\mathbf{U}$ $\textbf{Subproblems}$: Similarly, by fixing all other variables, the suboptimization problem for $\mathbf{U}$ can be stated as:
\begin{align}  \label{uini}
\mathbf{U}_i =  \hspace{0.2em} \underset{\mathbf{U}_i}{\mathop{\arg \min }}\ \frac{\rho}{2} \left\| \mathbf{W  \odot  (Y - UV^{\top}\!) - S - D} +  \frac{\mathbf{\Lambda}_3}{\rho} \right\|_{F}^{2}+ \frac{\rho}{2} \sum\limits_{i=1}^{2}{{{\tau }_{i}}}{{\left\| {{{\nabla }_{i}}(\mathbf{U})-\mathbf{H}_i} \!+\! \frac{\mathbf{\Lambda}_i}{\rho} \right\|}_{F}^{2}}.
\end{align}

Taking the derivative of equation (\ref{uini}) with respect to $\mathbf{U}$, we get:
\begin{align} \label{usubp}
\left( \rho \mathbf{I} + \rho   \sum_{i=1}^{2} {\nabla }_{i}^{\top}{\nabla }_{i}  \right) &  (\mathbf{U})  =  \rho (\mathbf{Y - (S + D)/W} + {\frac{\mathbf{\Lambda}_3}{\rho}})\mathbf{V}  + \sum_{i=1}^{2} {\nabla }_{i}^{\top}(\rho \mathbf{H}_i - \mathbf{\Lambda}_i).
\end{align}

By applying the Fourier transform to both sides of equation (\ref{usubp}) and utilizing the convolution theorem, the closed-form solution for $\mathbf{U}$ is easily deduced as \cite{NIPS2009}:
\begin{align}  \label{update:U}
\begin{cases}
\mathbf{R} = \sum_{i=1}^{2} \mathcal{F}(\mathbf{F}_i)^\ast \odot \mathcal{F}(\text{fold}(\rho \mathbf{H}_i)-\mathbf{\Lambda}_i), \\[3pt]
\mathcal{U} = \mathcal{F}^{-1}\left(\frac{\mathcal{F}(\text{fold}((\mathbf{Y-(S+D)/W+\mathbf{\Lambda}_3/ \rho)}\mathbf{V}))+ \mathbf{R}} {\rho \bm{\mathfrak{T}} + \rho(|\mathcal{F}(\mathbf{F}_1)|^2+ |\mathcal{F}(\mathbf{F}_2)|^2 )}\right), \\[3pt]
\mathbf{U} = \text{unfold}_3(\mathcal{U}),
\end{cases}
\end{align}
where $\mathcal{F}(\cdot)$ is the Fourier transform, $\mathbf{F}_i$ is the differential filter, and $|\cdot|^2$ is the element-wise squaring operation.

\vspace{4pt}
$\textbf{3)}$ $\mathbf{V}$ $\textbf{Subproblems}$: Fixing the other variables, the optimization of $\mathbf{V}$ can be expressed as follows:
\begin{align}  \label{svd}
{\mathbf{V}}\!=\! \underset{\mathbf{V}}{\mathop{\arg \min }}\,\frac{\rho }{2}\left\| \mathbf{W \! \odot \! (Y\!-\!UV^{\top}\!)\!-\!S\!-\!D}\!+\!\!{\frac{\mathbf{\Lambda}_3}{\rho}} \right\|_{F}^{2}   s.t. \hspace{0.1em} {{\mathbf{V}}^{\top }}\!\mathbf{V} \!=\! \mathbf{I}.
\end{align}

The optimization problem for $\mathbf{V}$ is equivalent to the following formula:
\begin{align}  \label{equivalent}
{\mathbf{V}}= \underset{{{\mathbf{V}}^{\top }}\mathbf{V}=\mathbf{I}}{\mathop{\arg \max }}\, \!<\! \mathbf{Y\!-\!(S\!+\!D)/W} \!+\! {\frac{\mathbf{\Lambda}_3}{\rho}},\mathbf{UV^{\top}}\!\!>\!\!.
\end{align}

Then, we can perform SVD to efficiently obtain the closed-form solution:
\begin{align} \label{update:V}
\begin{cases}
[\mathbf{B}, \mathbf{\Sigma}, \mathbf{C}] = \text{SVD}\left((\mathbf{Y\!-\!(S\!+\!D)/W}+\! {\frac{\mathbf{\Lambda}_3}{\rho}})^\top \mathbf{U}\right), \\
\mathbf{V} = \mathbf{B} \mathbf{C}^\top.
\end{cases}
\end{align}

$\textbf{4)}$ $\mathbf{S}$ $\textbf{Subproblems}$: The subproblem for $\mathbf{S}$ can be described as follows:
\begin{align}
{\mathbf{S}}\!=\! \underset{\mathbf{S}}{\mathop{\arg \min }}\,\frac{\rho }{2}\left\| \mathbf{W \! \odot \! (Y\!-\!UV^{\top}\!)\!-\!S\!-\!D}\!+\!\!{\frac{\mathbf{\Lambda}_3}{\rho}} \right\|_{F}^{2} \!+\! \beta {{\left\| \mathbf{S} \right\|}_{1}},
\end{align}
this is also a typical $\ell_1$-norm problem, which can be solved using the soft-thresholding algorithm:
\begin{align}   \label{update:S}
\mathbf{S}=\mathcal{S}_{\beta/\rho}(\mathbf{W\! \odot \!(Y\!\!-\!\!UV^{\top}\!)\!-\!D}+\!{\frac{\mathbf{\Lambda}_3}{\rho}}).
\end{align}

$\textbf{5)}$ $\mathbf{D}$ $\textbf{Subproblems}$: By fixing all other variables, the subproblem for $\mathcal{D}$ can be expressed as:
\begin{align}
{\mathbf{D}}\!=\! \underset{\mathbf{D}}{\mathop{\arg \min }}\, \frac{\rho }{2}\left\| \mathbf{W \! \odot \! (Y\!-\!UV^{\top}\!)\!-\!S\!-\!D}\!+\!\!{\frac{\mathbf{\Lambda}_3}{\rho}} \right\|_{F}^{2} \!+\! \gamma {{\left\| \mathbf{D} \right\|}_{2,1}}.
\end{align}

According to the Lemma 4.1 of \cite{liu2013robust}, the solution to the $\ell_{2,1}$-norm problem is:
\begin{align}   \label{update:D}
\mathbf{D}\left( :,\mathbf{j} \right)=\left\{ 
\begin{matrix}
\frac{\left\| \mathbf{\Omega_j} \right\|_{F}^{2}-\gamma }{\left\| \mathbf{\Omega_j} \right\|_{F}^{2}}, \hspace{0.8em}  \text{if} \hspace{0.4em}  \gamma \le \left\| \mathbf{\Omega_j} \right\|_{F}^{2};   \\
\hspace{0.1em} 0,    \hfill    \text{otherwise}.  \\
\end{matrix} \right.
\end{align}
where $\mathbf{\Omega_j}=\left\{\mathbf{W}\! \odot \!(\mathbf{Y}- \mathbf{U}{\mathbf{V}}^{\top})\right\}\mathbf{(:,j)}-\mathbf{S(:,j)}+{\frac{\mathbf{\Lambda}_3(:,\mathbf{j})}{\rho}}$.\\

$\textbf{6)}$ $\mathbf{W}$ $\textbf{Subproblems}$: Extracting all terms related to $\mathcal{W}$, we obtain the following subproblem:
\begin{align}    
{\mathbf{W}}\!=\! \underset{\mathbf{W}}{\mathop{\arg \min }}\, \frac{\rho }{2}\left\| \mathbf{W \! \odot \! (Y\!-\!UV^{\top}\!)\!-\!S\!-\!D}\!+\!\!{\frac{\mathbf{\Lambda}_3}{\rho}} \right\|_{F}^{2} \!\!\!+\! \frac{\alpha}{2} {{\left\| \bm{\mathfrak{T}}- \mathbf{W} \right\|}^2_{F}},s.t. \hspace{0.2em} \mathbf{W} \in [0,1]^{MN \times B}.
\end{align}

Since $\odot$ is the Hadamard product, this leads to a straightforward least squares problem, with the solution given by:
\begin{align}   \label{update:W}
\mathbf{W}= \mathcal{P}_{[0,1]}\left( \frac{\alpha \bm{\mathfrak{T}} + \left[ \rho \hspace{0.2em} (\mathbf{S+D-{\frac{\mathbf{\Lambda}_3}{\rho}}})\odot(\mathbf{Y-UV}^{\top}) \right]}{\alpha \bm{\mathfrak{T}}+ \rho\left(\mathbf{Y-UV}^{\top}\right)\odot(\mathbf{Y-UV}^{\top})} \right).
\end{align}
Here, the division is performed element-wise, and $\mathcal{P}_{[0,1]}$ denotes entrywise projection onto $[0,1]$. In practice, we observe that the updated weight $\mathbf{W}$ inherently satisfies the $[0,1]$ interval constraint without requiring explicit projection. This empirical boundedness can be attributed to the experimental setup—both input images and added noise were normalized to $[0,1]$—and the structure of the closed‑form solution, where the denominator remains strictly positive and empirically dominates the numerator. Throughout all experiments, tensor $\mathcal{W}$ consistently remained within this physically interpretable range, confirming its validity as a pixel‑wise fidelity weight.

$\textbf{7) Lagrange Multiplier Update}$: Finally, the Lagrange multipliers are updated as follows:
\begin{align}  \label{update:Multipliers}
\begin{cases}
\mathbf{\Lambda}_i = \mathbf{\Lambda}_i + \rho ({{\nabla }_{i}}(\mathbf{U})-\mathbf{H}_i), i=1,2, \\
\mathbf{\Lambda}_3 = \mathbf{\Lambda}_3 + \rho (\mathbf{W \!\odot\! (Y\!-\!UV^{\top})\!-\!S\!-\!D}),\\
\rho = \eta \rho.
\end{cases}
\end{align}

From the closed-form solution of $\mathbf{W}$ in (\ref{update:W}), it is evident that the weight is inversely proportional to the residual $(\mathbf{Y} - \mathbf{U}\mathbf{V}^\top)$ between the observed data and the reconstructed image. This design adaptively assigns smaller weights to pixels with larger residuals, thereby reducing their contribution to the fidelity term and effectively suppressing extreme noise perturbations.

After completing all the iterations, the clean HSI $\mathbf{X = UV^{\top}}$ can be obtained. The entire optimization process is summarized in Algorithm \ref{alg:FRHD}.

\begin{figure}[!t] 
\centering
\subfloat[]{\includegraphics[width=0.9in]{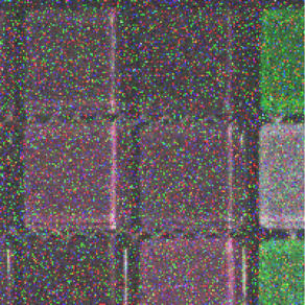}%
\label{case1}}
\hspace{1em}
\subfloat[]{\includegraphics[width=0.9in]{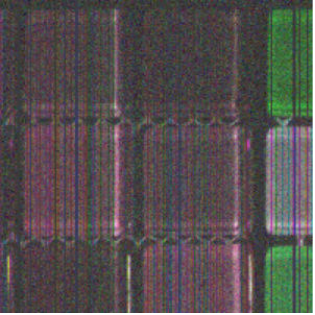}%
\label{case2}}
\hspace{1em}
\subfloat[]{\includegraphics[width=0.9in]{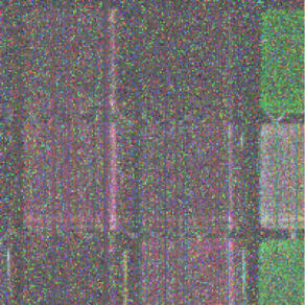}%
\label{case3}}
\hspace{1em}
\subfloat[]{\includegraphics[width=0.9in]{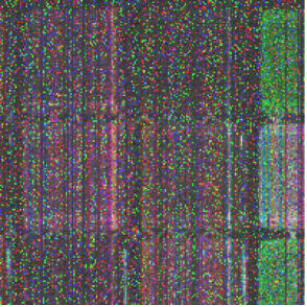}%
\label{case4}}
\hspace{1em}
\subfloat[]{\includegraphics[width=0.9in]{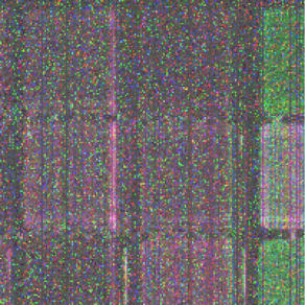}%
\label{case5}}
\caption{Five different levels of mixed noise pollution on the CAVE datasets, the R, G, and B channels correspond to spectral bands 14, 23, and 16, respectively. (a) Case 1; (b) Case 2; (c) Case 3; (d) Case 4; (e) Case 5.}
\label{noise_case}
\end{figure}

\subsection{Computational complexity analysis.}
In this subsection, we briefly analyze the computational complexity of the proposed FRHD algorithm. As outlined in Algorithm \ref{alg:FRHD}, the primary computational cost per iteration arises from steps 2-8. Specifically, steps 2 and 5 involve soft-thresholding operators, with an update complexity of $\mathcal{O}(MNB)$. Step 3, which updates $\mathbf{U}$ via a simple FFT, has a complexity of $\mathcal{O}(MNR\log(MN))$. Additionally, step 4 entails an SVD of a reduced matrix, which incurs a cost of $\mathcal{O}(BR^2)$, where $R$ denotes the rank of the SVD. Finally, steps 6, 7, and 8 involve straightforward matrix operations, contributing an overall complexity of approximately $C \cdot \mathcal{O}(MNB)$. Accordingly, the total computational complexity of the proposed algorithm is: $\mathcal{O}(MNB + BR^2 + MNR \log(MN))$. In summary, the proposed refined noise prior and adaptive pixel-wise weighting strategy incur no additional computational complexity. Moreover, the fidelity constraint avoids large-scale SVD computations and applies TV regularization only to $\mathbf{U}$, thus reducing computational overhead.

{\centering
\begin{minipage}{\linewidth}
\renewcommand{\algorithmiccomment}[1]{$\triangleright$ #1} 
\begin{algorithm}[H]
\caption{ADMM solver for the FRHD model.}
\label{alg:FRHD}
\begin{algorithmic}[1]
\item[] \hspace{-0.5cm} \textbf{Input:} Observed image $\mathcal{Y} \! \in \! \mathbb{R}^{M  \times  N  \times  B}$,  SVD rank $r$, regularization parameters $\tau$\footnotemark, $\beta,\gamma$, and other fixed model parameters $\alpha=1$, $\rho = \frac{1}{\lambda_{\text{max}}(\mathbf{Y})}$, $\eta = 2$, $\epsilon = 10^{-5}$, $\text{maxIter} = 100$.
\item[] \hspace{-0.5cm} \textbf{Initialization:} Unfold the image $\mathcal{Y}$ and weight  tensor $\mathcal{W}=\mathfrak{T}$ along mode-3 to obtain matrix form, and perform SVD on the unfolded $\mathbf{Y}   \in   \mathbb{R}^{MN  \times   B}$ to obtain the initial $\mathbf{U} \! \in \! \mathbb{R}^{MN   \times  R}$ and $\mathbf{V}  \in  \mathbb{R}^{B   \times   R}$.
\WHILE{convergence criterion is not met}
\STATE Update $\mathbf{H}_i$.  \hspace{8.18em}   \COMMENT{By (\ref{update:H})}.
\STATE Update $\mathbf{U}$.    \hspace{8.55em}   \COMMENT{By (\ref{update:U})}.
\STATE Update $\mathbf{V}$.    \hspace{8.55em}   \COMMENT{By (\ref{update:V})}.
\STATE Update $\mathbf{S}$.    \hspace{8.79em}   \COMMENT{By (\ref{update:S})}.
\STATE Update $\mathbf{D}$.   \hspace{8.58em}    \COMMENT{By (\ref{update:D})}.
\STATE Update $\mathbf{W}$.    \hspace{8.27em}   \COMMENT{By (\ref{update:W})}.
\STATE Lagrange Multipliers update.  \hspace{1.12em}   \COMMENT{Via (\ref{update:Multipliers})}.
\STATE Determine the convergence conditions:\\
\hspace{0.3cm}  $\left\| {{{\nabla }_{i}}(\mathbf{U})\!-\! \mathbf{H}_i} \right\|_{F}^{2} \! /\! \left\| \mathbf{Y} \right\|_{F}^{2} \leq \epsilon, i=1,2,$ \\ 
\hspace{0.3cm}  and $\left\| \mathbf{W\! \odot \!(Y\!-\!UV^{\top}\!)\!-\!S\!-\!D} \right\|_{F}^{2}\! /\! \left\| \mathbf{Y} \right\|_{F}^{2}  \leq \epsilon.$
\STATE $iter \leftarrow iter + 1.$
\ENDWHILE
\item[] \hspace{-0.5cm} \textbf{Output:} Recovered image $\mathcal{X}= \text{fold3}(\mathbf{U}\mathbf{V^{\top}}) \in \mathbb{R}^{M \times N \times B}$.
\end{algorithmic}
\end{algorithm}
\vspace{-1.8em}
\footnotetext{ \textbf{\emph{Note}}: The parameter $\tau = [\tau_1, \tau_2]$ is essentially a two-dimensional vector. For simplicity and ease of tuning, we set $\tau_1 = \tau_2$, reducing $\tau$ to a scalar.}
\end{minipage}}

\textbf{\emph{Remark:}} Step~9 in Algorithm~\ref{alg:FRHD} defines the stopping criterion for the FRHD model. In classical regularization theory, the stopping criterion is typically related to the noise level \cite{boyd2011distributed}. Here, Step~9 is designed to ensure practical convergence of the ADMM iterations. We emphasize that this criterion is purely a computational consideration: it does not affect the theoretical equivalence between Definitions~1 and~2 established in Theorem~\ref{thm1}, nor does it serve as a justification for the model reduction. In practice, this stopping rule provides a reliable and efficient termination condition, especially given the known slowdown of ADMM beyond a certain accuracy threshold.

\section{Experiment and result analysis}  \label{sect5}
In this section, we systematically evaluate the performance of the proposed FRHD method through extensive experiments on three widely recognized HSI simulation datasets and a prominent real-world dataset. To comprehensively highlight the advantages of our method, we compare it against eight representative mixed noise HSI restoration methods proposed in the past five years, including SDeCNN~\cite{Maffei2020}, WNLRATV\cite{chen2022hyperspectral}, RCTV \cite{peng2022fast}, FastHyMix \cite{zhuang2023fasthymix}, TPTV \cite{chen2023TPTV}, and CTV-SPCP \cite{peng2024stable}, FBGND~\cite{Liang2024}, and FallHyDe~\cite{ChenYong2024}. Among them, SDeCNN, FastHyMix, and FBGND are deep learning-based methods that leverage either purely data-driven architectures or hybrid model-driven priors. The parameters of all competing methods were fine-tuned based on the original authors' code implementations or the parameter configurations recommended in their respective publications. All experiments were conducted on a high-performance computing platform, equipped with an Intel Core i9-11900 CPU and an NVIDIA RTX 3090 GPU, supported by 24 GB of RAM.

\begin{figure*}[!t]
\centering
\includegraphics[width=6in]{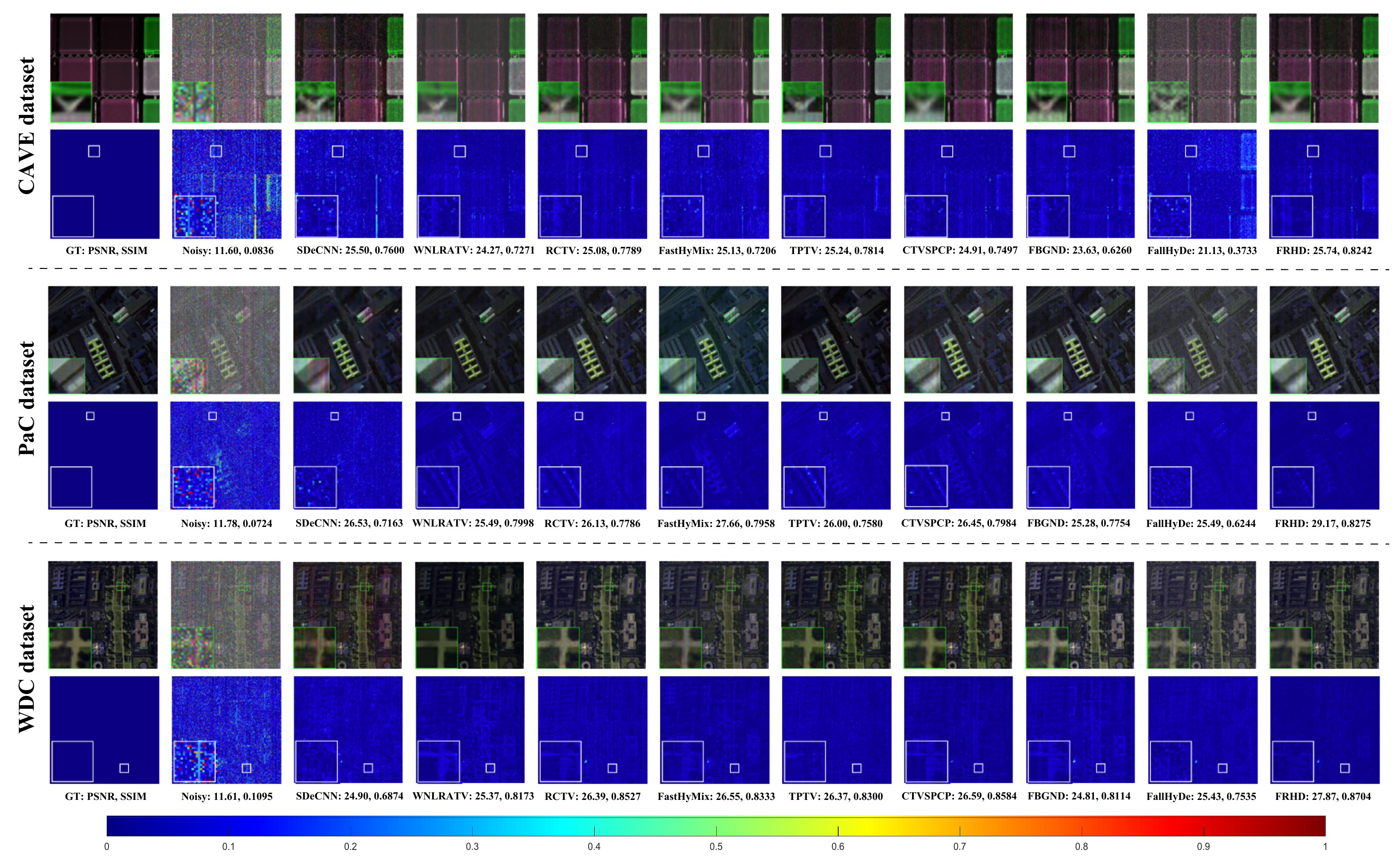}
\caption{Denoising results of all competing methods on the simulated datasets under Case~5. The pseudo-color images are visualized using spectral bands 14/23/16 (CAVE), 6/20/50 (PaC), and 152/106/20 (WDC) for the R-G-B channels, respectively.}
\label{Simulated_results}
\end{figure*}

\begin{table*}[!t]
\fontfamily{ptm}\selectfont 
\caption{Comparison of Denoising Methods on Simulated datasets. Best and second-best results are highlighted in \textbf{bold} and \underline{underline}, respectively. \label{tab:simulated}}
\centering
\renewcommand{\arraystretch}{1}
\Large
\resizebox{0.85\columnwidth}{!}{
\begin{tabular}{c||cccccccccccc}
\Xhline{1.15pt}  \noalign{\vskip 0.3ex}
\rowcolor[HTML]{DCDCDC}\cellcolor[HTML]{FFFFFF} Datasets & Case & Indexes & Noisy  & \parbox{2cm}{\centering SDeCNN\\ ~\cite{Maffei2020}} & \parbox{2.2cm}{\centering WNLRATV\\ \cite{chen2022hyperspectral}} & \parbox{2cm}{\centering RCTV\\ \cite{peng2022fast}} & \parbox{2.1cm}{\centering FastHyMix\\ \cite{zhuang2023fasthymix}} & \parbox{2cm}{\centering TPTV\\ \cite{chen2023TPTV}}  & \parbox{2.4cm}{\centering CTV-SPCP\\ \cite{peng2024stable}} &  \parbox{2cm}{\centering FBGND\\ \cite{Liang2024}} &  \parbox{2cm}{\centering FallHyDe\\ \cite{ChenYong2024}} & \parbox{2cm}{\centering FRHD \\ Ours} \\
\cline{1-13} 
\multirow{20}{*}{\rotatebox[origin=c]{90}{{\boldmath\textbf{CAVE dataset $(200\times 200\times 31)$}}}} 
& \multirow{4}{*}{\parbox{2cm}{Case 1 \\ \small G:0.1+S:0.15}}
& PSNR $\uparrow$   & 12.1544  & 24.6370 & 31.1530 & \underline{31.2021} & 28.9673  & 31.0676  & 30.1415 & 28.2611	 & 21.0550 & \textbf{31.8896} \\
&& SSIM $\uparrow$   & 0.1125   & 0.7956  & 0.8271  & 0.9139  & 0.9011  & \underline{0.9194}  & 0.8493 & 0.8339	& 0.3994  & \textbf{0.9195}  \\
&& SAM $\downarrow$  & 0.8056   & 0.1967 & 0.1887  & \underline{0.1381}  & 0.1933  & 0.1910 & 0.1598  & 0.1949 & 0.3357 & \textbf{0.1202}  \\
& &  \cellcolor[HTML]{E8F0F7}Time   &  \cellcolor[HTML]{E8F0F7} --  &  \cellcolor[HTML]{E8F0F7}5.4347  &  \cellcolor[HTML]{E8F0F7}69.3791 &  \cellcolor[HTML]{E8F0F7}0.9321  &  \cellcolor[HTML]{E8F0F7}0.2315  &  \cellcolor[HTML]{E8F0F7}11.9730  &  \cellcolor[HTML]{E8F0F7}6.4013	 &  \cellcolor[HTML]{E8F0F7}323.4148 &  \cellcolor[HTML]{E8F0F7}0.0401 &  \cellcolor[HTML]{E8F0F7}0.6290  \\			 
\cline{2-13} 
&\multirow{4}{*}{\parbox{2cm}{Case 2 \\ \small G:0.1+D:0.2}}
& PSNR $\uparrow$   & 17.2448   & 25.0311 & 26.0098 & \underline{26.3044} & 24.4271  & \textbf{26.5381}  & 25.7595 & 24.0865 &	22.8881 & 25.0156 \\
&& SSIM $\uparrow$   & 0.2087    & \textbf{0.8381}  & 0.8125  & 0.8219  & 0.7570   & 0.8272   & 0.7835 & 0.6979 &	0.6454 & 0.\underline{8277}  \\
&& SAM $\downarrow$  & 0.7369  & \underline{0.1670} & 0.2049  & 0.1785  & 0.1883  & 0.2144 & 0.1984 & 0.2426  & 0.3101   & \textbf{0.1624}  \\
& &  \cellcolor[HTML]{E8F0F7}Time   &  \cellcolor[HTML]{E8F0F7}--   &  \cellcolor[HTML]{E8F0F7}5.4180  &  \cellcolor[HTML]{E8F0F7}75.0663 &  \cellcolor[HTML]{E8F0F7}0.9221  &  \cellcolor[HTML]{E8F0F7}0.1751  &  \cellcolor[HTML]{E8F0F7}12.3568  &  \cellcolor[HTML]{E8F0F7}6.1942  &  \cellcolor[HTML]{E8F0F7}339.1856 &  \cellcolor[HTML]{E8F0F7}0.0387 &  \cellcolor[HTML]{E8F0F7}1.1010 \\
\cline{2-13} 
&\multirow{4}{*}{\parbox{2.35cm}{Case 3 \\ \small G:0.1+S:0.2+D:0.2}}
& PSNR $\uparrow$   & 11.5339  & \textbf{27.3106} & 25.7458 & 25.9984 & \underline{27.1866}  & 25.9774  & 25.5053 & 24.3361 & 	21.5859 & 26.6675 \\
&& SSIM $\uparrow$   & 0.0845   & 0.8128  & 0.7787  & 0.8062  & \underline{0.8240}  & 0.8205  & 0.7506 & 0.6108 & 	0.3849  & \textbf{0.8315}  \\
&& SAM $\downarrow$  & 0.9345   & \underline{0.1831}  & 0.2317  & 0.1919 & 0.1952  & 0.2398   & 0.2157 & 0.2470  & 0.4016 & \textbf{0.1653}  \\
&& Time  \cellcolor[HTML]{E8F0F7} & \cellcolor[HTML]{E8F0F7}--  &  \cellcolor[HTML]{E8F0F7}5.8830 &\cellcolor[HTML]{E8F0F7}	69.6415 &\cellcolor[HTML]{E8F0F7}	0.8898 &\cellcolor[HTML]{E8F0F7}	0.2276 &\cellcolor[HTML]{E8F0F7}	11.8430 &\cellcolor[HTML]{E8F0F7}	5.6994 &\cellcolor[HTML]{E8F0F7}	321.9835 &\cellcolor[HTML]{E8F0F7}	0.0335 &\cellcolor[HTML]{E8F0F7}	0.7944  \\
\cline{2-13} 
&\multirow{4}{*}{\parbox{2.35cm}{Case 4 \\ \small G:0.2+S:0.1+D:0.1}}
& PSNR $\uparrow$   & 11.3993  & 23.6142 & 21.7691 & 23.9657 & 23.1746 & \underline{24.1669} & 23.6516 & 22.5200 &	20.4308  & \textbf{24.4118} \\
&& SSIM $\uparrow$   & 0.0762    & 0.7064  & 0.6294  & 0.7674  & 0.7211   & \underline{0.7707}  & 0.7445 & 0.6378 &	0.3475  & \textbf{0.8200}  \\
&& SAM $\downarrow$  & 0.9319   & 0.2308  & 0.3316  & \underline{0.1994}  & 0.2439  & 0.2546  & 0.2032 & 0.2420 & 0.4038 & \textbf{0.1654}  \\
&& \cellcolor[HTML]{E8F0F7} Time   &\cellcolor[HTML]{E8F0F7} --  &\cellcolor[HTML]{E8F0F7} 4.9360	 &\cellcolor[HTML]{E8F0F7} 67.3621  &\cellcolor[HTML]{E8F0F7}	0.8791  &	\cellcolor[HTML]{E8F0F7} 0.1800  &\cellcolor[HTML]{E8F0F7}	11.3497  &\cellcolor[HTML]{E8F0F7}	5.5136  &\cellcolor[HTML]{E8F0F7}	359.6305  &\cellcolor[HTML]{E8F0F7}	0.0346  &\cellcolor[HTML]{E8F0F7}	0.7972  \\
\cline{2-13} 
&\multirow{4}{*}{\parbox{2.35cm}{Case 5 \\ \footnotesize G:0.15+S:0.15+D:0.15}} 
& PSNR $\uparrow$   & 11.6046  & \underline{25.4982} & 24.2657 & 25.0842 & 25.1295  & 25.2404   & 24.9095 & 23.6310	 & 21.1309 & \textbf{25.7411} \\
&& SSIM $\uparrow$   & 0.0836    & 0.7600  & 0.7281  & \underline{0.7789}  & 0.7206   & 0.7815   & 0.7497 & 0.6260 & 0.3733 & \textbf{0.8242}  \\
&& SAM $\downarrow$  & 0.9221 & 0.2089 & 0.2335  & \underline{0.1997}  & 0.2298   & 0.2568  & 0.2074 & 0.2521 & 0.3553 & \textbf{0.1694}  \\
&&  \cellcolor[HTML]{E8F0F7}Time   &  \cellcolor[HTML]{E8F0F7}--   & \cellcolor[HTML]{E8F0F7} 5.5871  & \cellcolor[HTML]{E8F0F7}	69.1769 &	 \cellcolor[HTML]{E8F0F7}0.9060 & \cellcolor[HTML]{E8F0F7} 0.2141 & \cellcolor[HTML]{E8F0F7}	11.9354	&  \cellcolor[HTML]{E8F0F7}6.4682 & \cellcolor[HTML]{E8F0F7}	320.6330 & \cellcolor[HTML]{E8F0F7}	0.0407 & \cellcolor[HTML]{E8F0F7}	0.8321 \\
\hline\hline

\multirow{20}{*}{\rotatebox[origin=c]{90}{{\boldmath\textbf{PaC dataset $(300\times 300\times 103)$}}}}
& \multirow{4}{*}{\parbox{2cm}{Case 1 \\ \small G:0.1+S:0.15}}
& PSNR $\uparrow$ & 12.2298 & 24.8823 & 34.0084 & \underline{34.8450} & 31.7299 & 34.1327 & 33.8679  & \textbf{35.5385}  & 24.0416 & 34.5108 \\
&& SSIM $\uparrow$ & 0.0994  & 0.7512 & \underline{0.9254} & 0.9192 & 0.9062 & 0.9060  & 0.9085 & \textbf{0.9355} & 0.5943 & 0.9166 \\
&& SAM $\downarrow$& 0.8268 & 0.1624 & 0.0864 & 0.0863 & 0.1375 & 0.0929  & 0.0871  & \textbf{0.0717} & 0.1981 & \underline{0.0861} \\
&&\cellcolor[HTML]{E8F0F7} Time &\cellcolor[HTML]{E8F0F7} -- &\cellcolor[HTML]{E8F0F7} 37.5005 &\cellcolor[HTML]{E8F0F7} 276.7751 &\cellcolor[HTML]{E8F0F7} 5.7616 &\cellcolor[HTML]{E8F0F7} 0.4564 &\cellcolor[HTML]{E8F0F7} 95.9312 &\cellcolor[HTML]{E8F0F7} 71.9459 &\cellcolor[HTML]{E8F0F7} 2133.67 &\cellcolor[HTML]{E8F0F7} 0.6492 &\cellcolor[HTML]{E8F0F7} 4.3196 \\
\cline{2-13}  
&\multirow{4}{*}{\parbox{2cm}{Case 2 \\ \small G:0.1+D:0.2}}
& PSNR $\uparrow$ & 18.0195 & 26.6129 & 27.5526 & \underline{28.1750} & 27.1438 & 27.8549 & 27.4137  & 26.9077 & 26.9197 & \textbf{29.0473} \\
&& SSIM $\uparrow$ & 0.2246 & 0.8037 & 0.8593 & 0.8502 & \underline{0.8630} & 0.8315  & 0.8386 & 0.8347 & 0.8217  & \textbf{0.8695} \\
&& SAM $\downarrow$& 0.6937  & 0.1342 & \underline{0.1030} & 0.1281 & \textbf{0.0975} & 0.1356   & 0.1344  & 0.1303  & 0.1278 & 0.1159 \\
& &\cellcolor[HTML]{E8F0F7} Time &\cellcolor[HTML]{E8F0F7} -- &\cellcolor[HTML]{E8F0F7} 36.5359 &\cellcolor[HTML]{E8F0F7} 275.7437 &\cellcolor[HTML]{E8F0F7} 5.9733 &\cellcolor[HTML]{E8F0F7} 0.4650 &\cellcolor[HTML]{E8F0F7} 97.2199 &\cellcolor[HTML]{E8F0F7} 72.7873 &\cellcolor[HTML]{E8F0F7} 2332.61 &\cellcolor[HTML]{E8F0F7} 0.5427 &\cellcolor[HTML]{E8F0F7} 6.5807 \\
\cline{2-13}  
&\multirow{4}{*}{\parbox{2.35cm}{Case 3 \\ \small G:0.1+S:0.2+D:0.2}}
& PSNR $\uparrow$ & 11.6527  & 28.0705 & 27.1300 & 27.2854 & \underline{29.1074}  & 27.1198  & 27.8564 & 26.6919 & 25.8819 & \textbf{29.9047} \\
&& SSIM $\uparrow$ & 0.0711 & 0.7687 & 0.8123 & 0.7886 & \textbf{0.8325}  & 0.7775  & 0.8086 & 0.7995 & 0.5825  & \underline{0.8286} \\
&& SAM $\downarrow$& 0.9442 & 0.1546  & \underline{0.1152} & 0.1459 & 0.1352  & 0.1509  & 0.1278 & 0.1283 & 0.2143  & \textbf{0.1143} \\
&& \cellcolor[HTML]{E8F0F7}Time &\cellcolor[HTML]{E8F0F7} --  &\cellcolor[HTML]{E8F0F7} 36.4831 &\cellcolor[HTML]{E8F0F7} 284.4584 &\cellcolor[HTML]{E8F0F7} 6.6499 &\cellcolor[HTML]{E8F0F7} 0.5155 &\cellcolor[HTML]{E8F0F7} 110.0189 &\cellcolor[HTML]{E8F0F7} 77.8764 &\cellcolor[HTML]{E8F0F7} 2341.94 &\cellcolor[HTML]{E8F0F7} 0.5676 &\cellcolor[HTML]{E8F0F7} 6.4989   \\
\cline{2-13}   
&\multirow{4}{*}{\parbox{2.35cm}{Case 4 \\ \small G:0.2+S:0.1+D:0.1}}
& PSNR $\uparrow$ & 11.6556  & 25.2393 & 24.6234 &  \underline{26.0115} & 24.4807  & 25.8435  & 25.6160 & 24.8161 & 24.3879 & \textbf{28.6153} \\
&& SSIM $\uparrow$ & 0.0695 & 0.6649 & 0.7818 & \underline{0.7868} & 0.7530  & 0.7629  & 0.7827 & 0.7670  & 0.5362  & \textbf{0.8289} \\
&& SAM $\downarrow$& 0.9462  & 0.2052  & \underline{0.1242} & 0.1541 & 0.2198  & 0.1663  & 0.1598 & 0.1492  & 0.2424  & \textbf{0.1181} \\
&&\cellcolor[HTML]{E8F0F7} Time &\cellcolor[HTML]{E8F0F7} --  &\cellcolor[HTML]{E8F0F7} 36.6441 &\cellcolor[HTML]{E8F0F7} 281.6717 &\cellcolor[HTML]{E8F0F7} 5.6740 &\cellcolor[HTML]{E8F0F7} 0.4372 &\cellcolor[HTML]{E8F0F7} 106.2056 &\cellcolor[HTML]{E8F0F7} 71.4083 &\cellcolor[HTML]{E8F0F7} 2366.41 &\cellcolor[HTML]{E8F0F7} 0.5264 &\cellcolor[HTML]{E8F0F7} 5.5996  \\
\cline{2-13}  
&\multirow{4}{*}{\parbox{2.35cm}{Case 5 \\ \footnotesize G:0.15+S:0.15+D:0.15}} 
& PSNR $\uparrow$ & 11.7823 & 26.5291 & 25.4944 & 26.1311 & \underline{27.6642}  & 26.0009  & 26.4476  & 25.2826 & 25.4886 & \textbf{29.1734} \\
&& SSIM $\uparrow$ & 0.0724 & 0.7163  & \underline{0.7998} & 0.7786 & 0.7958 & 0.7580  & 0.7984  & 0.7754  & 	0.6244  & \textbf{0.8257} \\
&& SAM $\downarrow$ & 0.9357 & 0.1792 & \underline{0.1178} & 0.1555 & 0.1648 & 0.1651 & 0.1367  & 0.1425	 & 0.2243 & \textbf{0.1172} \\
&&\cellcolor[HTML]{E8F0F7} Time &\cellcolor[HTML]{E8F0F7} -- &\cellcolor[HTML]{E8F0F7} 37.1432 &\cellcolor[HTML]{E8F0F7}	282.1463 & \cellcolor[HTML]{E8F0F7}5.7939 &\cellcolor[HTML]{E8F0F7} 0.4896 &\cellcolor[HTML]{E8F0F7} 96.9824 &\cellcolor[HTML]{E8F0F7} 53.2392 &\cellcolor[HTML]{E8F0F7} 2332.12 &\cellcolor[HTML]{E8F0F7} 	0.5636 &\cellcolor[HTML]{E8F0F7}	5.4440 \\
\hline\hline

\multirow{20}{*}{\rotatebox[origin=c]{90}{{\boldmath\textbf{WDC dataset $(256\times 256\times 191)$}}}} 
&\multirow{4}{*}{\parbox{2cm}{Case 1 \\ \small G:0.1+S:0.15}}
& PSNR $\uparrow$ & 12.3548 & 24.6747 & 33.0144 & 33.9027 & 27.7828  & 35.1606  & 34.9199 & \textbf{35.3021} & 25.1733 & \underline{35.2163} \\
&& SSIM $\uparrow$ & 0.1543 & 0.7675 & 0.9398 & 0.9454 & 0.8747  & \underline{0.9547} & 0.9534  & \textbf{0.9574}  & 0.7865  & 0.9522 \\
&& SAM $\downarrow$& 0.7534 & 0.1998  & 0.0679 & 0.0805 & 0.2195 & 0.0750  & 0.0712 & \textbf{0.0580} & 0.1877  & \underline{0.0652} \\
&& \cellcolor[HTML]{E8F0F7}Time & \cellcolor[HTML]{E8F0F7}-- &\cellcolor[HTML]{E8F0F7} 57.2959 & \cellcolor[HTML]{E8F0F7}328.3862 & \cellcolor[HTML]{E8F0F7}7.3394 & \cellcolor[HTML]{E8F0F7}0.3624 & \cellcolor[HTML]{E8F0F7}150.2327 & \cellcolor[HTML]{E8F0F7}99.1206 & \cellcolor[HTML]{E8F0F7}3140.28 & \cellcolor[HTML]{E8F0F7}1.0527 & \cellcolor[HTML]{E8F0F7}5.7252 \\
\cline{2-13}  
&\multirow{4}{*}{\parbox{2cm}{Case 2 \\ \small G:0.1+D:0.2}}
& PSNR $\uparrow$ & 17.1362 & 25.1720 & 28.0213 & 28.1949 & 25.8987 & \underline{28.4545}  & 27.7349  & 26.3535 & 25.9312 & \textbf{29.2613} \\
&& SSIM $\uparrow$ & 0.2912  & 0.7770  & 0.8838 & \underline{0.9010} & 0.8859 & 0.8902 & 0.8943  & 0.8586  & 0.8820  & \textbf{0.9114} \\
&& SAM $\downarrow$ & 0.6340 & 0.1595 & 0.1189 & 0.1055 & \underline{0.0992}  & 0.1150  & 0.1168 &  0.1254  & 0.1076 & \textbf{0.0959} \\
& & \cellcolor[HTML]{E8F0F7}Time & \cellcolor[HTML]{E8F0F7}-- &\cellcolor[HTML]{E8F0F7} 57.7331 &\cellcolor[HTML]{E8F0F7} 302.3589 &\cellcolor[HTML]{E8F0F7} 7.1882 & \cellcolor[HTML]{E8F0F7}0.3957 & \cellcolor[HTML]{E8F0F7}121.7640 & \cellcolor[HTML]{E8F0F7}102.6495 &\cellcolor[HTML]{E8F0F7} 3496.92 & \cellcolor[HTML]{E8F0F7}1.0226 & \cellcolor[HTML]{E8F0F7} 9.7274 \\
\cline{2-13}  
&\multirow{4}{*}{\parbox{2.35cm}{Case 3 \\ \small G:0.1+S:0.2+D:0.2}}
& PSNR $\uparrow$ & 11.5483 & 26.4651 & 26.3102 & 27.3099 & \underline{27.9142} & 27.3207 & 27.8952 & 26.1975 & 26.8349  & \textbf{28.9983} \\
&& SSIM $\uparrow$ & 0.1106 & 0.7472 & 0.8085 & 0.8531 & 0.8530  & 0.8446 & \underline{0.8690}  & 0.8377 & 0.7838 & \textbf{0.8749} \\
&& SAM $\downarrow$& 0.8674 & 0.1767 & \textbf{0.1087} & 0.1292 & 0.1537  & 0.1362 & 0.1180  & 0.1203 & 0.1775  & \underline{0.1168} \\
& & \cellcolor[HTML]{E8F0F7}Time &\cellcolor[HTML]{E8F0F7} -- &\cellcolor[HTML]{E8F0F7} 57.8441 & \cellcolor[HTML]{E8F0F7}300.2184 &\cellcolor[HTML]{E8F0F7} 7.3095 & \cellcolor[HTML]{E8F0F7}0.3696 & \cellcolor[HTML]{E8F0F7}126.4709 & \cellcolor[HTML]{E8F0F7}102.6412  &\cellcolor[HTML]{E8F0F7} 3443.24 & \cellcolor[HTML]{E8F0F7}1.0185 & \cellcolor[HTML]{E8F0F7}7.5857 \\
\cline{2-13}  
&\multirow{4}{*}{\parbox{2.35cm}{Case 4 \\ \small G:0.2+S:0.1+D:0.1}}
& PSNR $\uparrow$ & 11.4244 & 23.5148 & 25.1901 & 26.1204 & 22.9700 & \underline{26.3263}  & 25.6969 & 24.3378 & 24.0866 & \textbf{27.0675} \\
&& SSIM $\uparrow$ & 0.1027  & 0.6254 & 0.8275 & \underline{0.8582} & 0.7287& 0.8382  & 0.8433   & 0.8067  & 0.7125 & \textbf{0.8650} \\
&& SAM $\downarrow$& 0.8934 & 0.2407  & 0.1774 & \underline{0.1261} & 0.2910 & 0.1446 & 0.1481   & 0.1441  & 0.2376  & \textbf{0.1197} \\
& & \cellcolor[HTML]{E8F0F7}Time & \cellcolor[HTML]{E8F0F7}--  &\cellcolor[HTML]{E8F0F7} 59.0010 & \cellcolor[HTML]{E8F0F7}299.8019 & \cellcolor[HTML]{E8F0F7}7.1369 & \cellcolor[HTML]{E8F0F7}0.3720 & \cellcolor[HTML]{E8F0F7}123.6273 & \cellcolor[HTML]{E8F0F7}102.0561  & \cellcolor[HTML]{E8F0F7}3449.31 & \cellcolor[HTML]{E8F0F7}1.0395  & \cellcolor[HTML]{E8F0F7}7.5995 \\
\cline{2-13}  
&\multirow{4}{*}{\parbox{2.35cm}{Case 5 \\ \footnotesize G:0.15+S:0.15+D:0.15}} 
& PSNR $\uparrow$ & 11.6066  & 24.9023 & 25.3711 & 26.3929 & 26.5545 & 26.3706  & \underline{26.5930}  &  24.8134 & 25.4346 & \textbf{27.8692} \\
&& SSIM $\uparrow$ & 0.1095  & 0.6874 & 0.8173 & 0.8527 & 0.8333  & 0.8300 & \underline{0.8584} & 0.8114  & 0.7535 & \textbf{0.8704} \\
&& SAM $\downarrow$ & 0.8714  & 0.2085 & 0.1300 & 0.1286 & 0.1711 & 0.1442 & \underline{0.1271}   & 0.1343	 & 0.2073  & \textbf{0.1188} \\
& & \cellcolor[HTML]{E8F0F7}Time & \cellcolor[HTML]{E8F0F7}--  &\cellcolor[HTML]{E8F0F7} 55.0380 &\cellcolor[HTML]{E8F0F7} 328.5056 &\cellcolor[HTML]{E8F0F7} 7.3243 &\cellcolor[HTML]{E8F0F7} 0.3678 &\cellcolor[HTML]{E8F0F7} 157.6777 &\cellcolor[HTML]{E8F0F7} 100.2687 & \cellcolor[HTML]{E8F0F7}3325.26	 & \cellcolor[HTML]{E8F0F7}1.0483 &	\cellcolor[HTML]{E8F0F7}7.5320\\
\Xhline{1.25pt}
\end{tabular}}
\end{table*}

\subsection{Simulated Experiments}
To quantitatively assess the denoising performance of the proposed FRHD method, we first perform experiments on simulated hyperspectral datasets with ground truth (GT). Specifically, one multispectral image (MSI) from the CAVE dataset and two HSIs from the PaC and WDC datasets are adopted, under five distinct noise scenarios, to validate the performance of the FRHD model.

\emph{1) Experimental Setting}: The three widely-used benchmark simulation datasets are as follows.
\begin{itemize}
   \item \textbf{CAVE Glass Tiles Dataset}\footnote{\url{https://cave.cs.columbia.edu/repository/Multispectral}}: The glass tiles dataset was collected by the Columbia Imaging and Vision Laboratory (CAVE) at Columbia University. It provides full spectral resolution reflectance data from 400–700 nm with 10 nm intervals. The original dataset size is $512 \times 512 \times 31$. For computational efficiency, we cropped the dataset to $200 \times 200 \times 31$.
   \item \textbf{Harvard PaC Dataset}\footnote{\url{https://www.ehu.eus/ccwintco/index.php/Hyperspectral_Remote_Sensing_Scenes}}: The PaC dataset was acquired by the ROSIS sensor during a flight campaign over Pavia in northern Italy. The original dataset size is $610 \times 340 \times 103$. For visualization convenience and improved computational efficiency, we cropped it to $300 \times 300 \times 103$.
   \item \textbf{Washington DC Mall (WDC) Dataset}\footnote{\url{https://engineering.purdue.edu/~biehl/MultiSpec/hyperspectral.html}}: The WDC dataset, collected by the Virginia Center for Spectral Information Technology Applications using the HYDICE sensor, has dimensions of $256\times 256\times 191$.
\end{itemize}

Additionally, to evaluate the robustness of our algorithm, we designed five representative noise scenarios, defined as follows:
\renewcommand{\labelitemi}{-}  
\begin{itemize}
    \item Case 1: A mixture of Gaussian noise ($\mu=0$, $\sigma^2=0.1$) and $15\%$ sparse salt-and-pepper noise.
    \item Case 2: Gaussian noise ($\mu=0$, $\sigma^2=0.1$) combined with $20\%$ deadline noise.
    \item Case 3: Gaussian noise ($\mu=0$, $\sigma^2=0.2$) in conjunction with $10\%$ sparse salt-and-pepper and deadline noise.
    \item Case 4: Gaussian noise ($\mu=0$, $\sigma^2=0.1$) with $20\%$ sparse salt-and-pepper and deadline noise.
    \item Case 5: Gaussian noise ($\mu=0$, $\sigma^2=0.15$) combined with $15\%$ sparse salt-and-pepper and deadline noise.
\end{itemize}

Cases 1 and 2 consist of Gaussian noise combined with single salt-and-pepper or deadline noise, while Cases 3 to 5 simulate more complex mixtures of Gaussian, salt-and-pepper, and deadline noise at varying intensity levels. These configurations effectively represent the diverse and challenging noise patterns commonly encountered in real-world HSI tasks. Fig.~\ref{noise_case} illustrates the impact of these noise cases.

\emph{2) Comparison indexes}: We employ four quantitative metrics to evaluate the denoising performance of different methods: Peak Signal-to-Noise Ratio (PSNR), Structural Similarity Index (SSIM), and Spectral Angle Mapper (SAM). PSNR and SSIM primarily assess spatial information, with higher values indicating better image quality after denoising. In contrast, SAM evaluates the preservation of spectral information, where lower values reflect superior restoration performance. Additionally, we also provide the running time (in seconds) for all competing algorithms as a reference.

\emph{3) Quantitative Analysis of results}: Table~\ref{tab:simulated} presents the quantitative results on three simulated datasets (CAVE, PaC, and WDC) under various noise scenarios. In addition, Fig.~\ref{Simulated_results} further visualizes the reconstructed pseudo-RGB images under Case~5, along with the corresponding error maps, providing intuitive insights into the denoising effectiveness of each method.

As shown in Table \ref{tab:simulated}, the proposed FRHD method outperforms the competing methods across various test datasets and noise scenarios in most cases. Our method demonstrates exceptional performance even under conditions of enhanced mixed noise, a trend that is further reflected in Fig.~\ref{Simulated_results}. Among the competing methods, certain approaches demonstrate impressive performance in specific scenarios—for example, FastHyMix and FBGND, which incorporate both network- and model-based priors. However, their overall performance may be slightly worse than our method across diverse noise conditions. Notably, FastHyMix is fast because it uses a pre-trained FFDNet and requires no training, whereas FBGND is slower due to additional training and the absence of GPU acceleration. As the number of spectral bands increases, the advantages of our method in both computational efficiency and recovery quality become more pronounced, particularly on the high-dimensional PaC and WDC datasets. This efficiency stems from the integration of subspace representation with the RCTV regularizer, which decouples the high-dimensional fidelity term. For instance, on the PaC dataset ($300 \times 300 \times 103$), our method requires only about 5 seconds.

Overall, the proposed FRHD method exhibits robust performance under diverse mixed noise conditions and consistently outperforms compared methods in terms of both runtime efficiency and denoising quality.

\begin{figure}[!t]
\centering
\includegraphics[width=5.9in]{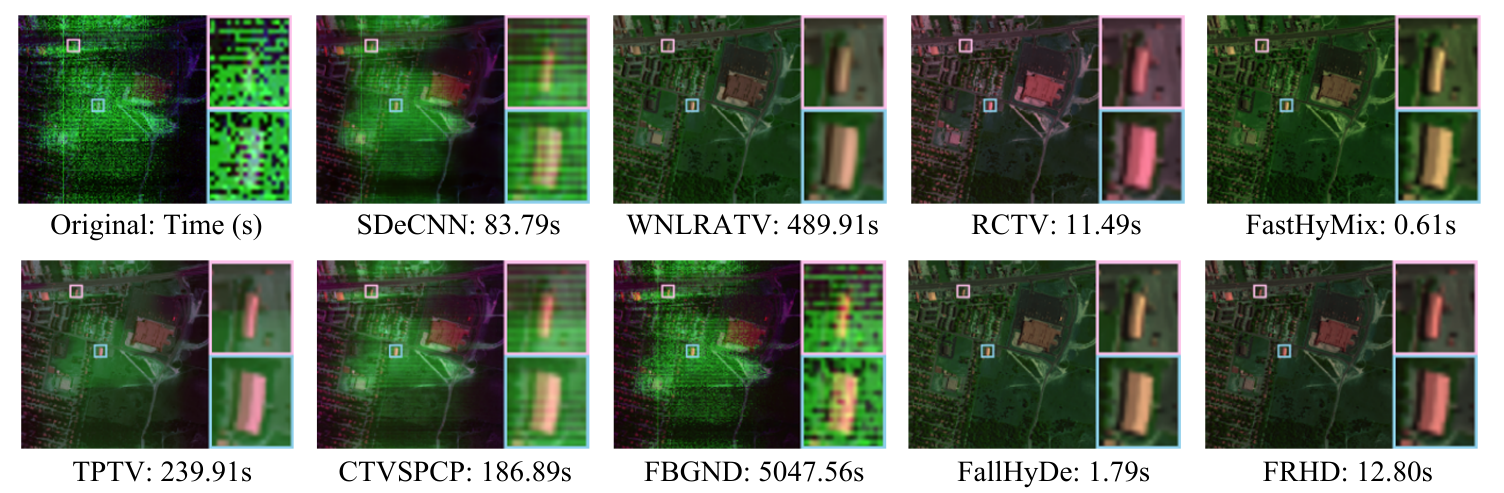}
\caption{Denoising results of all competing methods on the HYDICE Urban city dataset. The R-G-B channels of the color image are composed of spectral bands 2, 150, and 206, respectively.}
\label{urban1}
\end{figure}

\subsection{Real Data Experiments}
To further validate the effectiveness of the proposed FRHD model, we tested its performance on HSI denoising using the well-known real-world HYDICE Urban\footnote{\url{https://www.agc.army.mil/What-we-do/Hypercube}} dataset. This dataset is a widely used benchmark in HSI denoising experiments and is affected by various types of noise, including Gaussian noise, stripe noise, and other unspecified disturbances. The original size of the dataset is $307 \times 307 \times 210$.

Real HSI denoising is more challenging due to the inherent complexity and unknown nature of noise patterns in real-world data. Performance evaluation on real datasets using traditional quantitative metrics is impractical without GT. Fortunately, we can qualitatively assess the denoising effect by visualizing the reconstructed image as a pseudo-RGB representation of the corrupted bands.

Fig. \ref{urban1} presents the pseudo-RGB image of the Urban dataset. As shown in Fig. \ref{urban1}, the proposed method demonstrates excellent denoising performance on the HYDICE Urban dataset, effectively reducing noise while preserving key texture details. It is worth noting that among the compared methods, SDeCNN and FBGND fail to effectively suppress the green background, while the remaining approaches can only partially mitigate its effect. Although WNLRATV, RCTV, and FastHyMix achieve satisfactory denoising results, they still suffer from noticeable color distortion, often introducing greenish or reddish background artifacts. In contrast, our method not only effectively removes various types of noise, but also produces contours that are closer to the ideal true representation. Therefore, the tests on real datasets further validate the effectiveness of the proposed algorithm.

In summary, the proposed FRHD method demonstrates excellent performance and computational efficiency on both simulated and real-world datasets.

\begin{figure}[!t]
\centering
\subfloat[CAVE dataset]{\includegraphics[width=1.65in]{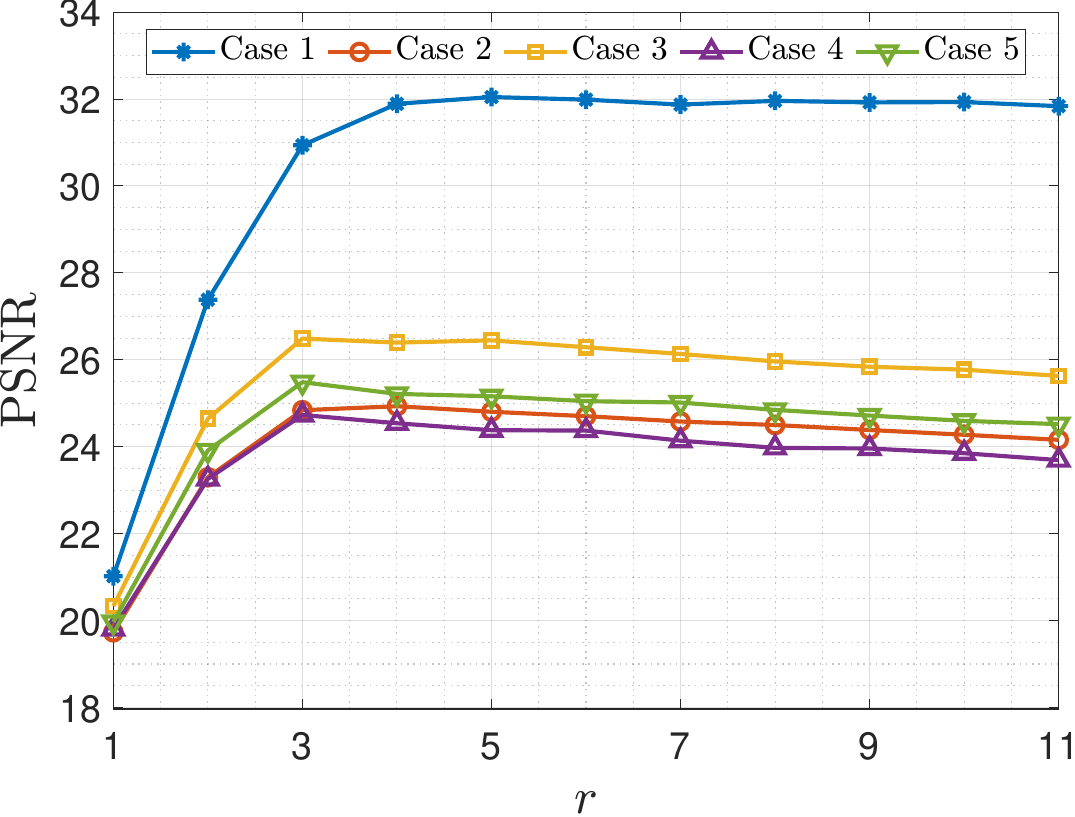}%
\label{rank_para_a}}
\hspace{1.5em}
\subfloat[PaC dataset]{\includegraphics[width=1.65in]{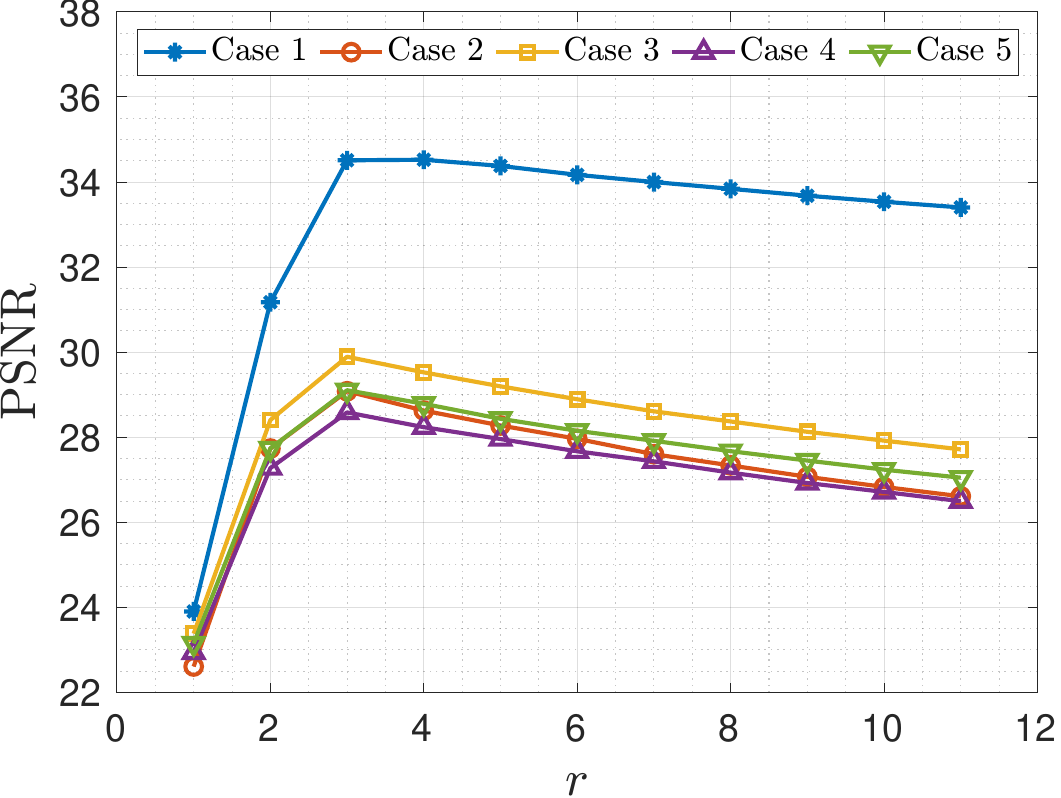}%
\label{rank_para_b}}
\hspace{1.5em}
\subfloat[WDC dataset]{\includegraphics[width=1.65in]{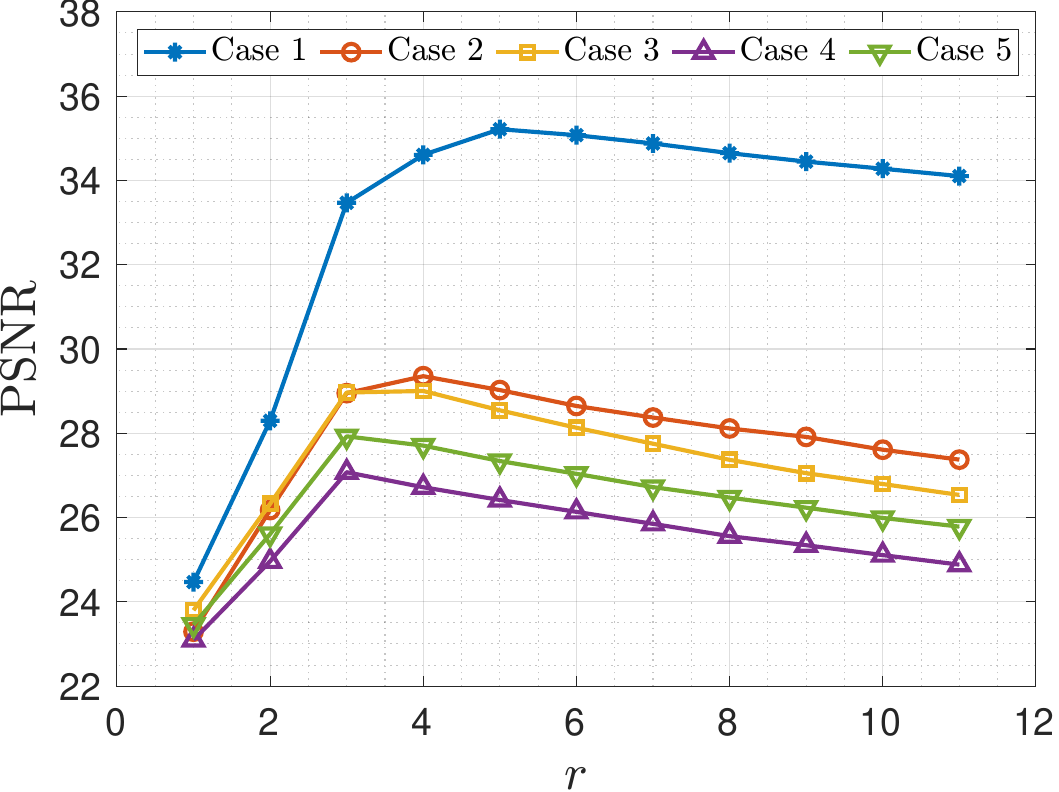}%
\label{rank_para_c}}
\caption{{Sensitivity analysis of FRHD model parameter $r$ under different noise conditions for three datasets.}}
\label{param_r_sensitivity}
\end{figure}

\begin{figure}[!t]
\centering
\subfloat[CAVE dataset]{\includegraphics[width=1.65in]{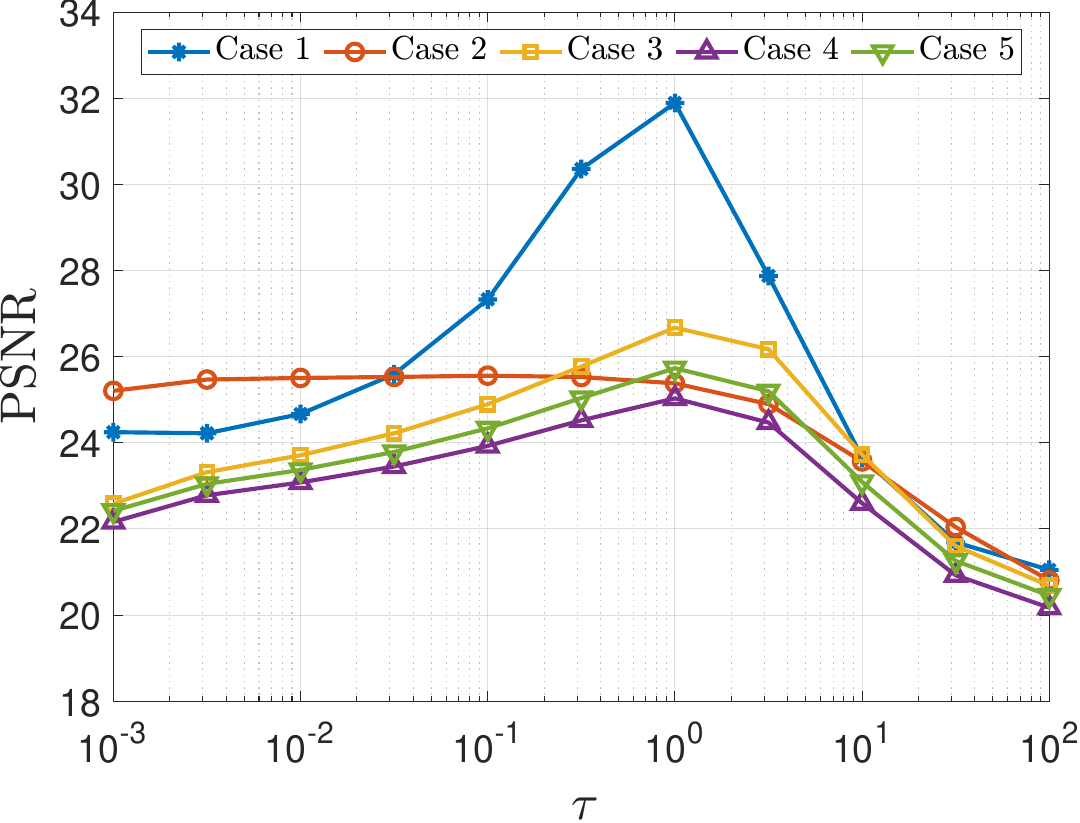}%
\label{tau_para_a}}
\hspace{1.5em}
\subfloat[PaC dataset]{\includegraphics[width=1.65in]{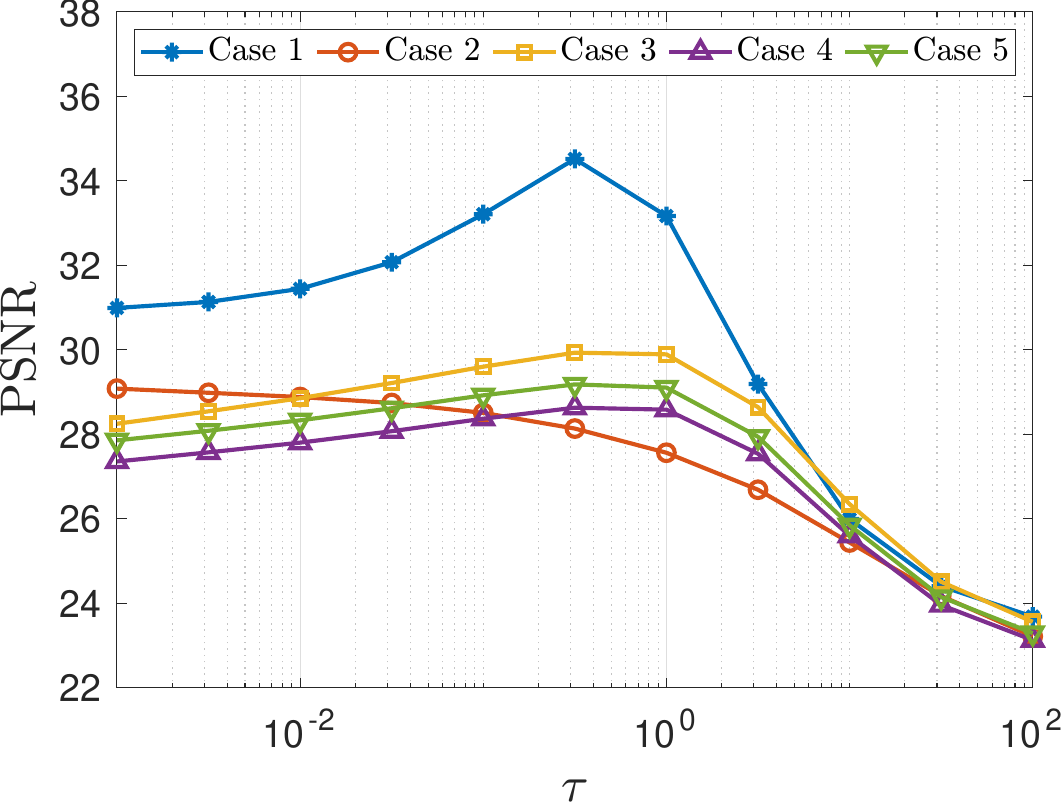}%
\label{tau_para_b}}
\hspace{1.5em}
\subfloat[WDC dataset]{\includegraphics[width=1.65in]{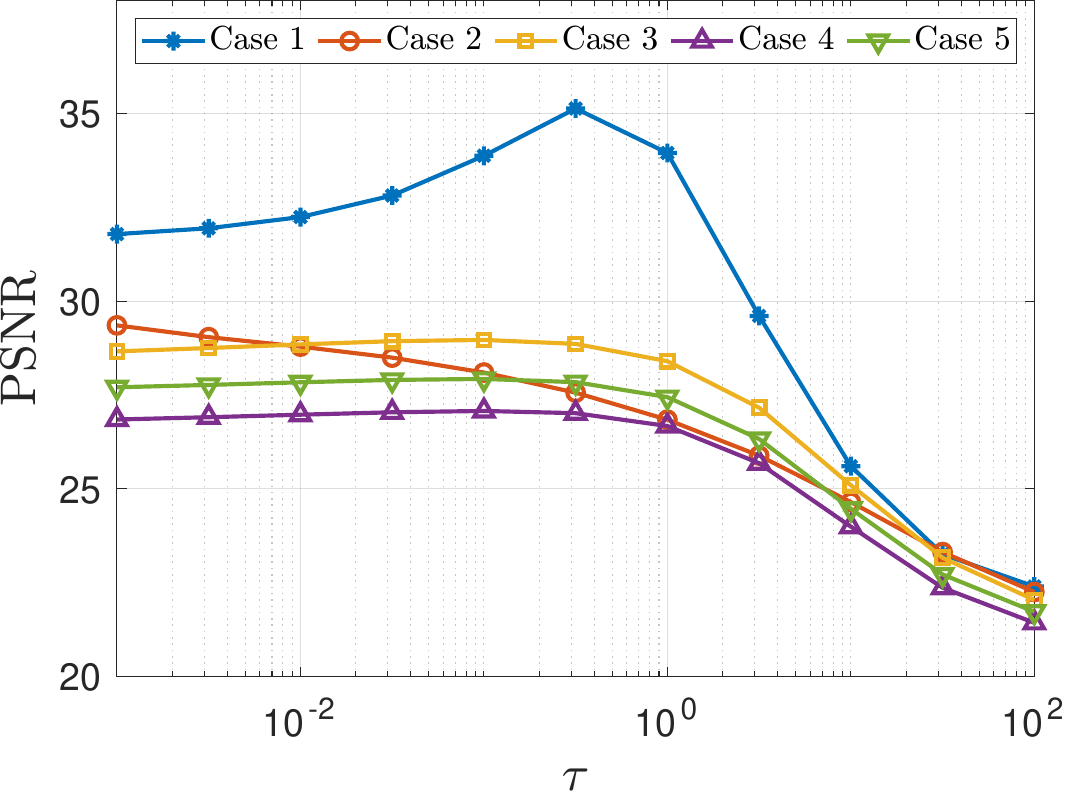}%
\label{tau_para_c}}
\caption{Sensitivity analysis of FRHD model parameter $\tau$ under different noise conditions for three datasets.}
\label{param_tau_sensitivity}
\end{figure}

\begin{figure*}[!t]
\centering
\subfloat[]{\includegraphics[width=1.23in]{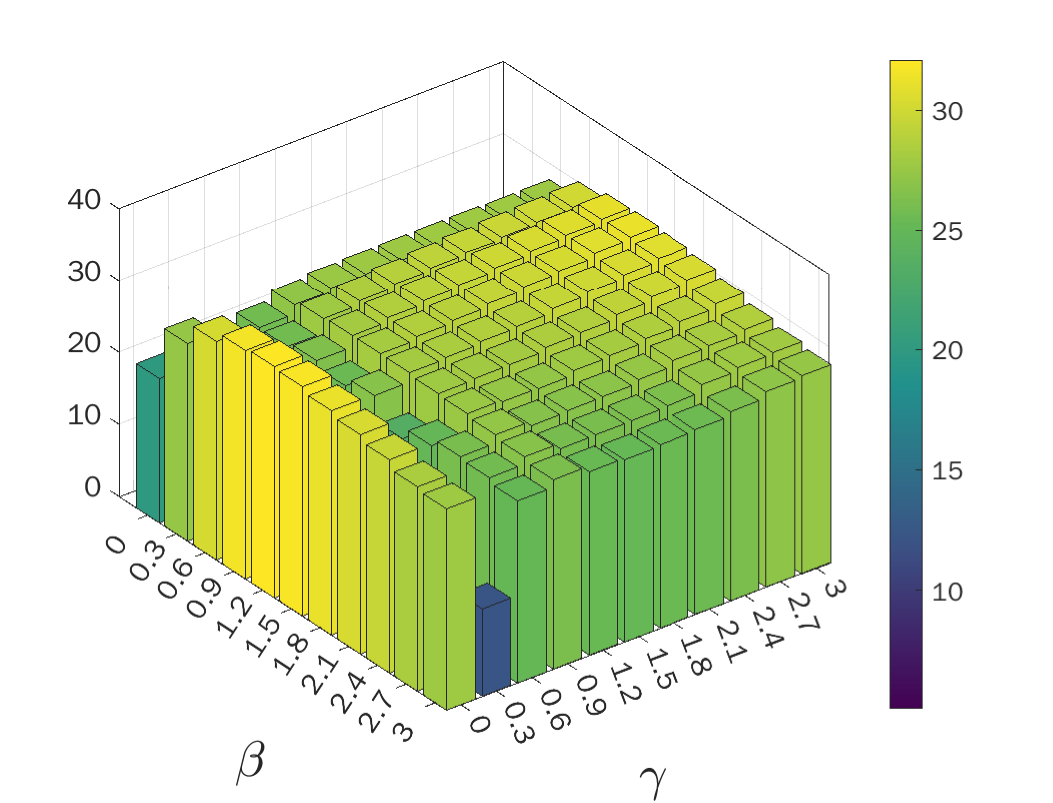}%
\label{CAVE,case1}}
\hfil
\subfloat[]{\includegraphics[width=1.23in]{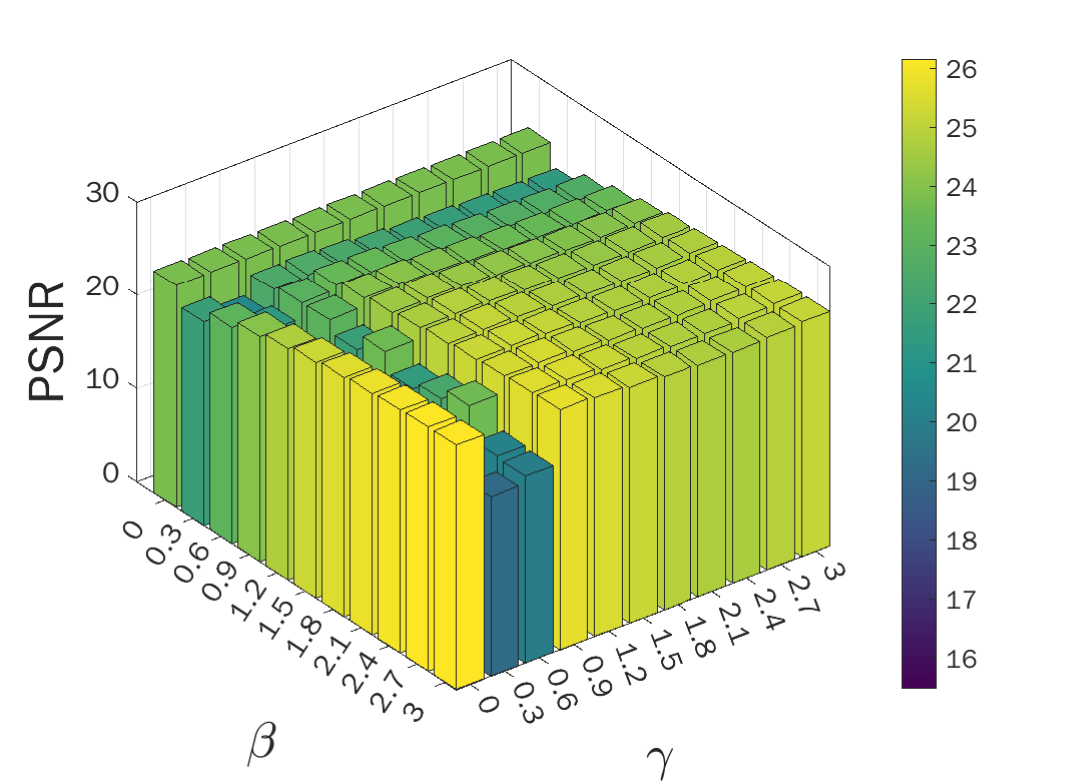}%
\label{CAVE,case2}}
\hfil
\subfloat[]{\includegraphics[width=1.23in]{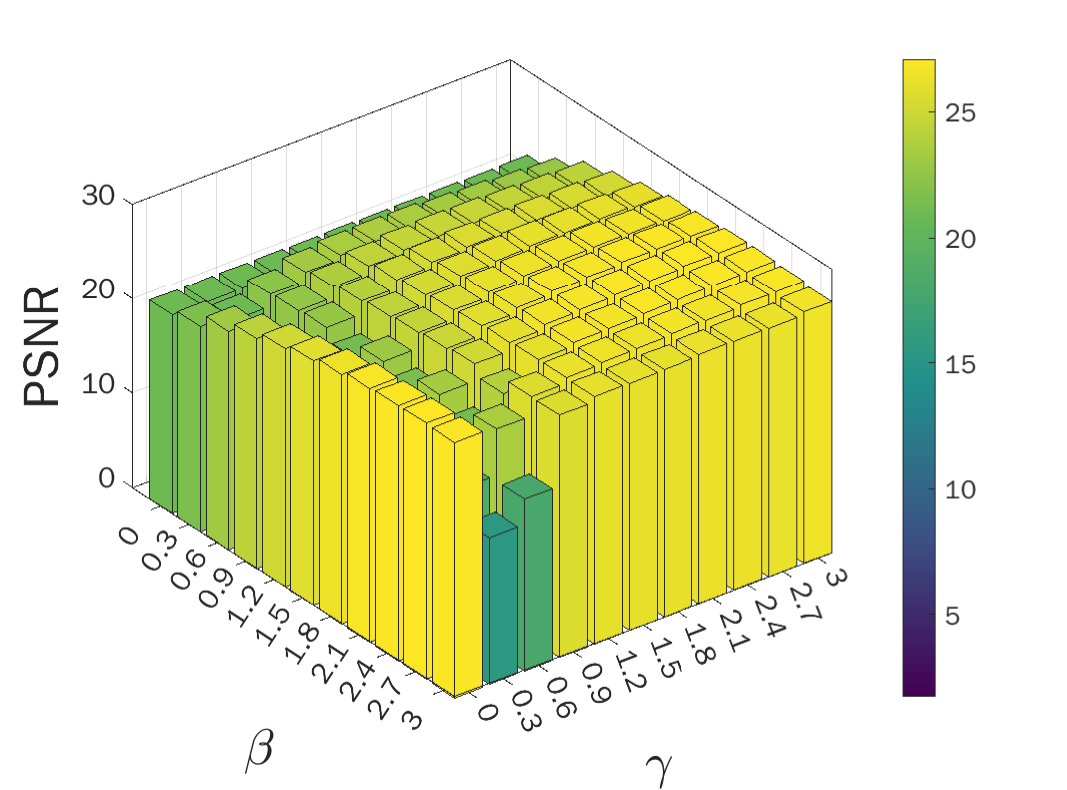}%
\label{CAVE,case3}}
\hfil
\subfloat[]{\includegraphics[width=1.23in]{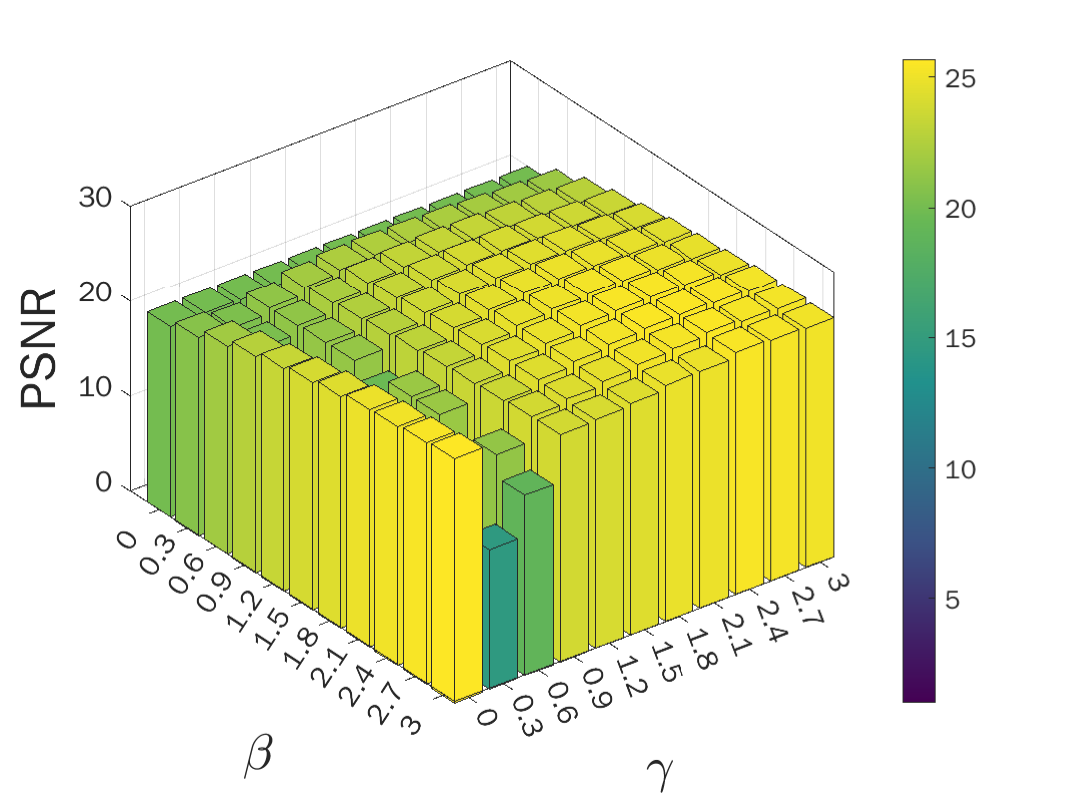}%
\label{CAVE,case4}}
\hfil
\subfloat[]{\includegraphics[width=1.23in]{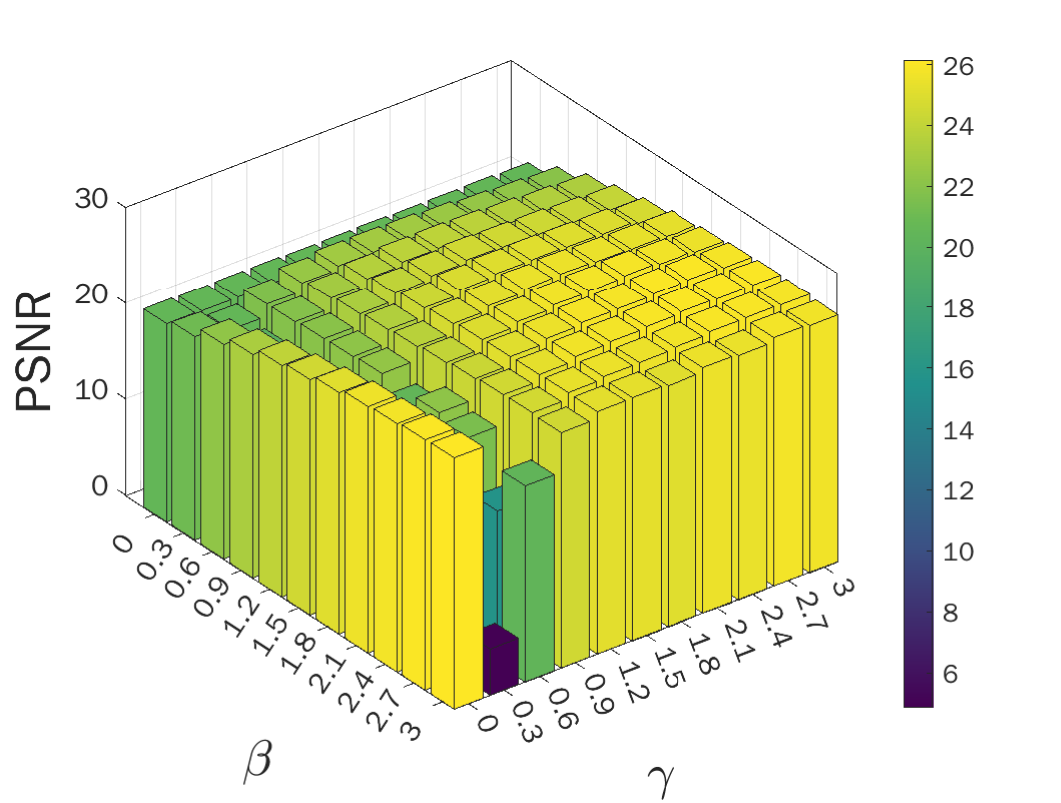}%
\label{CAVE,case5}}
\caption{Sensitivity of FRHD model parameters $\beta$ and  $\gamma$ to CAVE dataset under different noise conditions. (a) Case 1; (b) Case 2; (c) Case 3; (d) Case 4; (e) Case 5.}
\label{CAVE_Weight_para}
\end{figure*}

\subsection{Parameter analysis and setting rationale}

This subsection analyzes the impact of key hyperparameters and explains the rationale behind their selection. The proposed FRHD method requires four main parameters: the SVD rank $r$ in RCTV, the fidelity weight $\tau$, and the regularization coefficients $\beta$ and $\gamma$ for sparse and directional noise, respectively. Our parameter selection follows a principled approach based on HSI characteristics and noise properties.

\subsubsection{Rationale for initial parameter selection}

a) SVD Rank $r$: This parameter governs the degree of spectral low-rank approximation in the RCTV model. The value of $r$ is selected based on the intrinsic spectral dimensionality of hyperspectral images (HSIs), which is typically constrained by strong inter-band correlations. Empirical observations indicate that most HSIs have an effective spectral rank between 3 and 10. Accordingly, we initialize $r = 2$ and increment it iteratively until denoising performance begins to degrade.

b) Fidelity Weight $\tau$: This parameter controls the trade-off between data fidelity and regularization. Following classical regularization parameter selection principles, we first define a coarse search space using a geometric progression (e.g., ${0.001, 0.01, 0.1, 1, 10}$), then perform refined local search within promising intervals to determine the optimal value.

c) Regularization Coefficients $\beta$ and $\gamma$: These parameters respectively govern the regularization strength for sparse noise and directional noise components. Their selection incorporates physical interpretation: $\beta$ is scaled proportionally to the estimated sparse noise density, while $\gamma$ relates to the estimated stripe intensity. For mixed-noise scenarios, we initialize $\beta = \gamma = 0.1$ based on empirical observations from preliminary experiments, then conduct systematic grid search for final tuning. Notably, in cases containing only non-directional noise (Case 1), we explicitly set $\gamma = 0$ to disable the corresponding regularization term.

\subsubsection{Sensitivity and generality across different HSI datasets}

Regarding parameter sensitivity, among the four tunable hyperparameters, the SVD rank $r$ and the fidelity weight $\tau$ may be influenced by variations in hyperspectral image characteristics. We first analyze the sensitivity of these two parameters across diverse HSI datasets. Experiments are conducted on three widely used benchmarks: CAVE (31 bands, $200 \times 200$), PaC (103 bands, $300 \times 300$), and WDC (191 bands, $256 \times 256$), which differ substantially in spectral bands and spatial resolution. The corresponding sensitivity results are illustrated in Fig.s \ref{param_r_sensitivity} and \ref{param_tau_sensitivity}. It can be observed that both $r$ and $\tau$ show minimal variation across datasets of different sizes and band counts. Moreover, under different noise scenarios, the optimal parameter values follow nearly consistent trends, indicating their relative insensitivity to data variations.

As for the regularization coefficients $\beta$ and $\gamma$, which control the sparse and directional noise priors respectively, they are not strongly correlated with dataset scale but may depend on noise characteristics. We thus evaluate their sensitivity under different noise types using the CAVE dataset, with results shown in Fig. \ref{CAVE_Weight_para}. Despite varying noise conditions, stable denoising performance is attained when $\beta \in [0.9, 2.5]$ and $\gamma \in [0, 2.5]$, demonstrating the robustness of these parameters.

For practical guidance, we summarize the tested noise configurations and corresponding parameter settings across datasets in Table~\ref{tab:para_setting}. Both empirical observations and experimental results confirm that the proposed method exhibits low parameter sensitivity and is straightforward to configure in practice.

\subsubsection{Discussion on adaptive parameter selection}

While the proposed FRHD model demonstrates strong robustness to parameter variations under the current manual tuning framework, developing adaptive parameter selection strategies could further improve its accessibility for non-expert users. Future research directions for automating parameter estimation within the FRHD framework include:
\begin{itemize}
    \item Automatically estimating the TV weight $\tau$ based on the global noise variance $\hat{\sigma}^2$ derived from the input data.
    \item Developing heuristic rules to initialize the regularization coefficients $\beta$ and $\gamma$ based on preliminary noise analysis (e.g., through statistical estimation or fast filtering techniques).
    \item Exploring optimization-based approaches, such as bilevel optimization, to learn parameter selection policies from representative HSI datasets.
\end{itemize}

The integration of such adaptive mechanisms would enhance the practicality of the framework and advance toward a more autonomous HSI denoising pipeline.

\begin{table}[!t]
\fontfamily{ptm}\selectfont 
\centering
\caption{Parameter settings for CAVE, PaC, and WDC datasets across different cases.}\label{tab:para_setting}
\renewcommand{\arraystretch}{1.05}
\resizebox{0.6\columnwidth}{!}{ 
\begin{tabular}{|l|c|c|c|c|c|c|c|c|c|}
\hline
\multirow{2}{*}{\textbf{Cases}} & \multicolumn{3}{c|}{\textbf{CAVE}} & \multicolumn{3}{c|}{\textbf{PaC}} & \multicolumn{3}{c|}{\textbf{WDC}} \\ 
\cline{2-4} \cline{5-7} \cline{8-10}
 & $r$ & $\tau$ & $[\beta, \gamma]$ & $r$ & $\tau$ & $[\beta, \gamma]$ & $r$ & $\tau$ & $[\beta, \gamma]$ \\ 
\hline
Case 1   & 4 & 1   & [1, 0]     & 3 & 0.5   & [0.95, 0]  & 5 & 0.47  & [1.05, 0] \\
\hline
Case 2   & 4 & 3   & [2.1, 1]   & 3 & 0.001 & [1.3, 0.7] & 4 & 0.001 & [1.3, 0.7] \\
\hline
Cases 3-5 & 3 & 2.5 & [1.5, 2.5] & 3 & 1     & [1.5, 2.5] & 3 & 0.1   & [1.5, 2.5] \\
\hline
\end{tabular} }
\end{table}

\begin{table}[!t]
\fontfamily{ptm}\selectfont 
\caption{Denoising performance comparison of each module in the proposed FRHD against the Baseline model. \label{tab:compar}}
\centering
\setlength{\arrayrulewidth}{1pt} 
\renewcommand{\arraystretch}{1.45}
\resizebox{0.65\columnwidth}{!}{
\Huge
\begin{tabular}{|l|cccc|cccc|cccc|}
\hline
\multicolumn{1}{|c|} {\textbf{Datasets}} & \multicolumn{4}{c}{\textbf{CAVE}} & \multicolumn{4}{|c|}{\textbf{PaC}} & \multicolumn{4}{c|}{\textbf{WDC}} \\ 
\hline
Ablation Module & PSNR & SSIM & SAM & Time & PSNR & SSIM& SAM & Time & PSNR &SSIM&  SAM & Time\\ 
\hline
Noisy & 11.6046 &  0.0836 & 0.9221 & -- & 11.7823 & 0.0724 & 0.9357 & -- & 11.6066 & 0.1095 & 0.8714 & -- \\
\hline
{RCTV} & {25.0842} & {0.7789} & {0.1997} & {0.9060} & 
{26.1311} & {0.7786} & {0.1555} & {5.7939} & 
{26.3929} & {0.8527}  & {0.1286} & {7.3243} \\
\hline
Baseline & 23.9373 & 0.7696 & 0.1887 & 0.5655 & 25.6765 & 0.7779 & 0.1412 & 4.3299 & 25.7742 & 0.8246  & 0.1439 & 4.6729 \\
\hline
Baseline+A $\Uparrow$ & 24.8229 & 0.8172 & 0.1681 & 0.7323 & 28.0799 & 0.8215 & 0.1110 & 4.9823 & 26.7997 & 0.8563 & 0.1159 & 6.4919 \\
\hline
Baseline+A+B $\Uparrow$ & 25.7411 & 0.8242 & 0.1694 & 0.8567 & 29.1734 & 0.8275 & 0.1172 & 5.4872 & 27.8692  & 0.8704 & 0.1188 & 7.5402 \\
\hline
\end{tabular}}
\end{table}
\subsection{Ablation experiments}

Here, we systematically evaluate the contributions of the two core components in the proposed FRHD framework. As outlined in Section~\ref{sect3}, FRHD comprises two key modules: Module~A, a noise prior reduction strategy, and Module~B, an adaptive pixel-wise weighting strategy. We compare our framework against the original RCTV method and define our ablation configurations as follows. \textbf{Baseline} is a simplified denoising model incorporating RCTV regularization but excluding both noise prior reduction modeling and adaptive pixel-wise fidelity term. The Baseline differs from the original RCTV method in two key aspects: (1) it eliminates the explicit Gaussian noise term $E$ and its associated parameter $\beta$, reducing the parameter count from four to three; and (2) it employs a convergence tolerance of $\epsilon = 10^{-5}$ compared to RCTV's $\epsilon = 10^{-6}$. \textbf{Baseline+A} augments the Baseline model with $\ell_{2,1}$ regularization and noise prior reduction to address structured noise, while \textbf{Baseline+A+B} further incorporates the adaptive weighting mechanism, resulting in the complete FRHD model.

Table~\ref{tab:compar} reports the denoising performance of each configuration under Case~5. The results reveal several key observations. First, the original RCTV method achieves marginally better PSNR than our Baseline model (e.g., +0.62~dB on WDC), which can be attributed to its stricter convergence criterion ($\epsilon=10^{-6}$ versus our Baseline's $\epsilon=10^{-5}$). This stricter tolerance yields slightly improved accuracy at the cost of increased computational time. Second, incorporating Module~A yields substantial improvements of 1--3~dB in PSNR and 3--5\% in SSIM over the Baseline, demonstrating that the $\ell_{2,1}$-regularized noise prior effectively compensates for the simplified model while delivering superior performance. Third, adding Module~B provides further gains of approximately 1--2~dB in PSNR and 1--2\% in SSIM relative to Baseline+A, confirming the complementary value of the adaptive weighting mechanism.

Most significantly, the complete FRHD framework surpasses the original RCTV method by 1.48~dB, 3.04~dB, and 1.48~dB in PSNR on CAVE, PaC, and WDC datasets respectively, while maintaining comparable computational efficiency. These results clearly validate the individual and complementary contributions of both modules in enhancing the overall performance of the FRHD method. In terms of runtime, the two modules introduce minimal computational overhead. Except for the WDC dataset, the additional time cost is almost negligible, indicating the proposed model maintains favorable computational efficiency.

Overall, the complete FRHD framework achieves a notable improvement of 2--4~dB in PSNR and 5\%--6\% in SSIM compared to the baseline, while outperforming the original RCTV method with a more parsimonious parameterization and competitive runtime performance.

\subsection{Other comparative experiments}
\subsubsection{Robustness of FRHD Under Extreme Noise Conditions}
To rigorously assess the robustness of the proposed FRHD model, we performed a comprehensive suite of gradient-based experiments to evaluate its performance under progressively intensified and extreme noise conditions. Two scenarios were investigated: 

i) Salt-and-pepper noise with intensity gradually increasing from $5\%$ to $50\%$;

ii) Additive Gaussian noise with standard deviation $\sigma$ increasing from $0.05$ to $0.5$ in steps of $0.05$, combined with fixed impulse noise ($5\%$) and stripe noise ($5\%$).

The experimental results are presented in Figs.~\ref{WDC_salt_grad} and~\ref{WDC_gauss_grad}. As demonstrated, the proposed FRHD method consistently outperforms the baseline RCTV approach across all quantitative metrics (PSNR, SSIM, and SAM) under all tested conditions. Notably, this performance advantage persists even under extreme noise settings, including Gaussian noise beyond $\sigma=0.3$ and salt-and-pepper noise exceeding $30\%$. These findings confirm the robustness and practical effectiveness of FRHD for denoising HSIs contaminated by complex unknown noise.

\begin{figure}[!t]
\centering
\includegraphics[width=5.5in]{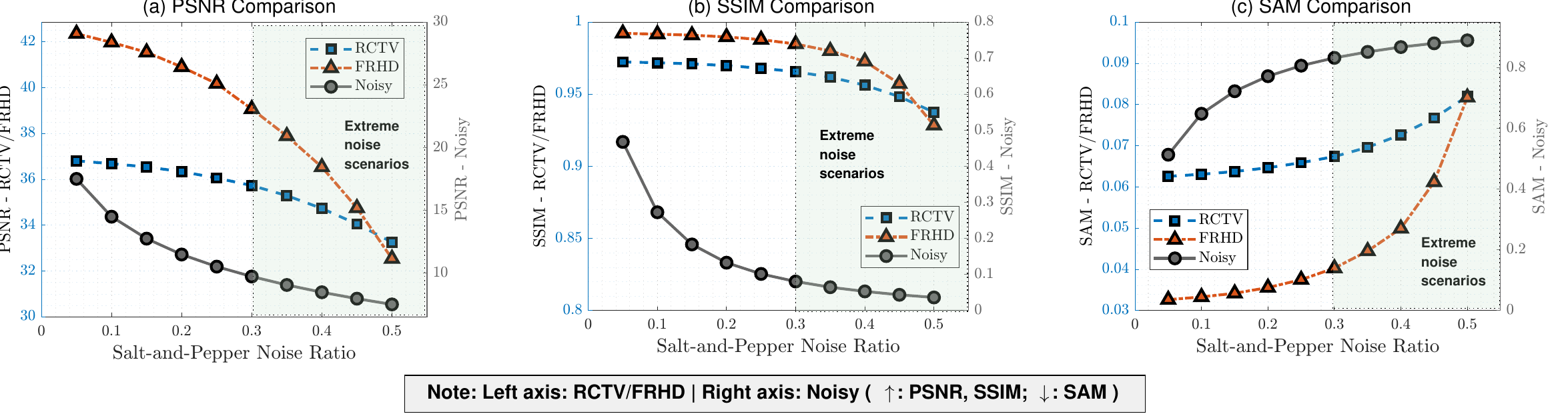}
\caption{{Gradient Experiment (Gradually Intensifying Salt-and-Pepper Noise).}}
\label{WDC_salt_grad}
\end{figure}

\begin{figure}[!t]
\centering
\includegraphics[width=5.5in]{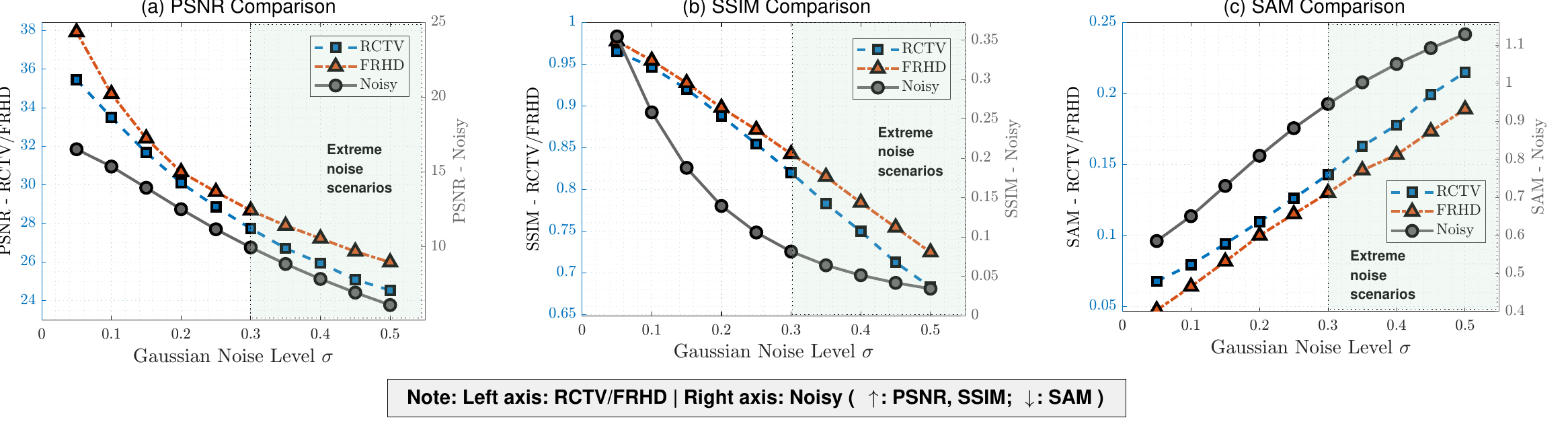}
\caption{{Gradient Experiment (5\% Salt-and-Pepper + 5\% Stripes + Gradually Intensifying Gaussian Noise).}}
\label{WDC_gauss_grad}
\end{figure}

\begin{table}[!t]
\fontfamily{ptm}\selectfont
\centering
\caption{{Comparison results on WDC dataset. Best and second-best results are highlighted in \textbf{bold} and \underline{underline}, respectively.}}
\label{inr_results}
\renewcommand{\arraystretch}{1.2}
\resizebox{0.85\columnwidth}{!}{
\begin{tabular}{c|cccc|cccc|cccc|cccc}
\hline
\multirow{2}{*}{{Cases}}
& \multicolumn{4}{c|}{{Noisy}}
& \multicolumn{4}{c|}{{HLRTF~\cite{luo2022hlrtf} }}
& \multicolumn{4}{c|}{{Flex-DLD~\cite{Chen2024flex}}}
& \multicolumn{4}{c}{{FRHD}} \\
\cline{2-17}
& {PSNR} & {SSIM} & {SAM} & {Time}
& {PSNR} & {SSIM} & {SAM} & {Time}
& {PSNR} & {SSIM} & {SAM} & {Time}
& {PSNR} & {SSIM} & {SAM} & {Time} \\
\hline
{Case 1}
& {12.36} & {0.1543} & {0.7534} & {0.00}
& {\underline{31.43}} & {\underline{0.9316}} & {\underline{0.0803}} & {\underline{17.98}}
& {31.14} & {0.9158} & {0.0811} & {89.62}
& {\textbf{35.22}} & {\textbf{0.9522}} & {\textbf{0.0652}} & {\textbf{5.73}} \\

{Case 2}
& {17.14} & {0.2912} & {0.6340} & {0.00}
& {\textbf{32.19}} & {\textbf{0.9319}} & {\textbf{0.0778}} & {\underline{17.09}}
& {27.11} & {0.8513} & {0.1247} & {89.57}
& {\underline{29.26}} & {\underline{0.9114}} & {\underline{0.0959}} & {\textbf{9.73}} \\

{Case 3}
& {11.55} & {0.1106} & {0.8674} & {0.00}
& {\underline{27.90}} & {\underline{0.8266}} & {0.1356} & {\underline{15.79}}
& {26.42} & {0.8224} & {\underline{0.1296}} & {89.49}
& {\textbf{29.00}} & {\textbf{0.8749}} & {\textbf{0.1168}} & {\textbf{7.59}} \\

{Case 4}
& {11.42} & {0.1027} & {0.8934} & {0.00}
& {\textbf{27.85}} & {\textbf{0.8501}} & {\textbf{0.1167}} & {\underline{16.38}}
& {25.05} & {0.7961} & {0.1534} & {89.33}
& {\underline{27.07}} & {\underline{0.8650}} & {\underline{0.1197}} & {\textbf{7.60}} \\

{Case 5}
& {11.61} & {0.1095} & {0.8714} & {0.00}
& {\textbf{29.14}} & {\textbf{0.8956}} & {\textbf{0.0995}} & {\underline{17.34}}
& {25.79} & {0.8122} & {0.1446} & {91.01}
& {\underline{27.87}} & {\underline{0.8704}} & {\underline{0.1188}} & {\textbf{7.53}} \\

\hline
{Avg.}
& {12.81} & {0.1536} & {0.8039} & {0.00}
& {\textbf{29.70}} & {\underline{0.8872}} & {\textbf{0.1020}} & {\underline{16.92}}
& {27.10} & {0.8396} & {0.1267} & {89.80}
& {\underline{29.68}} & {\textbf{0.8948}} & {\underline{0.1033}} & {\textbf{7.63}} \\
\hline
\end{tabular}}
\vspace{0.3em}
\parbox{0.82\columnwidth}{
\scriptsize
\textit{Note:} Flex-DLD is time-consuming due to its iterative nature. Three iterations are adopted as a practical compromise between performance and computational cost.}
\end{table}

\subsubsection{Comparison with Implicit Neural Representation Methods}
To further evaluate FRHD against recent self-supervised HSI denoising approaches, we consider two implicit neural representation (INR) based methods, HLRTF~\cite{luo2022hlrtf} and Flex-DLD~\cite{Chen2024flex}, on the WDC dataset under five representative noise settings. Both methods leverage INR to model the inherent spatial-spectral structure of HSIs: HLRTF uses a neural network to learn a low-rank tensor decomposition, while Flex-DLD focuses on the spectral low-rank structure through network-based learning. Table~\ref{inr_results} presents the quantitative results, facilitating a direct comparison of restoration accuracy and computational efficiency.

From Table~\ref{inr_results}, it can be observed that both INR-based methods improve PSNR and SSIM compared with the raw noisy inputs, indicating their capability to capture spatial-spectral continuity in HSI. HLRTF generally achieves competitive denoising accuracy with relatively low computational cost due to its low-rank tensor formulation, whereas Flex-DLD tends to require significantly more runtime despite moderate restoration performance. The proposed FRHD shows comparable or higher PSNR and SSIM than HLRTF and Flex-DLD in most cases, with SAM values remaining similar across the methods. For example, in Case 1, FRHD achieves a PSNR of 35.22 dB and an SSIM of 0.9522, while HLRTF achieves 31.43 dB and 0.9316, and Flex-DLD achieves 31.14 dB and 0.9158. In terms of computational time, FRHD requires 5.73 seconds compared with 17.98 seconds for HLRTF and 89.62 seconds for Flex-DLD. These results suggest that FRHD provides competitive restoration performance while maintaining relatively low computational cost.

Regarding potential integration of INR into FRHD, our current framework does not explicitly incorporate INR modules. Integrating INR presents several challenges: (i) training INR networks on high-dimensional HSIs is typically iterative and memory-intensive, which may undermine FRHD’s real-time capability; (ii) FRHD’s hierarchical low-rank modeling already implicitly captures key spectral correlations and non-local self-similarity, partially overlapping with the representational capacity of INR. Nonetheless, hybrid schemes that combine FRHD with INR-based continuous representations could be a promising direction for future research, potentially further enhancing spectral-spatial consistency and denoising performance.

Overall, these additional comparisons strengthen the comprehensiveness of our experimental evaluation and further demonstrate FRHD’s competitive advantage over emerging INR-based HSI denoising methods.

\subsection{Convergence analysis}   \label{Convergence_section}
In this subsection, we analyze the convergence of the ADMM algorithm for FRHD. As a prerequisite, we introduce the following Lemma 1 and Theorem 2. The proof of Theorem~\ref{thm2} is available in Ref.~\cite{he20121On} and omitted here for brevity, while the proof of Lemma~1 is provided in Appendix~B.

$\textit{\textbf{Lemma \hypertarget{Lemma}{1}:}}$ Consider the ADMM optimization variables $\mathbf{x} = [\mathbf{U}, \mathbf{W}, \mathbf{S}, \mathbf{D}]^\top$ and $\mathbf{z} = [\mathbf{H}_1, \mathbf{H}_2, \mathbf{Q}_1, \mathbf{Q}_2]^\top$, with auxiliary variables $\mathbf{Q}_1 = \mathbf{W} \odot (\mathbf{Y} - \mathbf{U} \mathbf{V}^\top)$ and $\mathbf{Q}_2 = \bm{\mathfrak{T}} - \mathbf{W}$. Then, in each ADMM block update, where a single block variable is optimized while all others are fixed, the feasible set of the updated block is nonempty, closed, and convex. 

\begin{figure}[!t]
\centering
\subfloat[]{\includegraphics[width=1.65in]{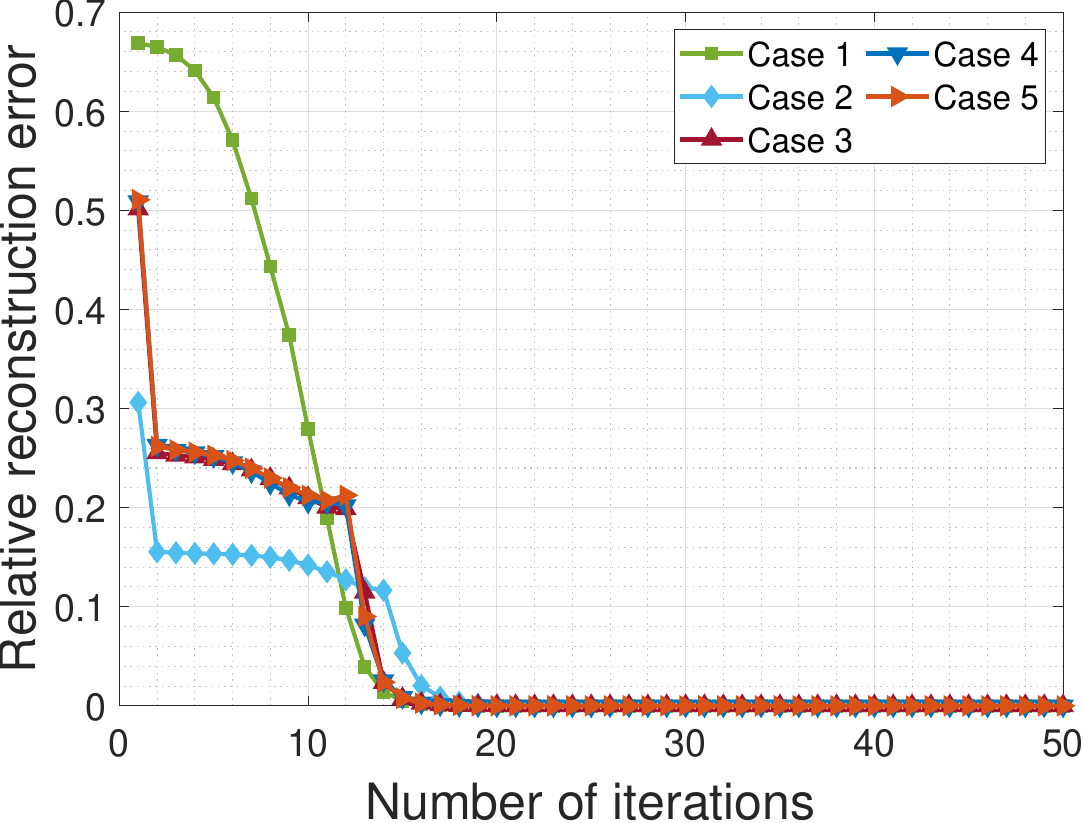}%
\label{CAVE_convergence_analysis}}
\hfil
\subfloat[]{\includegraphics[width=1.65in]{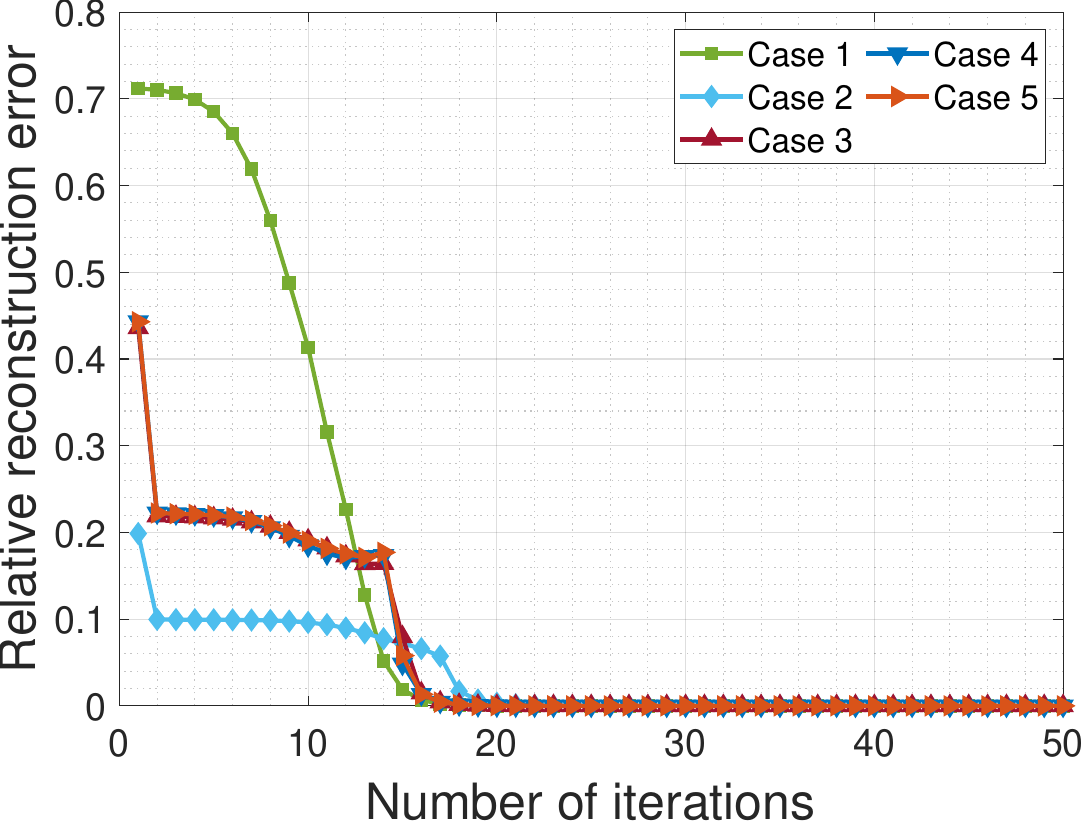}%
\label{PaC_convergence_analysis}}
\hfil
\subfloat[]{\includegraphics[width=1.65in]{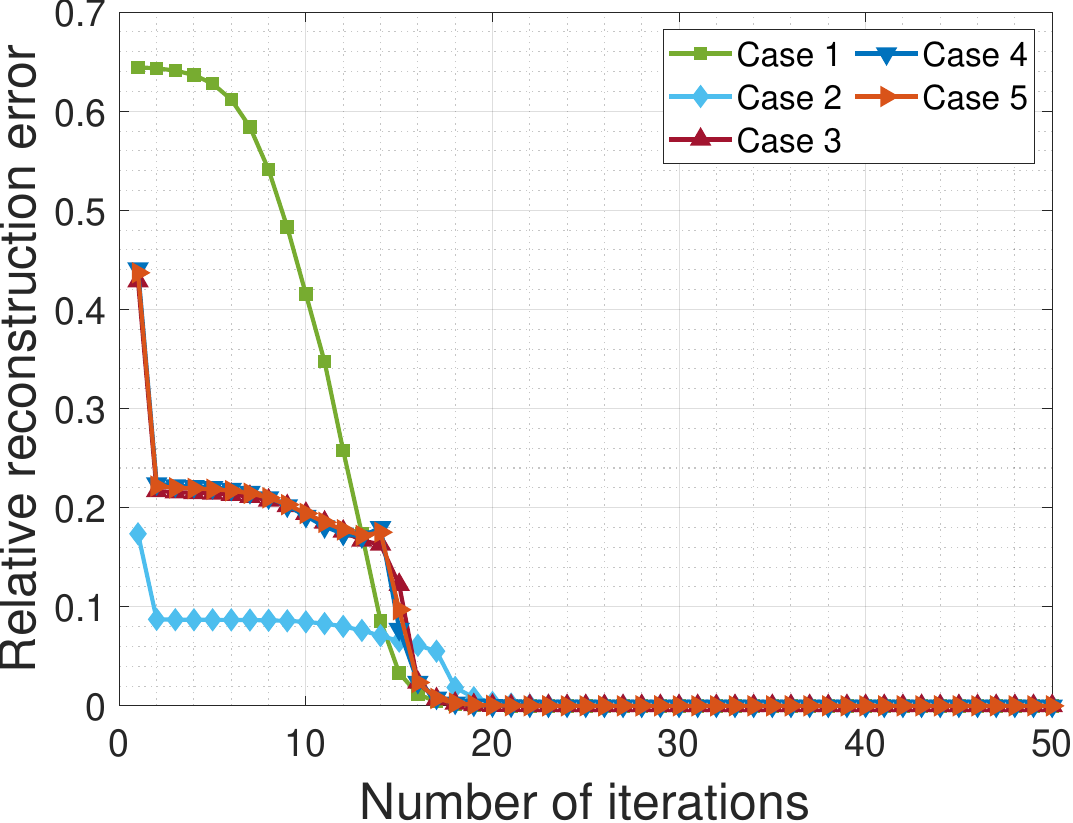}%
\label{WDC_convergence_analysis}}
\caption{Iteration results of FRHD method on different simulated datasets. (a) CAVE dataset; (b) PaC dataset; (c) WDC dataset.}
\label{Convergence_analysis}
\end{figure}

\newtheorem{theorem2}{\bf Theorem}
\begin{theorem}[\cite{he20121On}] \label{thm2}
The ADMM algorithm can be used to obtain the global optimal value of (\ref{admm}) at a rate of $\mathcal{O}(1/k)$.
\begin{align}   \label{admm}
    \min \ & f(\mathbf{x}) + g(\mathbf{z}), \notag \\
    \text{s.t.} \ & \mathbf{A} \mathbf{x} + \mathbf{B} \mathbf{z} = \mathbf{c}, \mathbf{x} \in \mathcal{X} \hspace{0.5em} \text{and} \hspace{0.5em}  \mathbf{z} \in \mathcal{Z},
\end{align}
where $f(\cdot)$ and $g(\cdot)$ are closed convex functions, $\mathbf{A} \in \mathbb{R}^{p \times n_1}$, $\mathbf{B} \in \mathbb{R}^{p \times n_2}$, $\mathbf{c} \in \mathbb{R}^p$, and $\mathcal{X,Z}$ are closed convex sets.
\end{theorem}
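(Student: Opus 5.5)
We would follow the variational-inequality (VI) route of He and Yuan. \textbf{Step 1: recast the problem.} Write $u=(\mathbf{x},\mathbf{z})$, $w=(\mathbf{x},\mathbf{z},\bm{\lambda})$, $\theta(u)=f(\mathbf{x})+g(\mathbf{z})$ and $\Omega=\mathcal{X}\times\mathcal{Z}\times\mathbb{R}^p$. By closed convexity, a saddle point of the Lagrangian of \eqref{admm} is characterized by $w^\ast\in\Omega$ satisfying
\begin{align*}
\theta(u)-\theta(u^\ast)+(w-w^\ast)^\top F(w^\ast)\ge 0 \quad \forall w\in\Omega,
\end{align*}
where $F(w)=\bigl(-\mathbf{A}^\top\bm{\lambda},\,-\mathbf{B}^\top\bm{\lambda},\,\mathbf{A}\mathbf{x}+\mathbf{B}\mathbf{z}-\mathbf{c}\bigr)$. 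The map $F$ is affine with a skew-symmetric linear part, so $(w-w')^\top(F(w)-F(w'))=0$ for all $w,w'$. Because of this, the target $\mathcal{O}(1/k)$ statement is ergodic: we will show that the averaged iterate $\bar w_t$ satisfies
\begin{align*}
\theta(\bar u_t)-\theta(u)+(\bar w_t-w)^\top F(w)\le \frac{C(w)}{t+1}
\end{align*}
uniformly on bounded subsets of $\Omega$. By the standard VI characterization, this says the solution accuracy of $\bar w_t$ is $\mathcal{O}(1/t)$.

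\textbf{Step 2: a one-iteration inequality.} Define the predictor $\tilde w^k=(\mathbf{x}^{k+1},\mathbf{z}^{k+1},\tilde{\bm{\lambda}}^k)$ with $\tilde{\bm{\lambda}}^k=\bm{\lambda}^k-\beta(\mathbf{A}\mathbf{x}^{k+1}+\mathbf{B}\mathbf{z}^k-\mathbf{c})$, where $\beta$ is the penalty. Write down the first-order optimality conditions of the $\mathbf{x}$- and $\mathbf{z}$-subproblems as VIs over $\mathcal{X}$ and $\mathcal{Z}$, which is legitimate because these sets are closed and convex. Substituting $\tilde{\bm{\lambda}}^k$ should give
\begin{align*}
\theta(u)-\theta(\tilde u^k)+(w-\tilde w^k)^\top F(\tilde w^k)\ge (v-\tilde v^k)^\top Q(v^k-\tilde v^k),
\end{align*}
where $v=(\mathbf{z},\bm{\lambda})$ and $Q$ is a block matrix built from $\beta\mathbf{B}^\top\mathbf{B}$ and $\tfrac1\beta\mathbf{I}$. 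The multiplier update reads $v^{k+1}=v^k-M(v^k-\tilde v^k)$ for a simple $M$, and we set $H=QM^{-1}=\mathrm{diag}(\beta\mathbf{B}^\top\mathbf{B},\tfrac1\beta\mathbf{I})$, which is symmetric positive semidefinite.

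\textbf{Step 3: telescoping.} Apply the identity $2(a-b)^\top H(c-d)=\|a-d\|_H^2-\|a-c\|_H^2+\|c-b\|_H^2-\|d-b\|_H^2$ to the right-hand side of Step 2. This turns it into
\begin{align*}
\tfrac12\bigl(\|v-v^{k+1}\|_H^2-\|v-v^k\|_H^2\bigr)+\tfrac12(v^k-\tilde v^k)^\top G(v^k-\tilde v^k),
\end{align*}
with $G=Q^\top+Q-M^\top HM$. Checking that $G\succeq0$ is a direct block computation. Replacing $F(\tilde w^k)$ by $F(w)$ (the skew-symmetry from Step 1 makes this exact), summing over $k=0,\dots,t$, and using convexity of $\theta$ on the averages $\bar w_t=\frac1{t+1}\sum_k\tilde w^k$ yields the bound of Step 1 with $C(w)=\tfrac12\|v-v^0\|_H^2$. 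This establishes the $\mathcal{O}(1/k)$ rate.

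\textbf{Main obstacle.} The delicate part is Step 2: the $\mathbf{z}$-optimality condition involves $\bm{\lambda}^{k+1}$ rather than $\tilde{\bm{\lambda}}^k$, which leaves a cross term $\beta\mathbf{B}(\mathbf{z}^k-\mathbf{z}^{k+1})$. We would first try rewriting $\bm{\lambda}^{k+1}=\tilde{\bm{\lambda}}^k-\beta\mathbf{B}(\mathbf{z}^{k+1}-\mathbf{z}^k)$ and absorbing the leftover term exactly into the $Q$-weighted inner product. If that fails to close, the fallback is to write the $\mathbf{z}$-optimality conditions at iterations $k$ and $k+1$ and use the monotonicity of $\partial g$ to sign the cross term. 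Everything else is algebra.
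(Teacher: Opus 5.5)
Your proposal is correct and is essentially the He--Yuan variational-inequality argument that the paper relies on; the paper gives no proof of its own and simply cites that work. Your ergodic bound $\theta(\bar u_t)-\theta(u)+(\bar w_t-w)^\top F(w)\le\frac{1}{2(t+1)}\|v-v^0\|_H^2$ is the precise content of the loosely worded ``$\mathcal{O}(1/k)$'' claim. Your first idea for the main obstacle already closes: $\bm{\lambda}^{k+1}=\tilde{\bm{\lambda}}^k-\beta\mathbf{B}(\mathbf{z}^{k+1}-\mathbf{z}^k)$ puts the cross term exactly into $Q=\bigl(\begin{smallmatrix}\beta\mathbf{B}^\top\mathbf{B}&0\\-\mathbf{B}&\frac{1}{\beta}\mathbf{I}\end{smallmatrix}\bigr)$, and with $M=\bigl(\begin{smallmatrix}\mathbf{I}&0\\-\beta\mathbf{B}&\mathbf{I}\end{smallmatrix}\bigr)$ one gets $G=\mathrm{diag}(0,\frac{1}{\beta}\mathbf{I})\succeq0$, so the monotonicity fallback is not needed.
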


Although the FRHD model can be formally expressed in a form resembling Theorem~\ref{thm2} through certain transformations, the overall optimization problem is nonconvex due to the bilinear factorization \(\mathbf{X} = \mathbf{U}\mathbf{V}^\top\) and the orthogonality constraint \(\mathbf{V}^\top \mathbf{V} = \mathbf{I}\). Consequently, the convergence guarantee of Theorem~\ref{thm2} does not directly apply to the FRHD problem. Nevertheless, we establish the following result:

\newtheorem{theorem3}{\bf Theorem}
\begin{theorem}  \label{thm3}
The proposed ADMM algorithm converges to a stationary point of the FRHD optimization problem.
\end{theorem}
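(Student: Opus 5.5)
The plan is the standard nonconvex ADMM template: (i) a per-iteration sufficient decrease of the augmented Lagrangian $\bm{\mathcal{L}}$ in (\ref{Finall_lagrange}), (ii) boundedness of the iterates, and (iii) the KKT conditions holding at every cluster point. Lemma~1 supplies the block structure. Each block update is a minimization over a nonempty closed convex feasible set, except the $\mathbf{V}$-block, whose feasible set is the Stiefel manifold. Every one of these subproblems has an exact closed-form minimizer, (\ref{update:H})--(\ref{update:W}). Hence, with the multipliers held fixed, one sweep of primal updates cannot increase $\bm{\mathcal{L}}$.

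\textbf{Decrease with strong-convexity margins.} Several blocks are strongly convex in their own variable:
\begin{itemize}
\item the $\mathbf{H}_i$- and $\mathbf{S}$-subproblems are $\rho$-strongly convex;
\item the $\mathbf{U}$-subproblem has Hessian $\rho(\mathbf{I}+\sum_i\nabla_i^\top\nabla_i)\succeq\rho\mathbf{I}$;
\item the $\mathbf{W}$-subproblem is $\alpha$-strongly convex, because of $\frac{\alpha}{2}\|\bm{\mathfrak{T}}-\mathbf{W}\|_F^2$.
\end{itemize}
So each of these updates lowers $\bm{\mathcal{L}}$ by at least $\frac{c}{2}\|\text{new}-\text{old}\|_F^2$ for the corresponding modulus $c$.

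\textbf{Controlling the dual ascent.} The multiplier step (\ref{update:Multipliers}) raises $\bm{\mathcal{L}}$ by $\frac{1}{\rho}\sum_i\|\mathbf{\Lambda}_i^{k+1}-\mathbf{\Lambda}_i^k\|_F^2$. I would bound these dual differences by primal differences using the first-order optimality conditions of the last-updated blocks:
\begin{itemize}
\item for $\mathbf{\Lambda}_1,\mathbf{\Lambda}_2$, the optimality condition of the $\mathbf{H}_i$-step gives $\mathbf{\Lambda}_i^{k+1}\in\tau_i\partial\|\mathbf{H}_i^{k+1}\|_1$;
\item for $\mathbf{\Lambda}_3$, the optimality condition of the $\mathbf{S}$-step gives $\mathbf{\Lambda}_3^{k+1}\in\beta\partial\|\mathbf{S}^{k+1}\|_1$.
\end{itemize}
This shows the multipliers are bounded. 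The paper's schedule $\rho\leftarrow\eta\rho$ with $\eta=2$ then gives $\sum_k \rho_k^{-1}\|\mathbf{\Lambda}^{k+1}-\mathbf{\Lambda}^k\|_F^2<\infty$, since the bounded numerators are weighted by the geometric sequence $\rho_k^{-1}$. It follows that $\bm{\mathcal{L}}$ is bounded below and converges. Summing the decrease inequalities yields square-summable successive differences of $\mathbf{H}_i$, $\mathbf{U}$, $\mathbf{S}$ and $\mathbf{W}$. It also yields primal residuals $\|\nabla_i(\mathbf{U}^k)-\mathbf{H}_i^k\|_F$ and $\|\mathbf{W}^k\odot(\mathbf{Y}-\mathbf{U}^k\mathbf{V}^{k\top})-\mathbf{S}^k-\mathbf{D}^k\|_F$ that are $O(1/\rho_k)\to 0$.

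\textbf{Boundedness and limit points.} Boundedness of the iterates follows from several facts:
\begin{itemize}
\item $\mathbf{W}\in[0,1]^{MN\times B}$ by the constraint;
\item $\mathbf{V}$ lies on the compact Stiefel manifold;
\item $\mathbf{S}$, $\mathbf{D}$ and $\mathbf{H}_i$ are bounded because the $\ell_1$ and $\ell_{2,1}$ terms are coercive and $\bm{\mathcal{L}}$ is bounded above along the iterates;
\item $\mathbf{U}$ is bounded through the strongly convex quadratic in (\ref{uini}).
\end{itemize}
By Bolzano--Weierstrass a convergent subsequence exists. Passing to the limit in each block's optimality condition then gives the KKT system at the limit point:
\begin{itemize}
\item $\mathbf{\Lambda}_i\in\tau_i\partial\|\mathbf{H}_i\|_1$;
\item $\mathbf{\Lambda}_3\in\beta\partial\|\mathbf{S}\|_1\cap\gamma\partial\|\mathbf{D}\|_{2,1}$;
\item the $\mathbf{U}$-stationarity equation (\ref{usubp});
\item the Stiefel stationarity condition, namely symmetry of $\mathbf{V}^\top\mathbf{G}$, where $\mathbf{G}$ is the Euclidean gradient of $\bm{\mathcal{L}}$ with respect to $\mathbf{V}$;
\item the normal-cone condition for $\mathbf{W}$ on $[0,1]$;
\item primal feasibility.
\end{itemize}
The passage to the limit uses outer semicontinuity of the subdifferentials of the closed convex norms. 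This is exactly stationarity for the FRHD problem (\ref{total_objf}).

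\textbf{Main obstacle.} The hard step is the $\mathbf{V}$- and $\mathbf{D}$-blocks, which carry no strong-convexity margin. For $\mathbf{D}$ I would try to obtain the needed decrease by adding a small proximal term $\frac{\mu}{2}\|\mathbf{D}-\mathbf{D}^k\|_F^2$, or else rely on the $\mathbf{\Lambda}_3$-linkage described above. For $\mathbf{V}$, the decrease is only a plain descent and $\mathbf{V}^{k+1}-\mathbf{V}^k\to0$ does not follow automatically. I would get around this by proving stationarity only at limit points. Continuity of the Procrustes map (\ref{update:V}), valid where the SVD has distinct nonzero singular values, lets the optimality of $\mathbf{V}^{k+1}$ pass to the limit without needing convergence of successive differences. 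If that genericity assumption is unavailable, I would invoke the Kurdyka--Łojasiewicz property, which holds because $\bm{\mathcal{L}}$ is semialgebraic. That would upgrade the subsequential result to whole-sequence convergence with finite length.
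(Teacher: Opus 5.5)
Your template (sufficient decrease, dual control, KKT at limit points) is a more explicit version of what the paper leaves to a citation. However, it breaks at the dual-control step, and everything downstream depends on that step.

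In Algorithm~\ref{alg:FRHD} the $\mathbf{H}_i$-update comes \emph{first}, before $\mathbf{U}$. The $\mathbf{S}$-update is followed by the $\mathbf{D}$- and $\mathbf{W}$-updates. So neither $\mathbf{H}_i$ nor $\mathbf{S}$ is the last block touching its constraint before (\ref{update:Multipliers}). Write $\mathbf{R}^{k+1}=\mathbf{Y}-\mathbf{U}^{k+1}\mathbf{V}^{k+1\top}$. The optimality conditions you cite actually give $\mathbf{\Lambda}_i^{k}+\rho_k(\nabla_i\mathbf{U}^{k}-\mathbf{H}_i^{k+1})\in\tau_i\partial\|\mathbf{H}_i^{k+1}\|_1$ and $\mathbf{\Lambda}_3^{k}+\rho_k(\mathbf{W}^{k}\odot\mathbf{R}^{k+1}-\mathbf{S}^{k+1}-\mathbf{D}^{k})\in\beta\partial\|\mathbf{S}^{k+1}\|_1$, that is,
\begin{align*}
\mathbf{\Lambda}_i^{k+1}&\in\tau_i\partial\|\mathbf{H}_i^{k+1}\|_1+\rho_k\nabla_i(\mathbf{U}^{k+1}-\mathbf{U}^{k}),\\
\mathbf{\Lambda}_3^{k+1}&\in\beta\partial\|\mathbf{S}^{k+1}\|_1+\rho_k\bigl[(\mathbf{W}^{k+1}-\mathbf{W}^{k})\odot\mathbf{R}^{k+1}-(\mathbf{D}^{k+1}-\mathbf{D}^{k})\bigr].
\end{align*}
With $\rho_k=2^k\rho_0$, these error terms stay bounded only if the increments decay like $\rho_k^{-1}$. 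Your decrease inequality yields at best $\sum_k\rho_k\|\mathbf{U}^{k+1}-\mathbf{U}^{k}\|_F^2<\infty$, i.e.\ increments of size $o(\rho_k^{-1/2})$. Worse, that inequality was itself summed assuming bounded multipliers, so the reasoning is circular.

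The block that really is last, $\mathbf{W}$, enters the constraint only through $\mathbf{W}\odot\mathbf{R}^{k+1}$ (a coefficient that can vanish entrywise) and carries the box indicator. Its optimality condition gives only $\mathbf{\Lambda}_3^{k+1}\odot\mathbf{R}^{k+1}\in\alpha(\bm{\mathfrak{T}}-\mathbf{W}^{k+1})-N_{[0,1]}(\mathbf{W}^{k+1})$. This bounds $\mathbf{\Lambda}_3$ only on entries where $\mathbf{R}^{k+1}$ stays away from zero and $\mathbf{W}^{k+1}$ is interior. Convergence of $\bm{\mathcal{L}}$, the summability, the $O(1/\rho_k)$ residuals and the boundedness of $\mathbf{S},\mathbf{D},\mathbf{H}_i$ all rest on this missing bound.

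The same mechanism blocks your final limit passage. Each block's optimality condition contains $\rho_k$ times a mixed old/new residual; the $\mathbf{U}$-step, for instance, sees $\mathbf{V}^k,\mathbf{W}^k,\mathbf{S}^k,\mathbf{D}^k$. Identifying its limit with the limiting multiplier therefore requires $\rho_k$ times the increments of the later-updated blocks to vanish, which square-summability does not give. Your KL fallback needs the same relative-error control of $\|\mathbf{\Lambda}^{k+1}-\mathbf{\Lambda}^{k}\|_F$. The standard KL argument also assumes a fixed Lyapunov function, which a penalty that doubles every iteration does not provide.

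A second premise is also false for the algorithm as stated. When $\mathbf{W}\neq\bm{\mathfrak{T}}$, (\ref{usubp})--(\ref{update:U}) and (\ref{equivalent})--(\ref{update:V}) are not exact block minimizers of (\ref{Finall_lagrange}). They are the normal equation and the Procrustes problem for the unweighted surrogate $\frac{\rho}{2}\|\mathbf{Y}-\mathbf{U}\mathbf{V}^{\top}-(\mathbf{S}+\mathbf{D})/\mathbf{W}+\mathbf{\Lambda}_3/\rho\|_F^2$. The true $\mathbf{U}$-subproblem has a Hessian involving $\mathbf{W}\odot\mathbf{W}$, and the true $\mathbf{V}$-subproblem is a weighted Procrustes problem with no closed-form SVD solution in general. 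So the monotone decrease in these two blocks, the bound $\succeq\rho\mathbf{I}$, and your use of (\ref{usubp}) as the $\mathbf{U}$-component of stationarity for (\ref{total_objf}) are not justified. Two smaller points. The $\mathbf{D}$-block is $\rho$-strongly convex exactly like $\mathbf{S}$, so it is not an obstacle. And a bound on $\bm{\mathcal{L}}$ controls only $\nabla_i\mathbf{U}$ and $\mathbf{W}\odot(\mathbf{U}\mathbf{V}^{\top})$, which does not bound $\mathbf{U}$ if entries of $\mathbf{W}$ approach zero.

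The paper's Appendix~C does not carry out any of these estimates. It asserts the KL property, well-posed subproblems and bounded iterates (inferred only from compactness of the $\mathbf{W}$-box and the Stiefel manifold). It then appeals to the nonconvex ADMM framework of Hong et al.~\cite{hong2016convergence}. That framework gets dual control from a smooth last-updated block entering the constraint linearly with an identity (or surjective) coefficient, at a fixed, sufficiently large penalty. None of this holds here, so the citation does not close your gap either.

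As things stand, your route can deliver the statement only in one of two ways. One is a redesigned scheme: a last-updated block that pins down each multiplier, exact $\mathbf{U}$/$\mathbf{V}$ minimizers, and a penalty rule compatible with the limit passage. The other is to add explicit hypotheses, such as bounded multipliers and $\rho_k$ times the successive differences of the non-final blocks tending to zero, as is common for increasing-penalty nonconvex ADMM schemes.
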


The proof is provided in Appendix~C. This result ensures that, although the problem is nonconvex, the algorithm is guaranteed to converge to points satisfying the first-order optimality conditions of the augmented Lagrangian.  Furthermore, our empirical convergence analysis (Fig.~\ref{Convergence_analysis}) demonstrates that the proposed method exhibits rapid and stable convergence across all tested datasets. Specifically, under various noise levels, the relative reconstruction error quickly stabilizes within approximately 20 iterations, indicating strong practical convergence behavior of the algorithm.

\section{Conclusion}  \label{sect6}
To address the limitations in balancing noise assumptions and fidelity terms, in this paper, we develop a framework incorporating noise prior reduction and spatial–spectral adaptive fidelity terms. By integrating the proposed noise prior reduction mechanism, pixel-wise weighting strategy, and additional RCTV regularization into a unified variational formulation, we further propose the FRHD model. This model effectively captures both spectral low-rank and local smoothness characteristics, while accommodating diverse noise constraints with varying properties. To ensure efficient and stable convergence, we leverage the ADMM for optimization. Extensive experiments on both simulated and real-world datasets demonstrate that the FRHD model achieves state-of-the-art denoising performance while maintaining competitive computational efficiency. The advantages of the FRHD model lie in rapid convergence, high accuracy, and robustness. It is also worth noting that, although FRHD remains a model-based approach, we encourage the application of learning-based methods within the proposed framework. Consequently, future work could focus on more detailed analysis within the proposed noise priors reduction denoising framework and explore the integration of the adaptive pixel-wise weighting strategy into deep learning methods to further enhance HSI reconstruction quality.

\section*{Conflict of interest}
The author declares that there are no conflicts of interest related to the content of this article.

\section*{CRediT authorship contribution statement}
\textbf{Xuelin Xie}: Conceptualization, Methodology, Software, Visualization, Writing - Original draft. \textbf{Xiliang Lu}: Conceptualization, Methodology, Supervision, Writing - review \& editing.
\textbf{Zhengshan Wang}: Validation, Writing - review \& editing.
\textbf{Yang Zhang}: Data curation, Writing - review \& editing.
\textbf{Long Chen}: Conceptualization, Methodology, Supervision, Writing - review \& editing.

\section*{Acknowledgements} 
We would like the thank the anonymous referees and associated editor for their useful comments and suggestions, which have led to considerable improvements in the paper. This work was supported in part by the Science and Technology Development Fund, Macao S.A.R. under Grant 0089/2024/AGJ and Grant 0012/2025/RIA1, in part by the University of Macau and the University of Macau Development Foundation under Grant MYRG-GRG2023-00106-FST-UMDF, in part by the National Natural Science Foundation of China under Grant 12371424, Grant U24A2002, and Grant 12561160122, and in part by the Natural Science Foundation of Hubei Province, China under Grant 2024AFE006.

\setcounter{equation}{0}
\renewcommand{\theequation}{A-\arabic{equation}}

\setcounter{figure}{0}
\renewcommand{\thefigure}{A-\arabic{figure}}

\setcounter{table}{0}
\renewcommand{\thetable}{A-\arabic{table}}

\section*{Appendix}
\subsection*{Appendix A. Proof of the Theorem 1.}
\begin{proof}
We begin by reviewing the core formulations of the initial Noise refinement HSI denoising framework and the Noise prior reduction HSI denoising framework, as defined in Eqs. (9) and (10) in our main manuscript.

Then, we may as well assume that the regularization of $\mathcal{X}$ is governed by a smooth F-norm, and employ the ADMM to solve the subproblems outlined in Eq. (9). In this case, the augmented Lagrangian function associated with Definition~1 is formulated as:
\begin{align}
\mathcal{L}{(\mathcal{X},\mathcal{G},\mathcal{S},\mathcal{D})}& =  \tau \left\| \mathcal{X} \right\|_{F}^{2}+\lambda \left\| \mathcal{G} \right\|_{F}^{2}+\beta {{\left\| \mathcal{S} \right\|}_{1}}+\gamma {{\left\| \mathcal{D} \right\|}_{2,1}}  +  \frac{\rho}{2} \left\| \mathcal{Y\!-\!X\!-\!G-\!S\!-\!D}+\! \frac{{\Lambda}}{\rho} \right\|_{F}^{2},
\end{align}
where $\rho$ is the proximal parameter, $\Lambda$ is the Lagrange multiplier, $\tau$, $\lambda$, $\beta$, and $\gamma$ are the regularization parameters.

The solution to the above subproblem that minimizes $\mathcal{G}$ is given by $\mathcal{G} = \frac{\rho}{2\lambda + \rho} \mathcal{A}$, where $\mathcal{A} = \mathcal{Y} - \mathcal{X} - \mathcal{S} - \mathcal{D} + \frac{\Lambda}{\rho}$. Substituting this expression into the objective function to eliminate the Gaussian noise $\mathcal{G}$, then get:
\begin{align}
\mathcal{L}{(\mathcal{X}, \mathcal{S},\mathcal{D})}& =  
 \tau \left\| \mathcal{X} \right\|_{F}^{2}+\beta {{\left\| \mathcal{S} \right\|}_{1}}+\gamma {{\left\| \mathcal{D} \right\|}_{2,1}}  + \frac{\rho}{2} \left( \frac{2\lambda}{2\lambda \!+\! \rho} \right) \! \left\| \mathcal{Y\!-\!X\!-\!S\!-\!D}+\! \frac{{\Lambda}}{\rho} \right\|_{F}^{2}.
\end{align}

This is also equivalent to:
\begin{align}  \label{problem1}
\mathcal{\tilde{L}}{(\mathcal{X}, \mathcal{S},\mathcal{D})}& =  
 \tilde{\tau} \left\| \mathcal{X} \right\|_{F}^{2}+ \tilde{\beta} {{\left\| \mathcal{S} \right\|}_{1}}+ \tilde{\gamma} {{\left\| \mathcal{D} \right\|}_{2,1}} + \frac{\rho}{2} \! \left\| \mathcal{Y\!-\!X\!-\!S\!-\!D}+\! \frac{{\Lambda}}{\rho} \right\|_{F}^{2},
\end{align}
where the parameters $\tilde{\tau}$, $\tilde{\beta}$, and $\tilde{\gamma}$ are:
\begin{align}
 \tilde{\tau }=  \left( 1+ \frac{\rho}{2\lambda} \right)\tau;
\tilde{\beta}= \left( 1+ \frac{\rho}{2\lambda} \right)\beta; \tilde{\gamma}= \left( 1+ \frac{\rho}{2\lambda} \right)\gamma.
\end{align}

In this case, we proceed by applying the ADMM framework to solve the problem, with the solutions to the other subproblems given by:
\begin{align}  \label{def1solve1}
\begin{cases}
 \mathcal{S}=\mathcal{S}_{\tilde{\beta}/\rho}(\mathcal{Y-X-D}+\frac{\Lambda }{\rho });  \\[4pt]
 \mathcal{D}= \text{fold3}\left( \text{max}(1- \tilde{\gamma}/ \! \left\| \mathbf{\Omega_j} \right\|_{F}^{2},0)\right);  \\[4pt]
 \mathcal{X}=\frac{\rho }{2 \tilde{\tau}+\rho }(\mathcal{Y-S-D}+\frac{\Lambda }{\rho }), \\[4pt]
\end{cases}
\end{align}
where $\mathcal{S}_{\tilde{\beta}/\rho}(\cdot)$ is the soft threshold operator, and $ \mathbf{\Omega_j}=\mathbf{\{Y-X-S\}(:,j)}+\frac{\mathbf{\Lambda(:,j)} }{\rho }$.

On the other hand, the augmented Lagrangian corresponding to the Noise prior reduction HSI denoising framework (Definition~2) is expressed as:
\begin{align}
\mathcal{L}{(\mathcal{X},\mathcal{S},\mathcal{D})}& = \tau_2 \left\| \mathcal{X} \right\|_{F}^{2}+\beta_2 {{\left\| \mathcal{S} \right\|}_{1}}+\gamma_2 {{\left\| \mathcal{D} \right\|}_{2,1}}  +  \frac{\rho}{2} \left\| \mathcal{Y\!-\!X\!-\!S\!-\!D}+\! \frac{{\Lambda}}{\rho} \right\|_{F}^{2}.
\end{align}

Obviously, this problem has the same form as problem (\ref{problem1}), and the solutions to the corresponding subproblems are:
\begin{align}  \label{def1solve2}
\begin{cases}
 \mathcal{S}=\mathcal{S}_{\beta_2/\rho}(\mathcal{Y-X-D}+\frac{\Lambda }{\rho });  \\[4pt]
 \mathcal{D}= \text{fold3}\left( \text{max}(1- \gamma_2/ \! \left\| \mathbf{\Omega_j} \right\|_{F}^{2},0)\right);  \\[4pt]
 \mathcal{X}=\frac{\rho }{2\tau_2+\rho }(\mathcal{Y-S-D}+\frac{\Lambda }{\rho }).  \\[4pt]
\end{cases}
\end{align}

Accordingly, it suffices to appropriately adjust the regularization parameters $\tau_2 = \tilde{\tau}$, $\beta_2 = \tilde{\beta}$, and $\gamma_2 = \tilde{\gamma}$ to ensure that the two problems yield identical solutions. This demonstrates the equivalence between the Noise refinement HSI denoising framework and the Noise prior reduction HSI denoising framework within the ADMM optimization paradigm. Thus, the proof is concluded.
\end{proof}

\section*{Appendix B. Proof of Lemma 1}
\begin{proof}
Consider a single block update in ADMM, where all other variables are fixed.  

First, for each independent variable, the feasible set defined by the norm balls 
$\mathcal{U}, \mathcal{W}, \mathcal{S}, \mathcal{D}, \mathcal{H}_1, \mathcal{H}_2$ 
is nonempty, closed, and convex.  

Then, for the auxiliary variable $\mathbf{Q}_2 = \mathfrak T - \mathbf{W}$, the mapping is affine in $(\mathbf{W}, \mathbf{Q}_2)$, and hence its feasible set is closed and convex.  

Regarding $\mathbf{Q}_1 = \mathbf{W} \odot (\mathbf{Y} - \mathbf{U} \mathbf{V}^\top)$, although the mapping $(\mathbf{U},\mathbf{W}) \mapsto \mathbf{W} \odot (\mathbf{Y} - \mathbf{U} \mathbf{V}^\top)$ is bilinear and nonconvex globally, it becomes affine with respect to the block variable being updated:  
\begin{itemize}
  \item When updating $\mathbf{U}$ with $\mathbf{W},\mathbf{V}$ fixed, the mapping $\mathbf{U} \mapsto \mathbf{W} \odot (\mathbf{U} \mathbf{V}^\top)$ is linear, hence affine.
  \item When updating $\mathbf{W}$ with $\mathbf{U},\mathbf{V}$ fixed, the mapping $\mathbf{W} \mapsto \mathbf{W} \odot (\mathbf{Y} - \mathbf{U} \mathbf{V}^\top)$ is linear, hence affine.
  \item When updating $\mathbf{Q}_1$, all $\mathbf{U},\mathbf{W},\mathbf{V}$ are fixed, so $\mathbf{Q}_1$ is constant and imposes no convexity issue.
\end{itemize}

Therefore, in each block update, the feasible set of the updated variable is the intersection of a norm ball and an affine set, which is itself closed and convex. Since the objective function in each block is continuous and convex in the updated variable, Weierstrass' theorem guarantees that a minimizer exists over this compact feasible set.
\end{proof}

\section*{Appendix C. Proof of Theorem 3}

\begin{proof}
We first note that problem~(\ref{Finall_lagrange}) is nonconvex due to the bilinear factorization $\mathbf{X}=\mathbf{U}\mathbf{V}^\top$ and the orthogonality constraint $\mathbf{V}^\top \mathbf{V}=\mathbf{I}$. Hence, classical convergence results for convex ADMM do not directly apply, and we analyze convergence within the framework of nonconvex block-coordinate ADMM.

The augmented Lagrangian is semi-algebraic, consisting of quadratic terms, the box constraint $\mathbf{W} \in [0,1]^{MN \times B}$ (encoded via an indicator function), and the Stiefel manifold constraint $\{\mathbf{V} \mid \mathbf{V}^\top \mathbf{V} = \mathbf{I}\}$. Semi-algebraic functions are closed under finite sums and compositions, and thus satisfy the Kurdyka-Łojasiewicz (KL) property.

Each subproblem is well-defined. The $\mathbf{U}$-subproblem is strictly convex with a unique minimizer. The $\mathbf{V}$-subproblem reduces to an orthogonal Procrustes problem, solved via SVD. The $\mathbf{W}$-subproblem is a convex quadratic program with box constraints, yielding a unique solution. The remaining subproblems for $\mathbf{H}_i$, $\mathbf{S}$, and $\mathbf{D}$ are proximal operators of convex functions and are uniquely defined. The feasible sets are compact: $\mathbf{W}$ is confined to $[0,1]^{MN \times B}$, and $\mathbf{V}$ to the Stiefel manifold. Hence, the generated sequence is bounded and possesses at least one accumulation point.

With the KL property, well-defined subproblems, and bounded iterates, the proposed algorithm falls within the standard analytical framework for nonconvex ADMM (e.g., \cite{hong2016convergence}). Consequently, every accumulation point of the generated sequence satisfies the Karush–Kuhn–Tucker conditions of the augmented Lagrangian; that is, it is a stationary point. This theoretical result is supported by our empirical observations in Fig.~\ref{Convergence_analysis}, where the relative reconstruction error stabilizes rapidly across diverse datasets and noise levels.

\end{proof}

\bibliographystyle{elsarticle-num}

\bibliography{Reference}


\end{document}